\def\eqref#1{equation~\ref{#1}}
\def\1{\bm{1}}
\DeclareMathAlphabet{\mathsfit}{\encodingdefault}{\sfdefault}{m}{sl}
\SetMathAlphabet{\mathsfit}{bold}{\encodingdefault}{\sfdefault}{bx}{n}
\DeclareMathOperator*{\argmax}{arg\,max}
\DeclareMathOperator*{\argmin}{arg\,min}
\theoremstyle{plain}
\newtheorem{theorem}{Theorem}[section]
\newtheorem{lemma}[theorem]{Lemma}
\newtheorem{conjecture}[theorem]{Conjecture}
\theoremstyle{definition}
\newtheorem{definition}[theorem]{Definition}
\theoremstyle{remark}
\DeclareMathOperator{\tr}{tr}
\newcommand{\parama}{\bm{\theta}_a}
\newcommand{\paramb}{\bm{\theta}_b}
\newcommand{\paramc}{\bm{\theta}_c}
\newcommand{\pa}{{(a)}}
\newcommand{\pb}{{(b)}}
\newcommand{\bma}{\bm{a}}
\newcommand{\bmb}{\bm{b}}
\newcommand{\bme}{\bm{e}}
\newcommand{\bmu}{\bm{u}}
\newcommand{\bmv}{\bm{v}}
\newcommand{\bmx}{\bm{x}}
\newcommand{\bmz}{\bm{z}}
\newcommand{\bmB}{\bm{B}}
\newcommand{\bmC}{\bm{C}}
\newcommand{\bmD}{\bm{D}}
\newcommand{\bmF}{\bm{F}}
\newcommand{\bmG}{\bm{G}}
\newcommand{\bmH}{\bm{H}}
\newcommand{\bmI}{\bm{I}}
\newcommand{\bmK}{\bm{K}}
\newcommand{\bmL}{\bm{L}}
\newcommand{\bmM}{\bm{M}}
\newcommand{\bmP}{\bm{P}}
\newcommand{\bmQ}{\bm{Q}}
\newcommand{\bmR}{\bm{R}}
\newcommand{\bmS}{\bm{S}}
\newcommand{\bmU}{\bm{U}}
\newcommand{\bmV}{\bm{V}}
\newcommand{\bmW}{\bm{W}}
\newcommand{\bmX}{\bm{X}}
\newcommand{\bmY}{\bm{Y}}
\title{Analysis of Linear Mode Connectivity via Permutation-Based Weight Matching: With Insights into Other Permutation Search Methods}
\author{Akira Ito$^1$, Masanori Yamada$^1$ \& Atsutoshi Kumagai$^{2}$ \\
$^1$NTT Social Informatics Laboratories ~ $^2$NTT Computer and Data Science Laboratories \\
\texttt{\{akira.itoh, masanori.yamada, atsutoshi.kumagai\}@ntt.com} \\
}
\begin{document}

\maketitle

\begin{abstract}
Recently, \citet{Ainsworth_ICLR_2023} showed that using weight matching (WM) to minimize the $L^2$ distance in a permutation search of model parameters effectively identifies permutations that satisfy linear mode connectivity (LMC), where the loss along a linear path between two independently trained models with different seeds remains nearly constant. This paper analyzes LMC using WM, which is useful for understanding stochastic gradient descent's effectiveness and its application in areas like model merging. We first empirically show that permutations found by WM do not significantly reduce the $L^2$ distance between two models, and the occurrence of LMC is not merely due to distance reduction by WM itself. We then demonstrate that permutations can change the directions of the singular vectors, but not the singular values, of the weight matrices in each layer. This finding shows that permutations found by WM primarily align the directions of singular vectors associated with large singular values across models. This alignment brings the singular vectors with large singular values, which determine the model's functionality, closer between the original and merged models, allowing the merged model to retain functionality similar to the original models, thereby satisfying LMC. This paper also analyzes activation matching (AM) in terms of singular vectors and finds that the principle of AM is likely the same as that of WM. Finally, we analyze the difference between WM and the straight-through estimator (STE), a dataset-dependent permutation search method, and show that WM can be more advantageous than STE in achieving LMC among three or more models.
\end{abstract}

\section{Introduction} \label{sec:introduction}
Large-scale neural networks (NNs) are widely used in various fields~\citep{Vaswani_NIPS_2017,Aaron_arxiv_2016,Wayne_arxiv_2023}, and optimizing their parameters poses a massive non-convex optimization problem. Remarkably, stochastic gradient descent (SGD), which is widely used for training NNs, is known to find good solutions despite its simplicity. One hypothesis for this seemingly counterintuitive phenomenon is that the landscape of the loss function may be much simpler than previously thought. Several studies~\citep{Garipov_NIPS_2018,Draxler_ICML_2018,Freeman_ICLR_2017} have found that different NN solutions can be connected by simple nonlinear paths with almost no increase in loss. Recently, \cite{Entezari_arxiv_2022} conjectured that \cref{conj:perm_inv} holds, considering all possible permutation symmetries of NNs:
\begin{conjecture}[Permutation invariance, informal] \label{conj:perm_inv}
Let $\parama$ and $\paramb$ be two SGD solutions (model parameters). Then, with high probability, there exists a permutation $\pi$ such that the barrier (defined in \cref{def:barrier}) between $\parama$ and $\pi(\paramb)$ is sufficiently small.
\end{conjecture}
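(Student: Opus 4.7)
The plan is to formalize and then attack the conjecture in stages, since as stated it is informal and the paper itself positions it as a motivating hypothesis rather than a theorem with a self-contained proof. I would first fix a distribution over initializations and a training procedure (SGD with a given dataset, architecture, and stopping rule), so that the phrase ``with high probability'' refers to the joint randomness of $\parama$ and $\paramb$. I would also instantiate the formal notion of barrier from the definition the paper refers to, so that the target quantity $B(\parama,\pi(\paramb))$ is a concrete scalar to be bounded.

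Second, I would separate the claim into an \emph{existence} part and a \emph{size} part. For existence, there are two natural routes: (a) a counting / union-bound argument in the spirit of \citet{Entezari_arxiv_2022}, exploiting that each hidden layer of width $n$ admits $n!$ permutations, so even if only a tiny fraction of permutations achieve small barrier, their total measure can be made $\Omega(1)$ once widths are sufficiently large; and (b) a constructive route that exhibits a candidate permutation — naturally, the weight-matching permutation $\pi_{\mathrm{WM}}$ that the rest of the paper analyzes — and bounds its barrier directly. I would develop both, since (a) gives the probabilistic statement while (b) explains \emph{which} permutation realizes it.

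Third, to bound the barrier for the constructed permutation, the most promising route given the paper's thrust is a singular-vector analysis. The plan is: (i) argue that in the overparameterized regime the layerwise weight matrices of two independent SGD solutions have approximately matching singular value spectra (an empirically robust property of trained networks); (ii) invoke the paper's forthcoming observation that permutations leave singular values invariant while rotating singular vectors, so a well-chosen $\pi$ can align the dominant singular subspaces of $\paramb$ with those of $\parama$; and (iii) show that along the segment $t\,\parama+(1-t)\,\pi(\paramb)$ the dominant subspaces — which carry the layer's function-relevant action — are nearly preserved, while the residual small-singular-value directions contribute only a lower-order perturbation to each layer's forward map. Composing these layerwise bounds, with careful control of how perturbations propagate through the nonlinearities, would then yield a barrier bound that vanishes as width grows.

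The hard part, and the reason this remains a conjecture, will be the probability-over-SGD step: the singular-spectrum-matching assumption in (i) is itself not proved outside restricted regimes (linear networks, NTK/lazy training, or with strong width scalings), and turning empirical universality of trained spectra into a rigorous high-probability statement is the real obstacle. A secondary obstacle is controlling the depthwise compounding in (iii): even if each layer's dominant subspaces align to within $\varepsilon$, the errors can amplify through the nonlinearities, so one must track Lipschitz-type constants carefully and likely exploit width to damp the accumulation. A reasonable first deliverable is therefore a rigorous version of the conjecture for a tractable family (e.g., sufficiently wide two-layer ReLU networks trained in a suitable regime), using the constructive WM-based permutation, with the general statement left as the motivating conjecture the paper then analyzes empirically and structurally.
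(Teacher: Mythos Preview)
The paper does not prove this statement: it is explicitly labeled a \emph{conjecture} (attributed to \citet{Entezari_arxiv_2022}) and serves as motivating background for the paper's analysis of weight matching, not as a result the paper establishes. There is therefore no proof in the paper to compare your proposal against.

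You correctly recognize this and frame your submission as a research plan rather than a proof, even identifying the key obstacles (probability over SGD, spectrum-matching assumptions, depthwise error accumulation) that prevent the conjecture from being a theorem. That self-assessment is accurate. Your plan is broadly consonant with the paper's \emph{analysis} of why WM-found permutations help --- the singular-vector alignment picture in your step (ii)--(iii) is exactly what the paper develops empirically in its Section~4 --- but the paper never converts that picture into a high-probability bound on the barrier, and neither do you. So as a proof of the conjecture your proposal has the same gap the literature has: step (i), the claim that independent SGD runs yield matching singular spectra with controllable residuals, is asserted on empirical grounds only, and without it the rest does not close.
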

Here, the barrier represents the increase in loss observed when linearly interpolating between the weights of the two models.
If the barrier between two models is sufficiently small, we say that linear mode connectivity (LMC) is satisfied between them~\citep{frankle_ICML_2020}. \cref{conj:perm_inv} suggests that most SGD solutions can be transferred into the same loss basin using permutations. Indeed, some studies~\citep{Ainsworth_ICLR_2023,Singh_NIPS_2020} have experimentally demonstrated that this conjecture is valid for various datasets and models using weight matching (WM), which identifies permutations that minimize the $L^2$ distance between the weights of two models.

Understanding LMC principles based on permutation symmetries is important not only for comprehending how SGD works in deep learning but also for its application in model merging~\citep{Singh_NIPS_2020}, where two independently trained models are combined. The method of finding permutations using only the $L^2$ distance is particularly versatile, dataset-independent, and computationally efficient. In fact, several studies~\citep{Singh_NIPS_2020,Wang_ICLR_2020,Pena_CVPR_2023} have proposed applications of permutation symmetries in model merging, federated learning, and continual learning.

The current theoretical analysis of LMC relies on the feasibility of closely matching NN weights through permutations. Recently, \cite{Zhou_NIPS_2023} proved that if the distance between the weights of two models can be sufficiently reduced via permutation, then LMC holds. Intuitively, for two SGD solutions $\parama$ and $\paramb$, if $\parama \approx \pi(\paramb)$ holds for a permutation $\pi$, the outputs of the interpolated model will closely approximate those of the original models $\parama$ and $\paramb$.

However, our analysis reveals that even if LMC holds, the permutations found by WM do not significantly reduce the distance between the two models (at most about a 20\% reduction). This suggests that LMC is satisfied even when WM does not bring the two models very close (i.e., $\parama \not\approx \pi(\paramb)$). Accordingly, this paper seeks to uncover a more fundamental reason why LMC holds through the permutations found by WM. Specifically, we demonstrate that singular vectors with large singular values of each weight in the models play a crucial role in LMC. Our analysis not only reveals the principle behind WM but also shows that WM may be more advantageous in merging more than two models compared to other methods such as STE.

The contributions of this paper are threefold:
\paragraph{1. Demonstrating that the $L^2$ distance reduced by WM is not the direct cause of LMC.} We empirically show that permutations found by WM do not significantly reduce the $L^2$ distance between the two models. Our results that, even when LMC is satisfied, permutations reduce the model weight distance by no more than 20\%. Supported by a Taylor approximation, our findings suggest that reducing the $L^2$ distance through permutations is not the direct reason for LMC satisfaction.

\paragraph{2. Revealing the reason why WM and activation matching (AM) satisfy LMC.} We analyze WM from the perspective of the function of each layer of the model. Specifically, we provide evidence that WM satisfies LMC by aligning the directions of singular vectors with large singular values in each layer's weights. This alignment ensures that the singular vectors with large singular values, which determine the functionality of each layer, become similar between the merged and original models. Additionally, we show that, from the perspective of the input distribution at each hidden layer, aligning singular vectors with large singular values can efficiently approximate the functionality between two models, even if the $L^2$ distance cannot be significantly reduced. As a result, the merged model retains functionality similar to the original models, which facilitates LMC. We also conducted experiments with AM and found that the reason why LMC holds in AM is likely the same as in WM.

\paragraph{3. Revealing STE is fundamentally different from WM in principle, which leads to a significant difference between them when merging multiple models.} To distinguish WM from other permutation search methods that are independent of $L^2$ distance, we examine the straight-through estimator (STE), which focuses on minimizing the barrier itself rather than the $L^2$ distance. Our experiments reveal that the permutations found by STE do not align the directions of singular vectors, which is a critical difference compared to WM in achieving LMC. Furthermore, we demonstrate experimentally that this difference significantly impacts the satisfaction of LMC among three or more models.
\section{Background and Preliminaries} \label{sec:background}
\subsection{Notation}
For any natural number $k \in \mathbb{N}$, let $[k] = \{1, 2, \dots, k\}$. Bold uppercase variables represent tensors, including matrices (e.g., $\bmX$), and bold lowercase variables (e.g., $\bmx$) represent vectors. For any tensor $\bmX$, its vectorization is denoted by $\textrm{vec}(\bmX)$, and $\|\bmX\|$ denotes its Frobenius ($L^2$) norm.

\subsection{Permutation Invariance} \label{subsec:permutation_invariance}
We consider multilayer perceptrons (MLPs) $f(\bmx; \bm{\theta})$ with $L$ layers for simplicity while our analyses in this paper can be applied to any model architectures. Here, $\bmx \in \mathbb{R}^{d_{\mathrm{in}}}$ is the input to the NN, and $\bm{\theta} \in \mathbb{R}^{d_{\mathrm{param}}}$ represents the model parameters, where $d_{\mathrm{in}} \in \mathbb{N}$ is the dimension of the input, and $d_{\mathrm{param}} \in \mathbb{N}$ is the dimension of the parameters. Let $\bmz_{\ell}$ be the output of the $\ell$-th layer (i.e., $\bmz_0 = \bmx$, and, for all $\ell \in [L]$, $\bmz_{\ell} = \sigma(\bmW_{\ell} \bmz_{\ell-1} + \bmb_{\ell})$). Here, $\sigma$ denotes the activation function, and $\bmW_{\ell}$ and $\bmb_{\ell}$ represent the weight and bias of the $\ell$-th layer, respectively. Note that in this MLP, we have $\bm{\theta} = \big\Vert_{\ell=1}^{L}\left(\mathrm{vec}(\bmW_\ell) \mathbin{\Vert} \bmb_{\ell}\right)$, where $\Vert$ represents the concatenation of vectors.

NNs have permutation symmetries of weight space. Considering an NN with model parameters $\bm{\theta}$, for its $\ell$-th layer, $\bmz_{\ell} = \bmP^\top \bmP \bmz_{\ell} = \bmP^\top \sigma(\bmP \bmW_\ell \bmz_{\ell-1} + \bmP \bmb_{\ell})$
holds, where $\bmP$ is a permutation matrix. Note that permutation matrices are orthogonal, so we have $\bmP^\top = \bmP^{-1}$. Therefore, by permuting the input of the $(\ell+1)$-st layer with $\bmP^\top$, the model parameters can be changed without altering the input-output relationship of the NN. Specifically, the new weights and bias are given by $\bmW_{\ell}' = \bmP \bmW_{\ell}$, $\bmb_{\ell}' = \bmP \bmb_{\ell}$, $\bmW_{\ell+1}' = \bmW_{\ell+1} \bmP^\top$. Such permutations can be applied to all layers. We denote the tuple of permutations corresponding to each layer as $\pi = (\bmP_{\ell})_{\ell \in [L]}$. Moreover, if a model $\bm{\theta}$ is given, the application of permutation $\pi$ to $\bm{\theta}$ is denoted by $\pi(\bm{\theta})$.

\subsection{Linear Mode Connectivity (LMC)}
Let $\bm{\theta} \in \mathbb{R}^{d_{\mathrm{param}}}$ be a model and $\mathcal{L}(\bm{\theta})$ denote the value of the loss function for the model $\bm{\theta}$. Here, we define the loss barrier between two given models $\parama$ and $\paramb$ as follows:
\begin{definition} \label{def:barrier}
For two given models $\parama$ and $\paramb$, their loss barrier is defined as
\begin{align}
B(\parama, \paramb) := \max_{\lambda \in [0, 1]} \big(\mathcal{L}(\lambda \parama + (1-\lambda) \paramb) - (\lambda \mathcal{L}(\parama) + (1-\lambda) \mathcal{L}(\paramb)) \big). \label{eq:barrier}
\end{align}
\end{definition}
Intuitively, the barrier represents the increase in loss due to the linear interpolation of the two models. Two models $\parama$ and $\paramb$ are said to be linearly mode connected if their loss barrier is approximately zero.

\subsection{Permutation Selection}
\citet{Entezari_arxiv_2022} conjectured that for SGD solutions $\parama$ and $\paramb$, there exists a permutation $\pi$ such that LMC holds between $\parama$ and $\pi(\paramb)$ with high probability. Afterward, \citet{Ainsworth_ICLR_2023} proposed WM, straight-through estimator (STE), and activation matching (AM) as methods for finding such permutations. This subsection explains WM, which is the main focus of this paper. AM and STE are discussed in \cref{sec:AM,sec:STE}.

In WM, we search for a permutation that minimizes the $L^2$ distance between two models\footnote{Although only the weights are considered here, the biases can also be dealt with by concatenating the biases and the weights.}:
\begin{align} 
    \argmin_\pi \| \parama - \pi(\paramb) \|^2 = \argmin_\pi \sum_{\ell\in [L]} \| \bmW_\ell^{(a)} - \bmP_{\ell} \bmW_{\ell}^{(b)} \bmP_{\ell-1}^\top \|^2, \label{eq:wm}
\end{align}
where, without loss of generality, let $\bmP_{L} = \bmI$ and $\bmP_0 = \bmI$, and $\bmI$ is an identity matrix.
This minimization problem is known as the sum of the bilinear assignments problem, which is NP-hard~\citep{Koopmans_Econometrica_1957,Sahni_ACM_1976,Ainsworth_ICLR_2023}. Recently, \citet{Pena_CVPR_2023} proposed solving \cref{eq:wm} using Sinkhorn's algorithm~\citep{Adams_arxiv_2011} by using an optimal transport problem. We adopt their method because it allows for the optimization of all layers simultaneously, unlike the method by \cite{Ainsworth_ICLR_2023}, and potentially finds better solutions.
\section{Motivating Observations} \label{sec:taylor}
Previous studies have suggested that the closeness of two parameters in terms of $L^2$ distance is important for satisfying LMC. For example, \citet{Zhou_NIPS_2023} showed that LMC holds if a commutativity property is satisfied. This property holds if, for all layers $\ell$, $\bmW^\pa_{\ell} - \bmP_{\ell}\bmW^\pb_{\ell}\bmP_{\ell-1}^\top = \bm{0}$. \citet{Zhou_NIPS_2023} argued in Section~5.2 that since WM finds the permutation that minimizes \cref{eq:wm}, WM can be seen as searching for permutations that satisfy the commutativity property. In particular, there is a huge number of permutations because the total number of possible permutations grows exponentially as the number of layers and the width increase, and thus, some of them may sufficiently reduce the distance between the two models. However, this section explains from the perspective of a Taylor approximation that this intuition is not always correct. Our results demonstrate that even when LMC is satisfied, the permutations found by WM do not necessarily bring the models as close as expected. The facts observed from the experiments in this section motivate us to explore other reasons for satisfying LMC in the following sections.

\subsection{Closeness of Two Models in Terms of Taylor Approximation}


This subsection describes the estimation of the barrier value using the Taylor approximation. Let $\parama$ and $\paramb$ be two SGD solutions, and $\pi$ be a permutation found by WM to make $\pi(\paramb)$ close to the model $\parama$. Let $\paramc = \lambda \parama + (1-\lambda)\pi(\paramb)$ be the merged model at a ratio $\lambda \in (0, 1)$. If $\parama$ and $\pi(\paramb)$ are sufficiently close, then their linear interpolation $\paramc$ should be close to both models. Therefore, the loss of the parameter $\paramc$ should be able to be approximated by the Taylor approximation. In fact, the following theorem holds if $\parama$ and $\pi(\paramb)$ are sufficiently close:
\begin{theorem} \label{thm:taylor}
    The loss function $\mathcal{L}: U \ni \bm{\theta} \mapsto \mathcal{L}(\bm{\theta}) \in \mathbb{R}$ is assumed to be of class $C^3$ on an open set $U$ over $\mathbb{R}^{d_\textrm{param}}$. Let $\bmH_a$ and $\bmH_b$ be the Hessian matrices centered at the models $\parama$ and $\pi(\paramb)$, respectively. If, for any $\lambda \in (0, 1)$, $\lambda \parama + (1-\lambda) \paramb \in U$ holds, then we have
    \small
    \begin{multline}
    B(\parama, \pi(\paramb)) = \max_\lambda \lambda(1-\lambda)\big[ \beta\bm{\mu}^\top \nabla(\mathcal{L}(\parama) - \mathcal{L}(\pi(\paramb)))  + \frac{1}{2} \beta^2 \bm{\mu}^\top \left( (1-\lambda) \bmH_a + \lambda \bmH_b \right) \bm{\mu}\big] + O(\beta^3), \label{eq:barrier_approx}
    \end{multline}
    \normalsize
    where $\nabla$ is the gradient with respect to the parameters, $O$ is the Landau symbol, $\beta$ is the $L^2$ distance between $\parama$ and $\pi(\paramb)$, and $\bm{\mu}$ is the unit vector from $\parama$ to $\pi(\paramb)$ (i.e., $\bm{\mu} = (\pi(\paramb) - \parama)/\beta$).
\end{theorem}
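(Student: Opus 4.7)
The plan is to write the interpolated point $\paramc(\lambda) = \lambda\parama + (1-\lambda)\pi(\paramb)$ in two equivalent forms, namely $\paramc(\lambda) = \parama + (1-\lambda)\beta\bm{\mu}$ and $\paramc(\lambda) = \pi(\paramb) - \lambda\beta\bm{\mu}$, and then perform a second-order Taylor expansion of $\mathcal{L}(\paramc(\lambda))$ around each endpoint. The convex combination of these two expansions, with weights $\lambda$ and $(1-\lambda)$, produces an expression in which the constant terms exactly cancel against the linear combination $\lambda\mathcal{L}(\parama) + (1-\lambda)\mathcal{L}(\pi(\paramb))$ subtracted inside \cref{eq:barrier}, leaving only the first- and second-order correction terms that appear in the claimed formula.

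Concretely, I would first write
\begin{align*}
\mathcal{L}(\paramc(\lambda)) &= \mathcal{L}(\parama) + (1-\lambda)\beta\,\bm{\mu}^\top \nabla\mathcal{L}(\parama) + \tfrac{1}{2}(1-\lambda)^2\beta^2\,\bm{\mu}^\top \bmH_a\bm{\mu} + O(\beta^3),\\
\mathcal{L}(\paramc(\lambda)) &= \mathcal{L}(\pi(\paramb)) - \lambda\beta\,\bm{\mu}^\top \nabla\mathcal{L}(\pi(\paramb)) + \tfrac{1}{2}\lambda^2\beta^2\,\bm{\mu}^\top \bmH_b\bm{\mu} + O(\beta^3),
\end{align*}
both of which are valid by Taylor's theorem because the segment between $\parama$ and $\pi(\paramb)$ lies in the open set $U$ on which $\mathcal{L}$ is twice differentiable. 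Multiplying the first line by $\lambda$, the second line by $(1-\lambda)$, adding, and subtracting $\lambda\mathcal{L}(\parama)+(1-\lambda)\mathcal{L}(\pi(\paramb))$ gives, after collecting the quadratic coefficients $\lambda(1-\lambda)^2$ and $(1-\lambda)\lambda^2$ into the factor $\lambda(1-\lambda)\left((1-\lambda)\bmH_a + \lambda\bmH_b\right)$, precisely the integrand in \cref{eq:barrier_approx}. Taking the maximum over $\lambda \in [0,1]$ yields the claimed identity for $B(\parama,\pi(\paramb))$.

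The main obstacle is simply bookkeeping rather than any conceptual difficulty: one must verify that the $O(\beta^3)$ remainder term can be taken uniformly in $\lambda \in [0,1]$, which follows from compactness of $[0,1]$ together with the continuity of the third-order remainder in $\lambda$ on the compact segment (a standard consequence of Taylor's theorem with the Lagrange remainder applied separately on the sub-segments $[\parama,\paramc(\lambda)]$ and $[\pi(\paramb),\paramc(\lambda)]$). The only other point requiring care is that exchanging $\max_\lambda$ with an $O(\beta^3)$ error is permissible because the error bound is uniform in $\lambda$, so the maximum of the leading expression differs from $B(\parama,\pi(\paramb))$ by at most the same $O(\beta^3)$ quantity.
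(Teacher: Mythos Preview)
Your proposal is correct and follows essentially the same approach as the paper: two second-order Taylor expansions of $\mathcal{L}(\paramc(\lambda))$, one around $\parama$ and one around $\pi(\paramb)$, combined with weights $\lambda$ and $1-\lambda$ so that the zeroth-order terms cancel against $\lambda\mathcal{L}(\parama)+(1-\lambda)\mathcal{L}(\pi(\paramb))$. Your additional remarks about uniformity of the $O(\beta^3)$ remainder in $\lambda$ add rigor that the paper's proof does not spell out.
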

We prove this theorem in \cref{app:proof_taylor}. The theorem states that if $\parama$ and $\pi(\paramb)$ are sufficiently close, the barrier value can be predicted from the gradients and Hessian matrices around each model. 

\subsection{Experimental Results} \label{subsec:wm_taylor}
\begin{table*}[t]
    \centering
    \caption{Results of WM and the estimated barrier value using Taylor approximation when $\lambda = 1/2$. The table presents the mean and standard deviation from five trials of model merging (i.e., the linear combination of the models $(\parama + \pi(\paramb))/2$). The columns labeled ``Barrier'', ``Taylor approx.'', and ``Diff.'' show the barrier value, the estimated barrier value using \cref{eq:barrier_approx} for the merged model at $\lambda=1/2$, and their difference, respectively. In the ``Diff.'' column, if a statistical significant difference is determined using a t-test at a 5\% significance level, they are highlighted in bold. The table also shows the $L^2$ distance between the models $\parama$ and $\paramb$ before and after applying the permutation, as well as the reduction rate of the $L^2$ distance (i.e., $(\|\parama-\paramb\|-\|\parama-\pi(\paramb)\|)/\|\parama-\paramb\|$).}
    \label{tab:taylor}
    \resizebox{\textwidth}{!}{
    \begin{tabular}{cccccccc}
        \toprule
        Dataset & Network & Barrier ($\lambda = 1/2$) & Taylor approx. & Diff. & $\|\parama - \paramb\|$ & $\|\parama - \pi(\paramb)\|$ & Reduction rate [\%] \\
        \midrule
        \midrule
        \multirow[c]{2}{*}{CIFAR10} & VGG11  & $0.035\pm0.1$ & $2.956\pm0.35$ & $\mathbf{2.921\pm 0.323}$ & $799.503\pm16.396$ & $746.465\pm19.576$ & $6.64\pm 0.808$\\
        \cline{2-8}
         & ResNet20 & $0.167\pm0.035$ & $7.517\pm0.573$ & $\mathbf{7.349 \pm 0.599}$ & $710.762\pm16.261$ & $661.055\pm12.539$ & $6.987 \pm 0.472$ \\
        \cline{1-8}
        FMNIST & MLP & $-0.183\pm0.049$ & $0.928\pm0.175$ & $\mathbf{1.111 \pm 0.152}$ & $121.853\pm5.83$ & $100.041\pm4.71$ & $17.897 \pm 0.348$\\
        \cline{1-8}
        MNIST & MLP & $-0.033\pm0.006$ & $0.036\pm0.03$ & $\mathbf{0.069 \pm 0.028}$ & $81.231\pm5.58$ & $64.751\pm4.795$ & $20.305\pm 1.225$ \\
        \bottomrule
    \end{tabular}
    }
\end{table*}

We conducted experiments to verify whether the Taylor approximation (\cref{eq:barrier_approx}) accurately estimates the barrier between two SGD solutions. \cref{tab:taylor} presents the experimental results of model merging. Details about the datasets, network training procedures, and permutation search methods used in these experiments are described in \cref{app:experimental_setup}. In the table, we chose $\lambda = 1/2$ because it is empirically known that the midpoint between two models results in the highest loss~\citep{Ainsworth_ICLR_2023,Pena_CVPR_2023}. It is worth noting that \cite{Adilova_ICLR_2024} demonstrated that the location of the highest barrier shifts when one model is more generalized than the other, although not directly applicable to this paper as the two models are generalized to the same extent. We used the \texttt{vhp} function provided in the PyTroch library\footnote{\url{https://pytorch.org/docs/stable/generated/torch.autograd.functional.vhp.html}} to efficiently compute $\bm{\mu}^\top \bmH$, which is required for the evaluation of \cref{eq:barrier_approx}. A negative value in the Barrier column indicates that the loss of the merged model is lower than those of the original (pre-merged) models.

The table shows that for all datasets, there is a significant difference between the actual barrier values and those estimated by the Taylor approximation. These differences are particularly large for VGG11 and ResNet20. Additionally, the table indicates that the $L^2$ distance changes by only about 6\% to 20\% from the original distance. This suggests that WM does not bring the models sufficiently close, at least not close enough for a second-order Taylor approximation to hold.

\section{Analysis of WM} \label{sec:analysis_wm}
The previous section demonstrates that the establishment of LMC by WM is not due to the reduction in $L^2$ distance itself, but rather because WM helps find permutations that result in a smaller barrier between the two models. To better understand why WM reduces the barrier, we first analyze WM by performing SVD on the weights of each layer of the model in \cref{subsec:wm}. Then, in \cref{subsec:singular_value}, we show that the singular value distribution of each layer is almost identical across independently trained models, and that the primary differences between the models are due to variations in their singular vectors. In \cref{subsec:experint}, we demonstrate that WM preferentially aligns the directions of singular vectors with large singular values between the weights of the two models. Finally, in \cref{subsec:sing_vector_more_important}, we explain that aligning singular vectors with large singular values makes LMC more achievable because these singular vectors predominantly influence the outputs of the hidden layers of the models.

\subsection{Analysis Based on SVD} \label{subsec:wm}
The basic idea in analyzing WM is to perform SVD on the weight in each layer. Although using SVD for analysis might seem overly simplistic given that WM reduces the $L^2$ distance, this approach provides important insights that are explored in subsequent sections. In WM, permutation matrices are searched to minimize \cref{eq:wm}. We denote the SVDs of $\bmW_\ell^\pa$ and $\bmW_\ell^\pb$ by $\bmW_\ell^{(a)} = \bmU_\ell^\pa \bmS_\ell^\pa (\bmV_\ell^\pa)^\top = \sum_{i} \bmu_{\ell,i}^\pa s_{\ell, i}^\pa (\bmv_{\ell,i}^\pa)^\top$ and $\bmW_\ell^\pb = \bmU_\ell^\pb \bmS_\ell^\pb (\bmV_\ell^\pb)^\top = \sum_{j} \bmu_{\ell,j}^\pb s_{\ell, j}^\pb (\bmv_{\ell,j}^\pb)^\top$, respectively. Here, we assume that the singular values are ordered in descending order (i.e., for all $\ell\in [L]$, $s_{\ell, 1}^\pa \geq s_{\ell, 2}^\pa \geq \dots \geq s_{\ell, n}^\pa$ and $s_{\ell, 1}^\pb \geq s_{\ell, 2}^\pb \geq \dots \geq s_{\ell, n}^\pb$, where $n$ is the number of singular values). Then, we have
\small
\begin{align}
\argmin_\pi \| \parama - \pi(\paramb) \|^2
= \argmin_\pi \sum_{\ell\in [L]} \Big\| \sum_{i} \bmu_{\ell,i}^\pa s_{\ell, i}^\pa (\bmv_{\ell,i}^\pa)^\top
- \sum_{j} \bmP_{\ell} \bmu_{\ell,j}^\pb s_{\ell, j}^\pb (\bmP_{\ell-1} \bmv_{\ell,j}^\pb)^\top \Big\|^2. \label{eq:wm_svd}
\end{align}
\normalsize
\cref{eq:wm_svd} shows that the permutation matrices $\bmP_\ell$ and $\bmP_{\ell-1}$ are multiplied by the left and right singular vectors of the model $\paramb$, respectively. The $L^2$ distance between the models is expressed by the difference in singular values and singular vectors between the two models, as indicated by \cref{eq:wm_svd}. Therefore, in the following, we will discuss the differences in (1) singular values and (2) singular vectors of independently trained models.

\begin{figure}[t]
    \centering
    \includegraphics[width=\hsize]{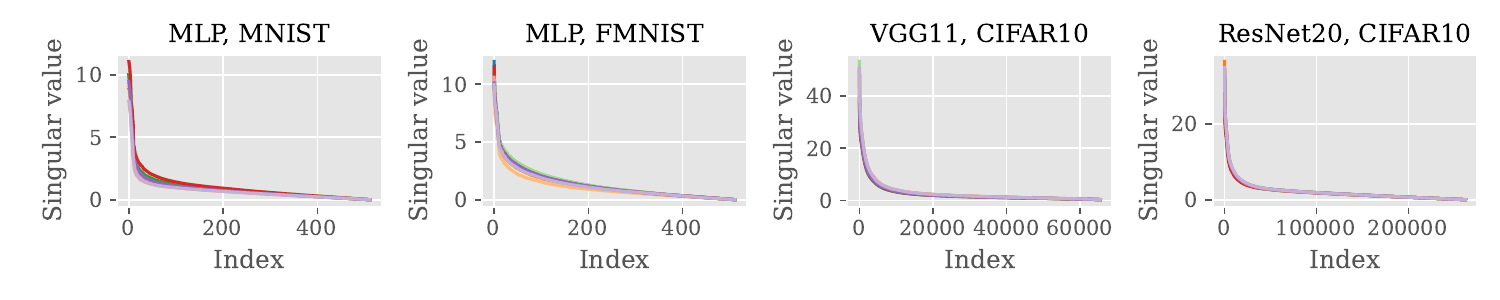}
    \caption{Distribution of the singular values in the second layer. The singular values of ten independently trained models (i.e., trained with different seeds) are plotted in different colors. The distribution of the singular values for all layers is shown in \cref{app:dist_singular_values}.}
    \label{fig:singular_values}
\end{figure}

\subsection{Differences Between Singular Values of Two Models} \label{subsec:singular_value}

First, we investigate the differences between the singular values of two independently trained models. To this end, ten models are trained independently under identical conditions except for the seed, and their singular values are compared. \cref{fig:singular_values} plots the singular values in the second layer of independently trained models in descending order. The evaluation results for all the layers are shown in \cref{fig:singular_values_all_layers}. As can be seen in the figures, in the hidden layers, the singular values are very close across all models.
Therefore, the differences in singular values between the models are not a significant obstacle to reducing the distance between the two models to zero.

\subsection{Singular-Vector Alignment} \label{subsec:experint}

In the previous subsection, we confirmed that the distributions of the singular values of the weights of the two independently trained models were almost equal. In this subsection, we will show that the permutations found by WM preferentially align the dominant singular vectors of the two models, and cannot align all singular vectors between the two models. Therefore, WM cannot reduce the $L^2$ distance to zero.

First, we introduce the following theorem:
\begin{theorem} \label{thm:main}
    Given the trained $L$-layer MLPs $\parama$ and $\paramb$, \cref{eq:wm_svd} is equivalent to
   \begin{align} 
        \argmin_\pi \| \parama - \pi(\paramb) \|^2 
        = \argmax_{\pi = (\bmP_\ell)_\ell} \sum_{\ell, i,j}  s_{\ell, i}^\pa  s_{\ell, j}^\pb (\bmu_{\ell,i}^\pa)^\top (\bmP_\ell \bmu_{\ell,j}^\pb) (\bmv_{\ell,i}^\pa)^\top (\bmP_{\ell-1} \bmv_{\ell,j}^\pb). \label{eq:wm_inner_product}
    \end{align}
\end{theorem}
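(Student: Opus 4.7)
The plan is to prove \cref{thm:main} by a direct algebraic expansion of the Frobenius norm in \cref{eq:wm_svd}, showing that the minimization reduces to the maximization of a trace term that, when the SVDs are plugged in, takes exactly the claimed form.

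First I would rewrite the objective layer-by-layer using
\begin{align*}
\| \bmW_\ell^\pa - \bmP_\ell \bmW_\ell^\pb \bmP_{\ell-1}^\top \|^2
= \| \bmW_\ell^\pa \|^2 - 2 \tr\!\big( (\bmW_\ell^\pa)^\top \bmP_\ell \bmW_\ell^\pb \bmP_{\ell-1}^\top \big) + \| \bmP_\ell \bmW_\ell^\pb \bmP_{\ell-1}^\top \|^2.
\end{align*}
The first term is independent of $\pi$. The third term equals $\|\bmW_\ell^\pb\|^2$ because the Frobenius norm is invariant under left- and right-multiplication by orthogonal matrices, and permutation matrices are orthogonal; hence it is also independent of $\pi$. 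Summing over $\ell$, minimizing $\|\parama - \pi(\paramb)\|^2$ is therefore equivalent to maximizing $\sum_\ell \tr\!\big( (\bmW_\ell^\pa)^\top \bmP_\ell \bmW_\ell^\pb \bmP_{\ell-1}^\top \big)$.

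Next I would substitute the SVDs $\bmW_\ell^\pa = \sum_i s_{\ell,i}^\pa \bmu_{\ell,i}^\pa (\bmv_{\ell,i}^\pa)^\top$ and $\bmW_\ell^\pb = \sum_j s_{\ell,j}^\pb \bmu_{\ell,j}^\pb (\bmv_{\ell,j}^\pb)^\top$ into the trace. After exchanging the sums with the trace and using the cyclic property $\tr(\bmx \bmy^\top) = \bmy^\top \bmx$, a typical $(i,j)$ term becomes
\begin{align*}
s_{\ell,i}^\pa s_{\ell,j}^\pb \, \tr\!\big( \bmv_{\ell,i}^\pa (\bmu_{\ell,i}^\pa)^\top \bmP_\ell \bmu_{\ell,j}^\pb (\bmv_{\ell,j}^\pb)^\top \bmP_{\ell-1}^\top \big)
= s_{\ell,i}^\pa s_{\ell,j}^\pb \, (\bmu_{\ell,i}^\pa)^\top (\bmP_\ell \bmu_{\ell,j}^\pb) \cdot (\bmv_{\ell,j}^\pb)^\top \bmP_{\ell-1}^\top \bmv_{\ell,i}^\pa.
\end{align*}
Transposing the scalar $(\bmv_{\ell,j}^\pb)^\top \bmP_{\ell-1}^\top \bmv_{\ell,i}^\pa = (\bmv_{\ell,i}^\pa)^\top (\bmP_{\ell-1} \bmv_{\ell,j}^\pb)$ gives exactly the summand in the right-hand side of \cref{eq:wm_inner_product}. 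Summing over $i$, $j$, and $\ell$ concludes the proof.

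There is no real obstacle here; the argument is a routine expansion, with the only bookkeeping point being to confirm that both $\|\bmW_\ell^\pa\|^2$ and $\|\bmP_\ell \bmW_\ell^\pb \bmP_{\ell-1}^\top\|^2 = \|\bmW_\ell^\pb\|^2$ drop out as $\pi$-independent constants. Once that is observed, the SVD substitution and cyclic trace manipulation yield the bilinear form on the singular vectors weighted by the products $s_{\ell,i}^\pa s_{\ell,j}^\pb$, which is the desired identity and also the form that will be exploited in \cref{subsec:experint} to explain why WM preferentially aligns singular vectors associated with large singular values.
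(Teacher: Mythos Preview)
Your proposal is correct and follows essentially the same approach as the paper: expand the Frobenius norm into trace terms, drop the $\pi$-independent $\|\bmW_\ell^\pa\|^2$ and $\|\bmW_\ell^\pb\|^2$, and then substitute the SVDs into the remaining cross-trace to obtain the bilinear form in \cref{eq:wm_inner_product}. The only cosmetic difference is the order of factors inside the trace (you write $\tr((\bmW_\ell^\pa)^\top \bmP_\ell \bmW_\ell^\pb \bmP_{\ell-1}^\top)$ while the paper writes $\tr(\bmP_\ell \bmW_\ell^\pb \bmP_{\ell-1}^\top (\bmW_\ell^\pa)^\top)$), which is immaterial by cyclicity.
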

The proof of this theorem is shown in \cref{app:proof_main}. Focusing on the term for each layer $\sum_{i,j} s_{\ell, i}^\pa s_{\ell, j}^\pb (\bmu_{\ell, i}^\pa)^\top (\bmP_\ell \bmu_{\ell,j}^\pb) (\bmv_{\ell,i}^\pa)^\top (\bmP_{\ell-1} \bmv_{\ell,j}^\pb)$ in \cref{eq:wm_inner_product}, $(\bmu_{\ell,i}^\pa)^\top (\bmP_\ell \bmu_{\ell,j}^\pb)$ is the inner product between the left singular vector $\bmu_{\ell,i}^\pa$ of the model $\parama$ and the left singular vector $\bmu_{\ell,j}^\pb$ of the model $\paramb$, applied with the permutation matrix $\bmP_\ell$. The permutation matrix is orthogonal, so it only permutes the elements without changing the norms of the left singular vectors. Therefore, this inner product is maximized when the directions of the two left singular vectors are aligned by the permutation matrix. The same applies to the right singular vectors. Thus, \cref{eq:wm_inner_product} can be interpreted as finding permutation matrices that align the directions of the singular vectors for all layers between two models, especially those associated with large singular values. Like MLPs, \cref{app:wm_conv} shows a similar analysis holds for convolutional layers.

Then, to empirically evaluate how well the singular vectors are aligned, we calculate 
\begin{align}
R(\parama, \pi(\paramb)) = \frac{\sum_{\ell, i,j}  (\bmu_{\ell,i}^\pa)^\top \bmP_\ell \bmu_{\ell,j}^\pb (\bmv_{\ell,i}^\pa)^\top \bmP_{\ell-1} \bmv_{\ell,j}^\pb)}{\sum_{\ell} n_\ell},
\end{align}
where $n_\ell$ is the number of singular values in the $\ell$-th layer. Note that $|R(\parama, \pi(\paramb))|\leq 1$ holds and, we have equality if for all $\ell$ and $i$, $\bmu_{\ell, i}^\pa = \bmP_\ell \bmu_{\ell, i}^\pb$ and $\bmv_{\ell, i}^\pa = \bmP_{\ell-1} \bmv_{\ell, i}^\pb$ (proof in \cref{app:proof_max}). Therefore, if $R(\parama, \pi(\paramb))$ is close to one, the singular vectors of the models are well-aligned.

\begin{figure}[t]
    \centering
    \includegraphics[width=0.8\hsize]{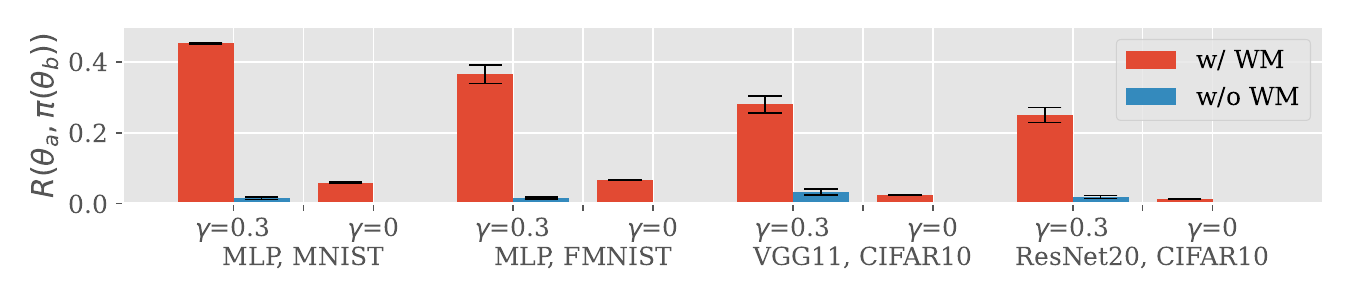}
    \caption{Mean and standard deviation of $R(\parama, \paramb)$ from five permutation searches using WM. The red and blue bars represent the results with and without applying a permutation to $\paramb$, respectively.}
    \label{fig:perm_R}
\end{figure}

\begin{figure*}[t]
    \begin{minipage}[c]{0.49\hsize}
        \centering
        \includegraphics[width=\hsize]{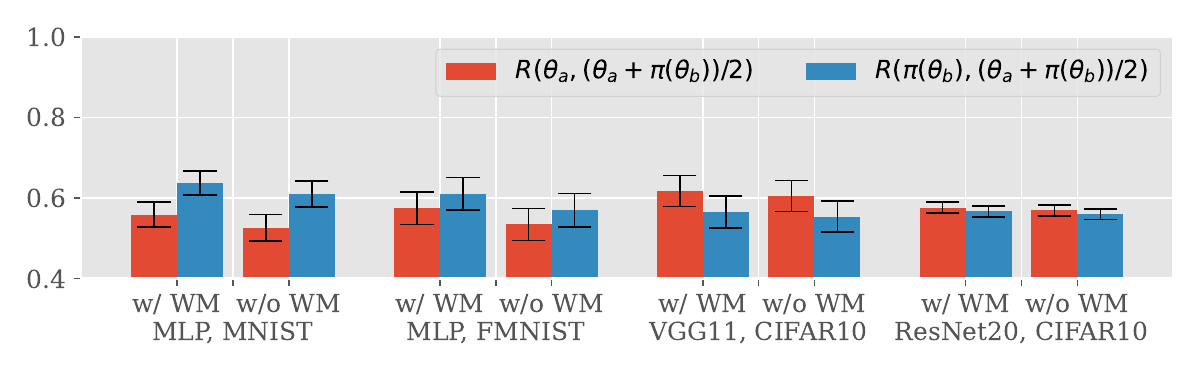}
        \subcaption{Evaluation results with the threshold $\gamma=0$.}
        \label{fig:model_merge_R_0.0}
    \end{minipage}
    \begin{minipage}[c]{0.49\hsize}
        \centering
        \includegraphics[width=\hsize]{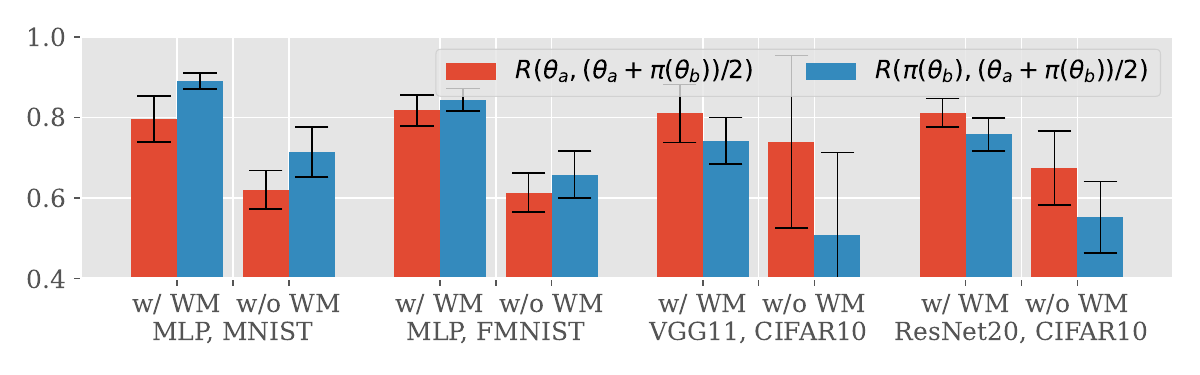}
        \subcaption{Evaluation results with the threshold $\gamma = 0.3$.}
        \label{fig:model_merge_R_0.3}
    \end{minipage}
    \caption{Evaluation results of $R$ between the pre- and post-merged models. The red and blue bars represent the evaluation results of $R(\parama, (\parama + \pi(\paramb))/2)$ and $R(\pi(\paramb), (\parama + \pi(\paramb))/2)$, respectively.}
    \label{fig:model_merge_R}
\end{figure*}

\cref{fig:perm_R} shows the experimental results of evaluating the $R$ value between $\parama$ and $\pi(\paramb)$. A threshold $\gamma$ is introduced to examine whether the singular vectors with large singular values are preferentially aligned. For each model, we evaluate $R$ using only singular vectors whose ratio to the largest singular value is greater than $\gamma$. Thus, in the figure, $\gamma=0$ corresponds to the results when all singular vectors are used, and $\gamma=0.3$ corresponds to the results when only singular vectors with a ratio to the largest singular value exceeding 0.3 are used. When we calculated $R$, its denominator $\sum_\ell n_\ell$ was also adjusted according to the value of $\gamma$. The details of the calculation of $R$ are described in \cref{app:R}.

The figure shows that the directions of the singular vectors are aligned with WM. Without WM, the value of $R$ is almost zero, indicating that the singular vectors are nearly orthogonal. Additionally, focusing on the difference in $\gamma$, when the singular vectors are aligned using WM, the value of $R$ is clearly larger when $\gamma$ is 0.3. This indicates that WM aligns singular vectors with larger singular values more closely. Although the value of $R$ is not necessarily very large, especially around 0.2 at most for VGG11 and ResNet20, this alignment of singular vectors still affects the merged models.

\cref{fig:model_merge_R} shows the evaluation results of $R$ between the merged model (i.e., $(\parama + \pi(\paramb))/2$) and the pre-merged models (i.e., $\parama$ and $\pi(\paramb)$). To investigate how well the directions of singular vectors with large values are aligned between the merged and pre-merged models, we also show the results for $\gamma=0.3$ in \cref{fig:model_merge_R_0.3}. The figures show that when $\gamma=0$, the value of $R$ does not change regardless of the use of WM. However, when $\gamma=0.3$, the value of $R$ changes significantly depending on whether WM is used. For example, the MLP results show that the value of $R$ exceeds 0.8 when using WM. This result indicates that the directions of singular vectors with particularly large singular values are better aligned between these models. 

\subsection{Importance of Singular Vectors in LMC} \label{subsec:sing_vector_more_important}

In the previous section, we mentioned that WM aligns the directions of singular vectors with large singular values. This section clarifies why these singular vectors, rather than $L^2$ distances, play a crucial role in establishing LMC. 

To explain this, we first focus on the difference between outputs at the $\ell$-th layers of two models, $\bmW_\ell^\pa$ and $\bmW_\ell^\pb$, given the same input $\bmz$ (e.g., $\bmz = \bmz_{\ell-1}^\pa$ or $\bmz = \bmz_{\ell-1}^\pb$). Suppose that the distributions of the singular values of the two weights are equal (this assumption holds for models trained with SGD, as shown in \cref{fig:singular_values_all_layers}). The difference between the outputs can be bounded from above:
\begin{align}
\mathbb{E} \lVert \sigma(\bmW_\ell^\pa \bmz) - \sigma(\bmW_\ell^\pb \bmz) \rVert \leq C \mathbb{E} \lVert \bmW_\ell^\pa \bmz - \bmW_\ell^\pb \bmz \rVert, \label{eq:diff_out}
\end{align}
where $\sigma$ is a Lipschitz continuous activation function with a constant $C > 0$ (e.g., $C=1$ for the ReLU function). From \cref{eq:diff_out}, we can see that depending on the distribution of the input $\bmz$, the outputs of the two layers can be close even when the distance between the two weights is not.

Let $\bmW_\ell^\pa = \sum_i \bmu_{\ell, i}^\pa s_i^\pa (\bmv_{\ell, i}^\pa)^\top$ and $\bmW_\ell^\pb = \sum_i \bmu_{\ell, i}^\pb s_i^\pb (\bmv_{\ell, i}^\pb)^\top$ be the SVDs of their weights. Here, we assume that, for some index $k$, the direction of $\bmz$ is always in the direction of the $k$-th right singular vector $\bmv_{\ell, k}^\pa$ with $\bmW^\pa_\ell$. Then, the product between $\bmW_\ell^\pa$ and $\bmz$ is given by $\bmW_\ell^\pa \bmz = \sum_i \bmu_{\ell, i}^\pa s_i^\pa (\bmv_{\ell, i}^\pa)^\top \bmz = \bmu_{\ell, k}^\pa s_k^\pa (\bmv_{\ell, k}^\pa)^\top \bmz$ because the singular vectors are orthogonal. Therefore, \cref{eq:diff_out} can be rewritten as
\begin{align}
\mathbb{E} \lVert \sigma(\bmW_\ell^\pa \bmz) - \sigma(\bmW_\ell^\pb \bmz) \rVert \leq C\mathbb{E} \Big\lVert \bmu_{\ell, k}^\pa s_k^\pa \bmv_{\ell, k}^\pa \bmz  - \sum_i \bmu_{\ell, i}^\pb s_i^\pb \bmv_{\ell, i}^\pb \bmz \Big\rVert.
\end{align}

Thus, as long as the directions of the $k$-th singular vectors of the two weights are aligned (i.e., $\bmv_{\ell, k}^\pa = \bmv_{\ell, k}^\pb$ and $\bmu_{\ell, k}^\pa = \bmu_{\ell, k}^\pb$ hold), the outputs of the two layers will coincide regardless of the other singular vectors. Note that the $L^2$ distance between the two weights is not necessarily close to zero since the directions of the other singular vectors need not be aligned. In fact, in \cref{app:exa_thm5_1}, we provide an example where the $L^2$ distance is not close to zero even though the output is zero.

More generally, the following theorem holds for the difference between the outputs of the two layers.
\begin{theorem} \label{thm:diff_out}
    For the difference, we have
    \begin{multline}
        \mathbb{E} \| \sigma(\bm{W}^{(a)}_\ell \bm{z}) - \sigma(\bm{W}^{(b)}_\ell \bm{z}) \| 
\leq C\bigg(\sum_i (s_{\ell, i}^\pa)^2 \mathbb{E} ((\bm{v}_{\ell, i}^\pa)^\top \bm{z})^2 + \sum_i (s_{\ell, i}^\pb)^2 \mathbb{E} ((\bm{v}_{\ell, i}^\pb)^\top \bm{z})^2 \\
- 2 \sum_{i, j} s_{\ell, i}^\pa s_{\ell, j}^\pb (\bmu_{\ell, i}^\pa)^\top \bmu_{\ell, j}^\pb \mathbb{E}(\bm{v}_{\ell, i}^\pa)^\top \bm{z} (\bm{v}_{\ell, j}^\pb)^\top \bm{z}\bigg)^{1/2}. \label{eq:output_singular}
    \end{multline}
\end{theorem}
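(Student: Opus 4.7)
The plan is to chain two standard inequalities and then perform an algebraic expansion in the SVD basis. First I would invoke the Lipschitz property of $\sigma$, which gives $\|\sigma(\bmx) - \sigma(\bmy)\| \leq C\|\bmx - \bmy\|$, and therefore
\begin{align*}
\mathbb{E} \|\sigma(\bmW_\ell^\pa \bmz) - \sigma(\bmW_\ell^\pb \bmz)\| \leq C\,\mathbb{E} \|\bmW_\ell^\pa \bmz - \bmW_\ell^\pb \bmz\|.
\end{align*}
Next I would apply Jensen's inequality (equivalently, $\mathbb{E}[X] \leq \sqrt{\mathbb{E}[X^2]}$ for nonnegative $X$) to get $\mathbb{E}\|\bmW_\ell^\pa \bmz - \bmW_\ell^\pb \bmz\| \leq \sqrt{\mathbb{E}\|\bmW_\ell^\pa \bmz - \bmW_\ell^\pb \bmz\|^2}$. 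This reduces the problem to computing a second moment, which is purely bilinear in the SVD components.

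The remaining step is the expansion. I would substitute $\bmW_\ell^\pa \bmz = \sum_i \bmu_{\ell,i}^\pa s_{\ell,i}^\pa (\bmv_{\ell,i}^\pa)^\top \bmz$ (and analogously for model $\pb$) and expand $\|\bmW_\ell^\pa \bmz - \bmW_\ell^\pb \bmz\|^2 = \|\bmW_\ell^\pa \bmz\|^2 + \|\bmW_\ell^\pb \bmz\|^2 - 2(\bmW_\ell^\pa \bmz)^\top(\bmW_\ell^\pb \bmz)$. Within each model the orthonormality $(\bmu_{\ell,i}^\pa)^\top \bmu_{\ell,j}^\pa = \delta_{ij}$ collapses the double sum, giving $\|\bmW_\ell^\pa \bmz\|^2 = \sum_i (s_{\ell,i}^\pa)^2 ((\bmv_{\ell,i}^\pa)^\top \bmz)^2$ and likewise for $\pb$. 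The cross term does not collapse because $\bmu_{\ell,i}^\pa$ and $\bmu_{\ell,j}^\pb$ come from different SVDs and are not mutually orthogonal; it remains as $\sum_{i,j} s_{\ell,i}^\pa s_{\ell,j}^\pb (\bmu_{\ell,i}^\pa)^\top \bmu_{\ell,j}^\pb (\bmv_{\ell,i}^\pa)^\top \bmz (\bmv_{\ell,j}^\pb)^\top \bmz$. Taking $\mathbb{E}$ throughout and then the square root produces exactly the right-hand side of \cref{eq:output_singular}.

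Nothing here is deep; the proof is essentially bookkeeping. The only place where care is needed is the order of operations around Jensen's inequality: one must pull the square root out before expanding, because if one applied SVD first and then Jensen termwise the resulting bound would not combine cleanly into a single square root of the stated quadratic form, and the interaction between the $\bmu$-inner products and the $\bmv$-inner products (which is precisely the quantity WM maximizes per \cref{thm:main}) would be obscured. Once the ordering is fixed, the orthonormality of each model's left singular vectors does all the work.
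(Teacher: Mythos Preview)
Your proposal is correct and follows essentially the same route as the paper: Lipschitz bound, then Jensen's inequality to pass to $\sqrt{\mathbb{E}\|\cdot\|^2}$, then an SVD expansion of the squared difference using orthonormality of each model's left singular vectors. The paper's proof is slightly terser (it writes the expansion as $\bmz^\top(\bmW_\ell^\pa)^\top\bmW_\ell^\pa\bmz + \bmz^\top(\bmW_\ell^\pb)^\top\bmW_\ell^\pb\bmz - 2\bmz^\top(\bmW_\ell^\pa)^\top\bmW_\ell^\pb\bmz$ before substituting SVDs) but the logic is identical.
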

The proof of \cref{thm:diff_out} is provided in \cref{app:proof_diff_out}.
If the right-hand side of \cref{eq:output_singular} in the theorem is small, the difference is also small. Note that each sum on the right-hand side includes the inner product between the right singular vector and the input (i.e., $(\bmv_{\ell, i}^\pa)^\top \bmz$ and $(\bmv_{\ell, i}^\pb)^\top \bmz$). Since singular vectors are orthogonal to each other, if $\bmz$ is aligned with one singular vector, its inner product with the other singular vectors will be small. In other words, right singular vectors with a large inner product with the input determine the difference between the outputs of the two layers.

\begin{figure}[t]
    \centering
    \includegraphics[width=0.95\hsize]{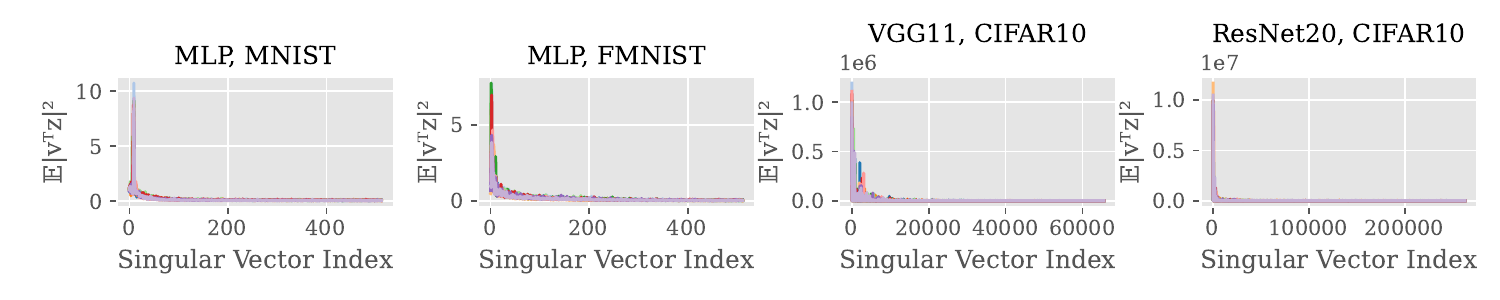}
    \caption{Average absolute values of the inner products of the right singular vectors and the input of the second layer. The figure shows results for ten models trained with different seeds, each represented by a different color. The test dataset is used as input for the models. In each plot, the vertical axis denotes the value of $\mathbb{E}(\bmv_{\ell,i}^\top \bm{z}_{\ell-1})^2$, and the horizontal axis denotes the index $i$ of the right singular vector. The left side of each plot corresponds to singular vectors with large singular values. The results for all layers are shown in \cref{fig:dot_all_layers}.}
    \label{fig:dot_2nd_layer}
\end{figure}

In the context of WM, it is desirable that the input vector has the same direction as the right singular vectors with large singular values because WM preferentially aligns these singular vectors of the two weights. To verify this, we experimentally investigate the relationship between the directions of the right singular vectors and that of the input vector. \cref{fig:dot_2nd_layer} shows the value of $\mathbb{E}(\bmv_{\ell,i}^\top \bm{z}_{\ell-1})^2$ for the $i$-th right singular vector $\bmv_{\ell, i}$ in the second layer (i.e., $\ell=2$) and the corresponding hidden layer input $\bmz_{\ell-1}$ for each model. The results show that in the hidden layer, the singular vectors with large singular values have large inner products with the input vectors, which indicates that the permutations found by WM make LMC more feasible.\footnote{As shown in \cref{fig:dot_v_vgg,fig:dot_v_resnet}, this tendency only occurs when the model is sufficiently wide. This suggests that LMC is unlikely to be established in WM unless the model is wide enough.} In particular, \cref{fig:model_merge_R_0.3} shows that by aligning the directions of the singular vectors between the two models $\bm{\theta_a}$ and $\bm{\theta_b}$ using WM, the directions of the singular vectors with large singular values of the models before and after merging (e.g., $\bm{\theta}_a$ and $(\bm{\theta}_a + \bm{\theta}_b)/2$) are well aligned. This suggests that the hidden layer outputs of the models before and after merging are closer, which contributes to the establishment of LMC.

Some studies~\citep{Ainsworth_ICLR_2023,Entezari_arxiv_2022} have observed that increasing the model width makes it easier to satisfy LMC using WM. In \cref{app:wm_width}, we provide an empirical analysis to explain this observation in terms of singular-vector alignment. Furthermore, \cite{Xingyu_arxiv_2024} showed that strengthening weight decay and increasing the learning rate makes it easier for LMC to be established through WM. \cref{app:relation_hyperparameters_and_lmc} demonstrates empirically that increasing these values reduces the proportion of large singular values in the weights of each layer, facilitating the alignment of the corresponding singular vectors through WM, and thus making LMC easier to achieve.

\section{Activation Matching} \label{sec:AM}
\cite{Ainsworth_ICLR_2023} proposed activation matching (AM) as a permutation search method different from WM. This section compares AM and WM, and explains that their results are almost similar. 

AM searches for a permutation $\pi^*$ based on the following equation:
\small
\begin{align}
\pi^* = \argmin_{\pi = (\bmP_\ell)_{\ell}} \sum_{\ell \in [L]} \mathbb{E}\| \bmz^\pa_\ell - \bmP_\ell \bmz^\pb_\ell \|^2. \label{eq:AM}
\end{align}
\normalsize
Unlike WM, AM can be solved as a simple linear sum assignment problem because it can be optimized independently for each layer, allowing for the optimal solution to be obtained. 

The minimization of \cref{eq:AM} is related to \cref{thm:diff_out}. Specifically, in a permutation search for the $\ell$-th layer, if we assume that the outputs $\bmz_{\ell-1}^\pa$ and $\bmz_{\ell-1}^\pb$ from the previous layer are sufficiently close under the permutation $\bmP_{\ell-1}$ (i.e., $\bmz_{\ell-1}^\pa \approx \bmP_{\ell-1} \bmz_{\ell-1}^\pb$), then minimizing the right-hand side of \cref{eq:output_singular} becomes equivalent to reducing the objective function in \cref{eq:AM}. In other words, similar to WM, AM may search for permutations that align the singular vectors with large singular values between two models. To verify this, the results of model merging using AM are presented in \cref{tab:taylor_am}. The experimental settings are the same as those used for WM in \cref{subsec:wm_taylor}. Additionally, to evaluate how well the singular vectors align through permutation, the $R$ calculation results are shown in \cref{fig:perm_R_am,fig:model_merge_R_am}. These results closely resemble those for WM in \cref{tab:taylor,fig:perm_R,fig:model_merge_R}, suggesting that the reason AM achieves LMC is likely to be similar to that for WM.


\section{Comparison with Straight-Through Estimator} \label{sec:STE}
\begin{table*}[t]
    \centering
    \caption{Results of model merging with STE.}
    \label{tab:ste}
    \resizebox{0.8\textwidth}{!}{%
        \begin{tabular}{cccccc} 
        \hline
        Dataset & Network & Barrier ($\lambda = 1/2$) & $L^2$ dist. w/o STE & $L^2$ dist. w/ STE & $R(\parama, \pi(\paramb))$ ($\gamma=0.3)$               \\ 
        \hline\hline
        \multirow{2}{*}{CIFAR10} & VGG11      & $0.06\pm0.042$    & $799.503\pm16.396$ & $799.779\pm16.177$  & $0.036\pm 0.007$ \\ \cline{2-6}
                                 & ResNet20 & $0.119\pm0.119$   & $710.762\pm16.261$ & $711.142\pm16.048$ & $0.013\pm 0.005$ \\ 
        \hline
        FMNIST                   & MLP      & $-0.342\pm0.066$  & $121.853\pm5.83$   & $118.316\pm5.453$ & $0.081\pm 0.008$ \\ 
        \hline
        MNIST                    & MLP      & $-0.037\pm0.008$  & $81.231\pm5.58$    & $73.994\pm5.58$ & $0.211\pm 0.013$ \\
        \hline
        \end{tabular}%
        }
\end{table*}

This section discusses the relationship between the straight-through estimator (STE), a more direct permutation search method, and WM in terms of singular vectors. STE uses a dataset to find permutations with a small barrier value. We also explain that STE and WM are based on fundamentally different principles and show how this difference impacts LMC among three or more models.

\subsection{Straight-through Estimator (STE)}
\citet{Ainsworth_ICLR_2023} proposed the STE, which finds a permutation $\pi$ such that
\begin{align} 
    \argmin_{\pi} \mathcal{L} \bigl(\left(\parama + \pi(\paramb) \right)/2\bigr).\label{eq:ste}
\end{align}
Since \cref{eq:ste} is difficult to solve directly, \citet{Ainsworth_ICLR_2023} proposed a method to approximate the solution. Later, \citet{Pena_CVPR_2023} proposed a method to solve \cref{eq:ste} directly using Sinkhorn's algorithm. We adopt the latter method, which we refer to as STE in this paper.

\subsection{Experimental Results of Model Merging by STE}
\begin{figure}[t]
    \centering
    \includegraphics[width=0.7\hsize]{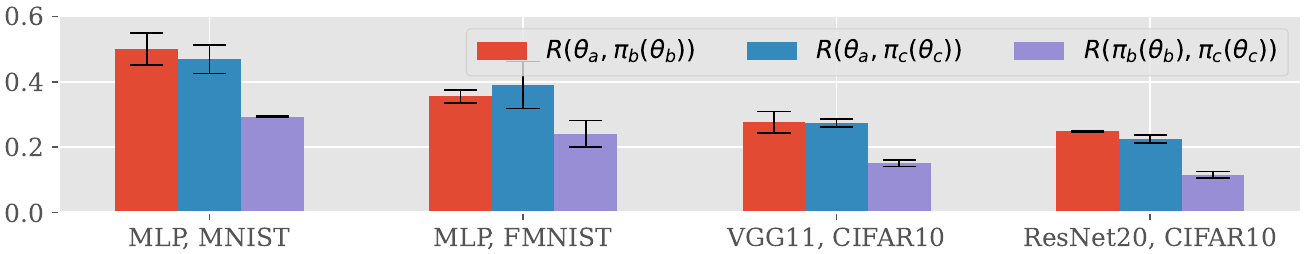}
    \caption{Evaluation results of $R$ between each pair of the models with $\gamma = 0.3$.}
    \label{fig:R_three_models}
\end{figure}
\begin{table}[t]
\centering
\caption{Loss and accuracy barriers between $\pi_b(\paramb)$ and $\pi_c(\paramc)$. The table shows the mean and standard deviation over three model merging trials. }
\label{tab:barrier_three_models_main}
\resizebox{0.8\textwidth}{!}{
\begin{tabular}{cccccc}
\hline
                            &                             & \multicolumn{2}{c}{Loss barrier ($(\pi_b(\paramb) + \pi_c(\paramc))/2$)}                 & \multicolumn{2}{c}{Accuracy barrier ($(\pi_b(\paramb) + \pi_c(\paramc))/2$)}             \\ \hline
\multicolumn{1}{c}{Dataset} & \multicolumn{1}{c}{Network} & \multicolumn{1}{c}{WM} & \multicolumn{1}{c}{STE} & \multicolumn{1}{c}{WM} & \multicolumn{1}{c}{STE} \\ \hline \hline
\multirow{2}{*}{CIFAR10}    & VGG11                       & $0.141 \pm 0.141$      & $2.172 \pm 0.989$       & $10.12 \pm 5.117$      & $32.013 \pm 8.193$      \\ \cline{2-6} 
                            & ResNet20                    & $0.294 \pm 0.098$      & $1.693 \pm 0.168$       & $7.23 \pm 0.99$        & $34.483 \pm 2.426$      \\ \hline
FMNIST                      & MLP                         & $-0.174 \pm 0.051$     & $0.023 \pm 0.118$       & $4.337 \pm 1.434$      & $15.97 \pm 1.724$       \\ \hline
MNIST                       & MLP                         & $-0.031 \pm 0.003$     & $0.017 \pm 0.014$       & $0.475 \pm 0.069$      & $2.312 \pm 0.457$       \\ \hline
\end{tabular}
}
\end{table}

The experimental results of model merging using STE are shown in \cref{tab:ste}. This table also shows the $L^2$ distance and the $R$ value between the two models before and after the permutation. Despite the relatively small barrier value, \cref{tab:ste} shows that the $L^2$ distance between the two models before and after permutation hardly changes compared to the results with WM shown in \cref{tab:taylor}. Since $R(\parama, \pi(\paramb))$ is nearly zero, the singular vectors between the two models are likely not aligned at all. Therefore, the reason for satisfying LMC by STE is completely different from that of WM.

\subsection{LMC among Three Models} \label{subsec:three_models}

The previous subsection shows that the permutation matrices found by STE do not align the directions of the singular vectors of the models. This suggests that STE finds a permutation that reduces the loss of the merged model based on the loss landscape rather than the linear algebraic properties of the weight matrices of each layer. The difference between the principles of STE and WM could result in a qualitative difference in LMC among three or more models.

Suppose we have three SGD solutions: $\parama$, $\paramb$, and $\paramc$. Let $\pi_b$ and $\pi_c$ be permutations that satisfy LMC between $\parama$ and $\pi_b(\paramb)$, and $\parama$ and $\pi_c(\paramc)$, respectively. If permutations found by STE depend on the locality of the loss landscape rather than the linear algebraic properties of the model weights, there is no guarantee that $\pi_b(\paramb)$ and $\pi_c(\paramc)$ are linearly mode-connected. In contrast, permutations found by WM align the directions of the singular vectors of the two models. This means that the singular vectors of $\pi_b(\paramb)$ and $\pi_c(\paramc)$ are also expected to be aligned. Thus, the LMC between $\pi_b(\paramb)$ and $\pi_c(\paramc)$ may not be satisfied with STE, while it is likely to be satisfied with WM.

We performed model merging experiments among three models to confirm the validity of the above discussion. First, \cref{fig:R_three_models} presents the results of examining how well the singular vectors are aligned in each model pair by WM. Since the models $\paramb$ and $\paramc$ are matched to $\parama$ through WM, it is expected that $R(\parama,\pi_b(\paramb))$ and $R(\parama, \pi_c(\paramc))$ would be large. On the other hand, although $\paramb$ and $\paramc$ were not explicitly aligned, $R(\pi_b(\paramb), \pi_c(\paramc))$ is clearly greater than zero, indicating that the directions of these two singular vectors are indirectly aligned by WM. From this result, the barrier between the models $\pi_b(\paramb)$ and $\pi_c(\paramc)$ is expected to be small. To confirm this, \cref{tab:barrier_three_models_main} shows the barriers between $\pi_b(\paramb)$ and $\pi_c(\paramc)$. \cref{tab:barrier_three_models} shows the detailed results, and \cref{fig:landscape} shows the test accuracy landscape around $\parama$, $\pi_b(\paramb)$, and $\pi_c(\paramc)$.  As can be seen from \cref{tab:barrier_three_models_main}, the barrier between $\pi_b(\paramb)$ and $\pi_c(\paramc)$ is smaller with WM than with STE. This means that there is a significant difference between the principles of permutations obtained by WM and STE. \cref{fig:landscape} also shows that the landscape of test accuracy is flatter around the three models with WM than with STE. Therefore, WM is likely to be more advantageous, especially for merging three or more models.
\section{Conclusion}
This paper analyzed why linear mode connectivity (LMC) is satisfied through permutation search with weight matching (WM). First, we demonstrated that WM does not reduce the distance between the weights of two models as significantly as previously thought. We then analyzed WM using singular value decomposition (SVD) and found that WM aligns the directions of singular vectors with large singular values, which plays a crucial role in achieving LMC. Additionally, we showed that the reason LMC is established in AM is likely the same as for WM. Finally, we discussed the difference between STE and WM from the perspective of singular vectors.

Although this paper primarily analyzed WM from the perspective of individual layers (e.g., \cref{thm:diff_out}), it remains unclear why our analysis can explain the phenomenon so effectively since the actual network is multi-layered. In the future, a more comprehensive analysis that accounts for the multi-layered structure of the network will be necessary.

\section*{Ethics Statement} \label{app:broader_impact}
This paper presents work whose goal is to advance the field of Machine Learning. There are many potential ethical consequences of our work, none which we feel must be specifically highlighted here.

\section*{Reproducibility}
The settings for reproducing the experiments are described in \cref{app:experimental_setup}. All the proofs of the theorems are given in \cref{app:proofs}.

\bibliography{modified}
\bibliographystyle{iclr2025_conference}

\clearpage
\appendix
\section*{Appendix}

\section{Extended Related Work}

\paragraph{(Linear) mode connectivity.} Several studies~\citep{Garipov_NIPS_2018,Draxler_ICML_2018,Freeman_ICLR_2017} have found that different neural network solutions can be connected by nonlinear paths with almost no increase in loss. \citet{Nagarajan_NIPS_2019} first discovered that solutions can be connected by linear paths with an almost constant loss value when training models on MNIST with the same random initial values. Later, \citet{frankle_ICML_2020} demonstrated experimentally that LMC is not always satisfied between two SGD solutions, even with the same initial parameters, depending on the datasets and model architectures. However, they also showed that if a single model is trained for a certain period and then two models are trained independently from this pre-trained model as a starting point, they are linearly mode-connected. Furthermore, \citet{frankle_ICML_2020} explored the relationship between LMC and the lottery-ticket hypothesis~\citep{Frankle_ICLR_2019}. \citet{Entezari_arxiv_2022} conjectured that LMC is satisfied with a high probability between two SGD solutions by accounting for permutation symmetries in the hidden layers. Subsequently, \citet{Ainsworth_ICLR_2023} proposed a WM method by formulating neuron alignment as a bipartite graph matching problem and solving it approximately. Later, \citet{Pena_CVPR_2023} suggested using Sinkhorn's algorithm to solve the WM directly. Some previous studies~\citep{Ainsworth_ICLR_2023,Crisostom_arXiv_2024} have also proposed permutation search methods for achieving LMC between multiple models. However, all of these methods reduce the $L^2$ distances between the models, and no methods have been proposed that use information on the loss function, such as STE. Therefore, in this paper, we created a pair of models and performed a permutation search for each pair to clarify the differences between WM and STE in a fair manner. The investigation of permutation search methods for multiple models is a future work.

While several papers~\citep{Luca_JMLR_2019,Nguyen_ICLR_2019,Nguyen_ICML_2019,Kuditipudi_NIPS_2019} have discussed nonlinear mode connectivity, there is little theoretical analysis on LMC. \cite{Ferbach_AISTATS_2023} provided an upper bound on the minimal width of the hidden layer to satisfy LMC. However, to prove this, they assumed the independence of all neuron's weight vectors inside a given layer. It is unlikely that this assumption holds for models after training. Partially similar to our paper, \cite{Signh_HiLD_2024} demonstrated that the barrier value can be approximated using a second-order Taylor approximation for the case of spawning~\citep{Zhou_NIPS_2023}. However, they have not validated this approach for permutations, and we revealed that a second-order Taylor approximation fails to accurately estimate the barrier value in the case of permutations. \citet{Zhou_NIPS_2023} introduced the concept of layerwise linear feature connectivity (LLFC) and showed that LLFC implies LMC. Additionally, \citet{Zhou_NIPS_2023} demonstrated that if weak additivity for ReLU activation and the commutativity property are satisfied, then both LLFC and LMC are satisfied. However, we show that the $L^2$ distance between the models after permutation is not close enough to satisfy the commutativity property. This motivated us to investigate the relationship between LMC and WM.

\paragraph{Model merging.} Relevant topics of LMC include model merging and federated learning. \citet{McMahan_AISTATS_2017} and \cite{Konecny_arxiv_2016} introduced the concept of federated learning, where a model is trained on divided datasets. \citet{Wang_ICLR_2020} proposed a federated learning method by permuting each component unit and then averaging the weights of the models. \citet{Singh_NIPS_2020} proposed a method for merging models by performing alignments of model weights using optimal transport, which is similar to the method proposed by \cite{Ainsworth_ICLR_2023}. Although their method is designed for model fusion and its performance is inferior to that of Ainsworth et al.'s method, it can be considered an LMC-based method because it uses hard alignments for the same architecture. \citet{Wortsman_ICML_2022} proposed a method to improve test accuracy without increasing inference cost, unlike ensemble methods, by averaging the weights of models fine-tuned with different hyperparameters.

\paragraph{Low-rank bias.} Empirically, some previous studies~\citep{Tukan_arXiv_2020,Arora_ICML_2018,Alvarez_NIPS_2017,Yu_CVPR_2017,Denton_NIPS_2014} on the compression of trained DNN models have pointed out that even when the weights are replaced with low-rank matrices, the accuracy does not decrease significantly. This suggests that SGD has an implicit bias toward reducing the model weights to low-rank. \cite{Yunis_arxiv_2024} explored the reduction of rank during SGD-based training and the orientation of top singular vectors via SVD on the weights. While this is relevant to our paper, \cite{Yunis_arxiv_2024} do not examine the effects of permutations in detail. Moreover, it does not discuss the interplay between hidden layer inputs and top singular vectors under permutations. \cite{Tomer_arxiv_2023} and \cite{Timor_CoLT_2023} further discuss the low-rank effect on the weights of trained models introduced by weight decay and a small initialization scale. In particular, \cite{Tomer_arxiv_2023} state that the weights become more low-rank by strengthening the weight decay and increasing the learning rate, and this is expected to help establish LMC via WM (which we experimentally confirm in \cref{app:relation_hyperparameters_and_lmc}).
\section{Calculation of \texorpdfstring{$R$}{R} with Threshold \texorpdfstring{$\gamma$}{γ}} \label{app:R}
This section describes how to calculate the $R$ value with a threshold $\gamma > 0$. Given two models, $\parama$ and $\paramb$, let $\bmW_\ell^\pa$ and $\bmW_\ell^\pb$ be the weights of the $\ell$-th layers of the models $\parama$ and $\paramb$, respectively. Let $\sum_i \bmu_{\ell, i}^\pa s_{\ell, i}^\pa (\bmv_{\ell, i}^\pa)^\top$ and $\sum_i \bmu_{\ell, i}^\pb s_{\ell, i}^\pb (\bmv_{\ell, i}^\pb)^\top$ be the SVDs of these weights. Also, let $s^\pa$ and $s^\pb$ be the maximum singular values in all the layers of the models $\parama$ and $\paramb$, respectively. The $R$ value with the threshold $\gamma$ is calculated as follows:
\begin{align}
R_\gamma(\parama, \paramb) = \frac{\sum_{\ell, i, j} I[(s_{\ell, i}^\pa \geq \gamma s^\pa) \land (s_{\ell, i}^\pb \geq \gamma s^\pb)] (\bmu^\pa_{\ell, i})^\top (\bmu^\pb_{\ell, j}) (\bmv^\pa_{\ell, i})^\top (\bmv^\pb_{\ell, j})}{\sum_{\ell} \min\{n_\ell^\pa, n_\ell^\pb\}},
\end{align}
where $n_\ell^\pa$ and $n_\ell^\pb$ are the numbers of singular values greater than $\gamma s^\pa$ and $\gamma s^\pb$, respectively, and $I$ is an indicator function that returns one if the given logical expression is true and zero if it is false.

Finally, we will briefly explain that $|R_\gamma(\parama, \paramb)| \leq 1$ holds. We first define the new weight matrices by $\bmW'^\pa_\ell = \sum_i \bmu_{\ell, i}^\pa I[(s_{\ell, i}^\pa \geq \gamma s^\pa)] (\bmv_{\ell, i}^\pa)^\top$ and $\bmW'^\pb_\ell = \sum_i \bmu_{\ell, i}^\pb I[(s_{\ell, j}^\pb \geq \gamma s^\pb)] (\bmv_{\ell, i}^\pb)^\top$. From the definition, we have:
\begin{align}
\tr\left((\bmW'^\pa_\ell)^\top \bmW'^\pb_\ell\right) = \sum_{i, j} I[(s_{\ell, i}^\pa \geq \gamma s^\pa) \land (s_{\ell, j}^\pb \geq \gamma s^\pb)] (\bmu^\pa_{\ell, i})^\top (\bmu^\pb_{\ell, j}) (\bmv^\pa_{\ell, i})^\top (\bmv^\pb_{\ell, j}). \label{eq:R_WW}
\end{align}
Note that the $i$-th singular values of these weights $\bmW_\ell'^\pa$ and $\bmW_\ell'^\pb$ can be regarded as $I[s^\pa_{\ell, i} \geq \gamma s^\pa]$ and $I[s^\pb_{\ell, i} \geq \gamma s^\pb]$, respectively. Therefore, von Neumann's trace inequality~\citep{Neumann_1962} yields that:
\begin{align}
\tr\left((\bmW'^\pa_\ell)^\top \bmW'^\pb_\ell\right) \leq \sum_{i} I[s_{\ell, i}^\pa \geq \gamma s^\pa] I[s_{\ell, i}^\pb \geq \gamma s^\pb] = \min\{n_\ell^\pa, n_\ell^\pb\}. \label{eq:R_min}
\end{align}
From \cref{eq:R_WW,eq:R_min}, we have:
\begin{align}
\sum_{i, j} I[(s_{\ell, i}^\pa \geq \gamma s^\pa) \land (s_{\ell, j}^\pb \geq \gamma s^\pb)] (\bmu^\pa_{\ell, i})^\top (\bmu^\pb_{\ell, j}) (\bmv^\pa_{\ell, i})^\top (\bmv^\pb_{\ell, j}) \leq \min\{n_\ell^\pa, n_\ell^\pb\}.
\end{align}
By summing both sides for $\ell$, we get $|R_\gamma(\parama, \paramb)| \leq 1$.

\section{Simple Example of Theorem~\ref{thm:diff_out}} \label{app:exa_thm5_1}
In \cref{subsec:sing_vector_more_important}, we explained that even when the $L^2$ distance between the weights of two models is large, their outputs can be close depending on the input distribution. Here, we use a simple example for a more detailed analysis.

Consider the weights of two models, $\bmW^\pa$ and $\bmW^\pb$, given by:
\begin{align}
    \bmW^\pa = \begin{pmatrix}
-0.398 & -0.003 & 0.210 \\
1.059 & 0.303 & 0.521 \\
0.609 & -0.785 & -0.235
\end{pmatrix},
\bmW^\pb = \begin{pmatrix}
-0.255 & -0.319 & -0.559 \\
1.031 & -0.155 & 0.484 \\
0.742 & -0.114 & -0.604
\end{pmatrix}.
\end{align}
The SVDs of these matrices are represented by:
\begin{align}
    \bmW^\pa &= \bm{U}^\pa \bm{S}^\pa (\bm{V}^\pa)^\top  \\
    &\approx 
    \begin{pmatrix}
-0.260 & -0.127 & 0.957 \\
0.850 & -0.501 & 0.165 \\
0.458 & 0.856 & 0.238
\end{pmatrix}
\begin{pmatrix}
1.317 & 0 & 0 \\
0 & 0.959 & 0 \\
0 & 0 & 0.277
\end{pmatrix}
\begin{pmatrix}
0.974 & -0.077 & 0.213 \\
0.043 & -0.859 & -0.510 \\
-0.222 & -0.506 & 0.833
\end{pmatrix}^\top,
\end{align}
and
\begin{align}
\bmW^\pb &= \bm{U}^\pb \bm{S}^\pb (\bm{V}^\pb)^\top  \\
    &\approx \begin{pmatrix}
-0.261 & 0.587 & 0.767 \\
0.850 & -0.237 & 0.470 \\
0.458 & 0.774 & -0.437
\end{pmatrix}
\begin{pmatrix}
1.317 & 0 & 0 \\
0 & 0.958 & 0 \\
0 & 0 & 0.277
\end{pmatrix}
\begin{pmatrix}
0.974 & -0.077 & 0.213 \\
0.188 & -0.249 & -0.950 \\
-0.126 & -0.965 & 0.228
\end{pmatrix}^\top,
\end{align}
respectively.

In this case, the distance between these weights is $\| \bmW^\pa - \bmW^\pb \| \approx 1.236$. On the other hand, if the input vector $\bmz$ is given by $\bmz = k \begin{pmatrix} 0.974 & -0.077 & 0.213 \end{pmatrix}$, where $k$ is an arbitrary (but not too large) real scalar value, then $\| \sigma(\bmW^\pa \bmz) - \sigma(\bmW^\pb \bmz) \| \approx 0$ holds.
\section{Experimental Setup} \label{app:experimental_setup}
This section describes the experimental setup for training neural networks to obtain SGD solutions. We apply Sinkhorn's algorithm for permutation based on WM and STE. Thus, we also provide detailed information on the experimental setup for Sinkhorn's algorithm. Four datasets were used in this study: MNIST~\citep{MNIST}, Fashion-MNIST (FMNIST)~\citep{FMNIST}, CIFAR10~\citep{CIFAR10}, and ImageNet~\citep{ImageNet}.

All experiments were conducted on a Linux workstation with two AMD EPYC 7543 32-Core processors, eight NVIDIA A30 GPUs, and 512 GB of memory. The PyTorch~2.1.0\footnote{\url{https://pytorch.org/}}, PyTorch Lightning~2.1.0\footnote{\url{https://lightning.ai/docs/pytorch/stable/}}, and torchvision~0.16.0\footnote{\url{https://pytorch.org/vision/stable/index.html}} libraries were used for model training and evaluation.

\subsection{Model Training}

\paragraph{MLP on MNIST and FMNIST.} Following the settings in \citep{Ainsworth_ICLR_2023}, we trained a Multi-Layer Perceptron (MLP) with three hidden layers, each comprising 512 units. The hidden layers use the ReLU function as their activation function. For the MNIST and FMNIST datasets, we optimized using the Adam algorithm with a learning rate of $1\times 10^{-3}$. The batch size and maximum number of epochs were set to 512 and 100, respectively.

\paragraph{VGG11 and ResNet20 on CIFAR10.} We utilized the VGG16 and ResNet20 architectures of \citep{Ainsworth_ICLR_2023}. To accomplish Linear Mode Connectivity (LMC), we increased the widths of VGG11 and ResNet20 by factors of 4 and 16, respectively. As described in \citep{Jordan_ICLR_2023}, we used the training dataset to repair the BatchNorm layers in these models during model merging. Optimization was conducted using Adam with a learning rate of $1\times 10^{-3}$. The batch size and maximum number of epochs were set to 512 and 100, respectively. The following data augmentations were performed during training: random $32 \times 32$ pixel crops, and random horizontal flips.

\paragraph{ResNet50 on ImageNet.} ResNet50 models were trained using a training script published on GitHub\footnote{\url{https://github.com/libffcv/ffcv-imagenet/tree/main}} by the FFCV library~\citep{FFCV}. The "rn50\_40\_epochs.yaml" file in the repository was used for the training setup. In the file, we changed \texttt{use\_blurpool} to ``0''. As described in \citep{Jordan_ICLR_2023}, we repaired the BatchNorm layers in these models during model merging by using the training dataset. Since ImageNet is a large dataset, we used 50,000 randomly selected images from the training set to repair the batch normalization layers.

\subsection{Permutation Search}
For permutation search in WM and STE, we employed the method based on Sinkhorn's algorithm as proposed by \cite{Pena_CVPR_2023}. We utilized the implementation provided by the authors in their GitHub repository\footnote{\url{https://github.com/fagp/sinkhorn-rebasin}}. DistL2Loss and MidLoss were used as loss functions for permutation searches corresponding to WM and STE, respectively. Optimization was performed using Adam with a learning rate of 1 for MLP, VGG11, and ResNet20 and 10 for ResNet50, setting the maximum number of epochs to 10 for DistL2Loss and five for MidLoss. For MidLoss, the batch size was set to 512; for DistL2Loss, there was no batch size because the dataset was not used. 100 iterations of parameter updates were performed per epoch for DistL2Loss.

For activation matching (AM), the permutation search is divided for each layer, so the optimal solution can be obtained efficiently. We implemented AM-based permutation search following the GitHub repository released by \cite{Ainsworth_ICLR_2023}\footnote{\url{https://github.com/samuela/git-re-basin}}. In this paper, the \texttt{linear\_sum\_assignment} function of Scipy~\citep{Scipy} was used for the permutation search in AM.

\section{Discussion on Commutativity Property} \label{app:com_pro}

\citet{Zhou_NIPS_2023} show that LMC is satisfied if weak additivity for ReLU activations and commutativity hold. Given two models, $\parama$ and $\paramb$, commutativity between them is satisfied if for all layers $\ell \in [L]$, $\bmW_\ell^\pa \bmz_{\ell-1}^\pa + \bmW_\ell^\pb \bmz_{\ell-1}^\pb = \bmW_\ell^\pa \bmz_{\ell-1}^\pb + \bmW_\ell^\pb \bmz_{\ell-1}^\pa$ holds, where $L$ is the number of layers of the models\footnote{Strictly speaking, given a data distribution $\mathcal{D}$, the commutativity property is satisfied if the equation almost surely holds for $\mathcal{D}$. This definition is equivalent to \cite{Zhou_NIPS_2023}, although it differs slightly.}. The commutativity property can be rewritten as $\forall \ell \in [L]; (\bmW_\ell^\pa - \bmW_\ell^\pb)(\bmz_{\ell-1}^\pa - \bmz_{\ell-1}^\pb)=\bm{0}$. Therefore, \citet{Zhou_NIPS_2023} in Section 5.2 justify the WM-based permutation search method because WM aims to minimize \cref{eq:wm}, which corresponds to the first factor in the equation. However, as shown in our paper, WM only slightly reduces the distance between the two models, contradicting their claim.

Appendix B.5 of \citep{Zhou_NIPS_2023} explains in a different way why the commutativity property is satisfied in WM. Specifically, they consider a stronger form of the commutativity property:
\begin{align} \label{eq:com_dif_form}
\forall \ell \in [L]; \bmW_\ell^\pa \bmz_{\ell-1}^\pa = \bmW_\ell^\pb \bmz_{\ell-1}^\pa \land \bmW_\ell^\pb \bmz_{\ell-1}^\pb = \bmW_\ell^\pa \bmz_{\ell-1}^\pb.
\end{align}
To ensure this stronger form, we need to find $\bmP_{\ell}$ and $\bmP_{\ell-1}$ such that:
\begin{align} \label{eq:strong_com}
(\bmW_{\ell}^\pa - \bmP_{\ell} \bmW_{\ell}^\pb \bmP^\top_{\ell-1})\bmz_{\ell-1}^\pa = \bm{0} \land (\bmP_{\ell} \bmW_{\ell}^\pb \bmP^\top_{\ell-1} - \bmW_{\ell}^\pa)\bmP_{\ell-1} \bmz_{\ell-1}^\pb = \bm{0}.
\end{align}
They argue that the commutativity property easily holds because it is easy to find permutation matrices $\bmP_\ell$ and $\bmP_{\ell-1}$ that satisfy \cref{eq:strong_com} due to the small actual dimension of the hidden layer inputs $\bmz_{\ell-1}^\pa$ and $\bmz_{\ell-1}^\pb$ (i.e., these vectors are biased in a particular direction).

However, several points need to be addressed regarding this explanation. First, if \cref{eq:com_dif_form} holds, then LMC is satisfied without any assumptions, such as the commutativity property or weak additivity of ReLU activations. Since this equation must hold for all layers, it must also hold for the input layer where $\bmz_{\ell-1}^\pa = \bmz_{\ell-1}^\pb = \bmx$. In that case, the outputs of the input layers are equivalent between the two models. The same holds for subsequent layers, so the outputs of the two models are the same in all hidden layers. Therefore, the outputs of the hidden layers of the merged model must also be identical to those of the pre-merged models in all layers. Thus, LMC obviously holds. In other words, if the reason of the establishment of LMC is that \cref{eq:com_dif_form} holds, then the essential reason for the establishment of LMC is that WM makes the outputs of the hidden layers of the two models close, suggesting that our argument in this paper is more fundamental in establishing LMC.

Second, the small actual dimension of the hidden layer inputs $\bmz_{\ell-1}^\pa$ and $\bmz_{\ell-1}^\pb$ does not necessarily mean that \cref{eq:strong_com} is easier to satisfy by finding permutations that minimize \cref{eq:wm}. As shown in \cref{subsec:experint}, WM preferentially aligns the directions of singular vectors with large singular values, while other singular vectors are difficult to align. If the hidden layer inputs $\bmz_{\ell-1}^\pa$ and $\bmz_{\ell-1}^\pb$ are not oriented in the same directions as the right singular vectors with large singular values, then WM will not help satisfy \cref{eq:strong_com}. \citet{Zhou_NIPS_2023} did not mention this second point. On the other hand, we analyzed this point in \cref{subsec:sing_vector_more_important}.

\section{Convolutional Layers} \label{app:wm_conv}
This section discusses a theorem similar to \cref{thm:main} for convolutional neural networks (CNNs).

\subsection{Notation}
We introduce the notation used in the following sections. Each element of a tensor is specified by a simple italic variable with subscripts. For example, for a third-order tensor $\bmX$, its $i, j, k$-th component is denoted by $X_{i, j, k}$. We also use Python-like slice notation. For example, $\bmX_{1,:}$ denotes the first row of the matrix $\bmX$. For a complex matrix $\bmX$, let $\bmX^* = \overline{\bmX}^\top$ be its unitary transpose, where $\overline{\bmX}$ denotes the complex conjugate of $\bmX$.

\subsection{Matrix Representation of Convolutional Layer}
This subsection introduces the matrix representation of a convolutional layer. Let $\bmX\in \mathbb{R}^{m\times n \times n}$ and $\bmY \in \mathbb{R}^{m\times n \times n}$ be the input and the output of the $\ell$-th convolutional layer, respectively. Here, $m$ denotes the number of input and output channels and $n$ denotes the size of the height and width of the input. For simplicity, we assume that the numbers of output channels and input channels are identical, as well as the sizes of the height and width of the input, although our analysis is applicable even when they are not. Let $\bmK\in \mathbb{R}^{n\times n\times m\times m}$ be the kernel of the $\ell$-th layer. Then, for the $c, r, i$-th element of the output $\bm{Y}$ is given by
\begin{align}
    Y_{c,r,i} = \sum_{d\in [m], p\in[n], q\in [n]} X_{d, r+p, i+q} K_{p, q, c, d}.
\end{align}
There exists a matrix $\bmM$ such that $\mathrm{vec}(\bmY) = \bmM \mathrm{vec}(\bmX)$ holds~\citep{Sedghi_ICLR_2018,Jain_FDIP_1989,Deep_Learning}, where
\begin{align} \label{eq:matrix_form_cnn}
    \bmM =  \begin{pmatrix} 
\bmB_{1,1} & \bmB_{1,2} & \dots & \bmB_{1,m} \\
\bmB_{2,1} & \bmB_{2,2} & \dots & \bmB_{2,m} \\
\vdots & \vdots & \ddots & \vdots \\
\bmB_{m,1} & \bmB_{m,2} & \dots & \bmB_{m,m}
\end{pmatrix}.
\end{align}
Here, for all $c, d \in [m]$, $\bmB_{c,d}$ is a doubly circulant matrix  defined by
\begin{align}
    \bmB_{c,d}= 
    \begin{pmatrix}
\mathrm{circ}(K_{1,:,c,d}) & \mathrm{circ}(K_{2,:,c,d}) & \dots & \mathrm{circ}(K_{n,:,c,d}) \\
\mathrm{circ}(K_{n,:,c,d}) & \mathrm{circ}(K_{1,:,c,d}) & \dots & \mathrm{circ}(K_{n-1,:,c,d}) \\
\vdots & \vdots & \ddots & \vdots \\
\mathrm{circ}(K_{2,:,c,d}) & \mathrm{circ}(K_{3,:,c,d}) & \dots & \mathrm{circ}(K_{1,:,c,d})
\end{pmatrix},
\end{align}
where $\mathrm{circ}$ is a function to generate a circulant matrix from a given vector. For example, given a vector $\bma = (a_1, a_2, \dots, a_3)$, the circulant matrix generated by $\bma$ is given by $\mathrm{circ}(\bma) = \begin{pmatrix} a_1 &a_2 &a_3 \\ a_3 & a_1 & a_2 \\ a_2 & a_3 & a_1\end{pmatrix}$.

\subsection{Singular Value Decomposition and Weight Matching of Convolutional Layers}
Since \cref{eq:matrix_form_cnn} represents the matrix form of the convolutional layer, we can reach a conclusion similar to \cref{thm:main} by performing a singular value decomposition (SVD) on it. However, this matrix is very large, with a size of $mn^2 \times mn^2$, making direct SVD impractical. Therefore, we decompose it into a more SVD-friendly form using a Fourier transform. Using the imaginary unit as $\eta = \sqrt{-1}$, and setting $\omega = e^{-2\pi \eta/n}$, a one-dimensional Fourier transform matrix $\bmF$ is defined by $F_{i,j} = (\omega^{(i-1)(j-1)})_{i,j}$.\footnote{Usually, the alphabet letters $i$ or $j$ are used for the imaginary unit, but since they are used as indices here, we use $\eta$.} A matrix for the two-dimensional Fourier transform can be defined as $\bmQ = (\bmF \otimes \bmF) /n$. Here, $\otimes$ denotes the Kronecker product. By using this two-dimensional Fourier transform matrix $\bmQ$, the matrix $\bmM$ can be decomposed as follows:
\begin{align}
    \bmM = (\bmI_m \otimes \bmQ)^* \bmL (\bmI_m \otimes \bmQ),
\end{align}
where $\bmI_m$ denotes the identity matrix of size $m \times m$. We then have
\begin{align}
\bmL = \begin{pmatrix}
\bmD_{1,1} & \bmD_{1,2} & \dots & \bmD_{1,m} \\
\bmD_{2,1} & \bmD_{2,2} & \dots & \bmD_{2,m} \\
\vdots & \vdots & \ddots & \vdots \\
\bmD_{m,1} & \bmD_{m,2} & \dots & \bmD_{m,m}
\end{pmatrix}.
\end{align}
Here, for all $c, d\in [m]$, $\bmD_{c,d} = \bmQ \bmB_{c,d} \bmQ^*$ is a complex diagonal matrix~\citep{Sedghi_ICLR_2018}. Let $\bmG_{:,:,w}$ be a matrix formed by extracting the $w$-th diagonal element of each diagonal matrix $\bmD_{c,d}$ and arranging them (i.e., $G_{c, d, w} = (\bmD_{c,d})_{w,w}$). Then, the following theorem holds:
\begin{theorem}[SVD of convolutional layer] \label{thm:svd_conv}
Let $s_{w, i}$, $\bmu_{w, i}$, and $\bmv_{w, i}$ be the $i$-th singular value, left singular vector, and right singular vector of $\bmG_{:, :, w}$, respectively. Then, the matrix $\bmM$ representing the convolutional layer can be decomposed as follows:
\begin{align}
    \bmM = \sum_{w, i} (\bmu_{w, i} \otimes \bmQ^* \bm{e}_w) s_{w, i} (\bmv_{w, i} \otimes \bmQ^* \bm{e}_w)^*,
\end{align}
where $\bme_w$ represents the orthonormal basis in Euclidean space $\mathbb{R}^{n^2}$, and $s_{w, i}$, $\bmu_{w, i} \otimes \bmQ^* \bme_w$, and $\bmv_{w, i} \otimes \bmQ^* \bme_w$ are the singular value, left singular vector, and right singular vector of $\bmM$, respectively. 
\end{theorem}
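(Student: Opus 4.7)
The plan is to peel $\bmM$ back layer by layer: use $\bmM = (\bmI_m \otimes \bmQ)^* \bmL (\bmI_m \otimes \bmQ)$ to reduce the problem to an SVD of the frequency-domain matrix $\bmL$, then exploit the fact that every block $\bmD_{c,d}$ is diagonal to rewrite $\bmL$ as a Kronecker direct sum of the channel-wise matrices $\bmG_{:,:,w}$, and finally lift the per-frequency SVDs back to $\bmM$ using standard Kronecker identities.

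First, I would establish the identity
\begin{equation*}
\bmL = \sum_{w=1}^{n^2} \bmG_{:,:,w} \otimes (\bme_w \bme_w^*).
\end{equation*}
This follows by entry-wise comparison: the $((c, w'), (d, w''))$ entry of $\bmL$ is $(\bmD_{c,d})_{w',w''} = \delta_{w' w''} G_{c,d,w'}$, which matches the right-hand side. Next, applying the SVD $\bmG_{:,:,w} = \sum_i s_{w,i} \bmu_{w,i} \bmv_{w,i}^*$ term by term together with the Kronecker identity $(\bma \bmc^*) \otimes (\bmb \bmd^*) = (\bma \otimes \bmb)(\bmc \otimes \bmd)^*$, I would obtain
\begin{equation*}
\bmL = \sum_{w,i} s_{w,i} (\bmu_{w,i} \otimes \bme_w)(\bmv_{w,i} \otimes \bme_w)^*.
\end{equation*}

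I would then conjugate by $\bmI_m \otimes \bmQ$ using $(\bmA \otimes \bmB)(\bmC \otimes \bmD) = (\bmA\bmC) \otimes (\bmB\bmD)$, which gives $(\bmI_m \otimes \bmQ)^*(\bmu_{w,i} \otimes \bme_w) = \bmu_{w,i} \otimes \bmQ^* \bme_w$ (and similarly for the right factor); this yields the claimed decomposition. To confirm this is genuinely an SVD of $\bmM$ rather than just an outer-product expansion, I would check orthonormality:
\begin{equation*}
(\bmu_{w,i} \otimes \bmQ^* \bme_w)^* (\bmu_{w',i'} \otimes \bmQ^* \bme_{w'}) = (\bmu_{w,i}^* \bmu_{w',i'})\,(\bme_w^* \bmQ \bmQ^* \bme_{w'}) = \delta_{w w'}\delta_{i i'},
\end{equation*}
using unitarity of $\bmQ$ and the orthonormality of the per-frequency SVD vectors. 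The same argument handles the right singular vectors, and combined with the nonnegativity of the $s_{w,i}$ this verifies the SVD.

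The main obstacle I anticipate is the first step, namely recognising that an $m\times m$ block matrix whose blocks are themselves diagonal $n^2 \times n^2$ matrices is, up to a coordinate re-shuffle, a block-diagonal matrix built from $n^2$ independent $m\times m$ blocks $\bmG_{:,:,w}$. Once this reindexing is in hand, everything else reduces to routine Kronecker algebra together with the unitarity of $\bmQ$.
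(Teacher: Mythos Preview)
Your proposal is correct and follows essentially the same route as the paper: decompose $\bmL = \sum_{w} \bmG_{:,:,w} \otimes (\bme_w \bme_w^\top)$, plug in the per-frequency SVDs, then conjugate by $\bmI_m \otimes \bmQ$ using the mixed-product Kronecker identity. If anything, your explicit orthonormality check for the resulting singular vectors is a nice addition that the paper's proof leaves implicit.
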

The proof is shown in \cref{app:proof_svd_conv}. From this theorem, the following theorem can be proved:
\begin{theorem} \label{thm:wm_conv}
    Let $\bmM^\pa$ and $\bmM^\pb$ be the matrix representations of convolutional layers of two CNNs. From \cref{thm:svd_conv}, their SVDs are given by $\bmM^\pa = \sum_{w, i} (\bmu_{w, i}^\pa \otimes \bmQ^* \bme_w) s_{w, i}^\pa (\bmv_{w, i}^\pa \otimes \bmQ^* \bme_w)^*$ and $\bmM^\pb = \sum_{w, i} (\bmu_{w, i}^\pb \otimes \bmQ^* \bme_w) s_{w, i}^\pb (\bmv_{w, i}^\pb \otimes \bmQ^* \bme_w)^*$, respectively. Then, the WM between $\bmM^\pa$ and $\bmM^\pb$ is equivalent to finding permutation matrices $\bmP_\ell$ and $\bmP_{\ell-1}$ such that
    \begin{align}
        \argmax_{\bmP_\ell, \bmP_{\ell-1}} \Re\sum_{w, i, j} s_{w,i}^\pa s_{w,j}^\pb \overline{(\bmu_{w,i}^\pa)^* (\bmP_\ell \bmu_{w,j}^\pb)} (\bmv_{w,i}^\pa)^* (\bmP_{\ell-1} \bmv_{w,j}^\pb),
    \end{align}
    where $\Re z$ is the real part of $z$ for a complex number $z$.
\end{theorem}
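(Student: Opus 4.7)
The plan is to mirror the proof of \cref{thm:main} but carried out in the joint Fourier/SVD basis supplied by \cref{thm:svd_conv}. First I would formalize how a channel permutation acts on the matrix form of the convolution: since permuting the $m$ channels reorders the $m \times m$ grid of $n^2 \times n^2$ blocks in \cref{eq:matrix_form_cnn} without touching spatial indices, the permuted layer is $(\bmP_\ell \otimes \bmI_{n^2}) \bmM^\pb (\bmP_{\ell-1} \otimes \bmI_{n^2})^\top$. The WM objective for this layer therefore becomes $\lVert \bmM^\pa - (\bmP_\ell \otimes \bmI)\bmM^\pb (\bmP_{\ell-1} \otimes \bmI)^\top \rVert^2$.

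Next I would expand the squared Frobenius norm. The Kronecker products $\bmP_\ell \otimes \bmI$ and $\bmP_{\ell-1} \otimes \bmI$ are themselves orthogonal, so they are norm-preserving, and the self-norm terms $\lVert \bmM^\pa \rVert^2$ and $\lVert \bmM^\pb \rVert^2$ do not depend on the choice of permutations. Minimizing the distance thus reduces to maximizing the Frobenius inner product $\langle \bmM^\pa, (\bmP_\ell \otimes \bmI)\bmM^\pb (\bmP_{\ell-1} \otimes \bmI)^\top \rangle$. This quantity is real because both matrices are real; but once the complex-valued SVD is substituted, the individual summands are complex and the $\Re$ in the statement is what combines them.

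I would then substitute the SVDs from \cref{thm:svd_conv} and use the mixed-product property to push the permutations onto the singular vectors: $(\bmP_\ell \otimes \bmI)(\bmu_{w,j}^\pb \otimes \bmQ^* \bme_w) = (\bmP_\ell \bmu_{w,j}^\pb) \otimes \bmQ^* \bme_w$, and similarly for $\bmP_{\ell-1}$ on the right factor. The Frobenius inner product of two rank-one matrices $\bma\bmc^*$ and $\bmb\bmd^*$ equals $(\bma^* \bmb)(\bmd^* \bmc)$, and for tensor-product vectors it splits further as $\langle \bmx_1 \otimes \bmx_2, \bmy_1 \otimes \bmy_2 \rangle = (\bmx_1^* \bmy_1)(\bmx_2^* \bmy_2)$. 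The Fourier factor then gives $\bme_{w'}^* \bmQ \bmQ^* \bme_w = \delta_{w,w'}$, since $\bmQ$ is unitary (a consequence of $\bmF\bmF^* = n\bmI$ and $\bmQ = (\bmF \otimes \bmF)/n$). This orthogonality annihilates every cross term between distinct frequencies and leaves a single sum
\[ \sum_{w, i, j} s_{w,i}^\pa s_{w,j}^\pb (\bmu_{w,i}^\pa)^* (\bmP_\ell \bmu_{w,j}^\pb) \, \overline{(\bmv_{w,i}^\pa)^* (\bmP_{\ell-1} \bmv_{w,j}^\pb)}. \]
Taking the real part and applying $\Re(z \bar w) = \Re(\bar z w)$ shifts the conjugation from the right-singular-vector factor onto the left-singular-vector factor, producing exactly the expression claimed in the theorem.

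The mechanical portions are the Kronecker-product identities and the diagonal collapse from unitarity, both of which are routine. The main delicate point, and essentially the only non-mechanical step, is bookkeeping the complex conjugates: because $\bmQ$ contains roots of unity, the singular vectors in \cref{thm:svd_conv} are genuinely complex, so one must carefully track which factor acquires a bar from the unitary transpose and verify that the final symmetrization under $\Re$ coincides with the form written in the theorem rather than its complex conjugate. A small prerequisite to record is the unitarity of $\bmQ$, which justifies both the orthogonality $\bme_{w'}^* \bmQ \bmQ^* \bme_w = \delta_{w,w'}$ used above and the norm-invariance step.
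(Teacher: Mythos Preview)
Your proposal is correct and follows essentially the same route as the paper: expand the squared Frobenius norm of $\bmM^\pa - (\bmP_\ell \otimes \bmI_{n^2})\bmM^\pb(\bmP_{\ell-1} \otimes \bmI_{n^2})^\top$, discard the permutation-independent self terms, substitute the Fourier-block SVDs from \cref{thm:svd_conv}, collapse the cross-frequency terms via the orthogonality $\bme_{w'}^\top\bme_w=\delta_{w,w'}$ (equivalently, unitarity of $\bmQ$), and take the real part with the conjugate shifted onto the $\bmu$-factor. The only ingredient the paper adds that you leave implicit is a short preliminary lemma (\cref{lem:L2norm_mat}) showing $\|\bmM-\bmM'\|^2=n^2\|\bmK-\bmK'\|^2$, which justifies passing from the kernel-level WM objective to the one on $\bmM^\pa,\bmM^\pb$.
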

The proof is shown in \cref{app:proof_wm_conv}. Similar to the case of MLP (\cref{thm:main}), \cref{thm:wm_conv} indicates that WM has the effect of aligning the directions of the corresponding singular vectors in convolutional layers.

\section{Proofs} \label{app:proofs}
\subsection{Proof of Theorem~\ref{thm:taylor}} \label{app:proof_taylor}
\begin{proof}
From the assumption and Taylor theorem centered at $\parama$, we have
\begin{align}
    \mathcal{L}(\paramc) &= \mathcal{L}(\parama) + (\paramc - \parama) \nabla \mathcal{L}(\parama) + \frac{1}{2} (\paramc - \parama)^\top \bmH_a (\paramc - \parama) + O(\|\paramc - \parama\|^3) \\
    &= \mathcal{L}(\parama) + (1-\lambda)\beta \bm{\mu} \nabla \mathcal{L}(\parama) + \frac{1}{2}(1-\lambda)^2\beta^2 \bm{\mu}^\top \bm{H}_a \bm{\mu} + O(\beta^3).
\end{align}
Similarly, using Taylor theorem centered at $\pi(\paramb)$, we get
\begin{align}
    \mathcal{L}(\paramc) &= \mathcal{L}(\pi(\paramb)) + (\paramc - \pi(\paramb)) \nabla \mathcal{L}(\pi(\paramb)) + \frac{1}{2} (\paramc - \pi(\paramb))^\top \bmH_a (\paramc - \pi(\paramb)) + O(\|\paramc - \pi(\paramb)\|^3) \\
    &= \mathcal{L}(\pi(\paramb)) - \lambda \beta \bm{\mu} \nabla \mathcal{L}(\pi(\paramb)) + \frac{1}{2}\lambda ^2\beta^2 \bm{\mu}^\top \bm{H}_a \bm{\mu} + O(\beta^3).
\end{align}
Combining these equations, the barrier can be obtained as
\begin{align} 
    B(\parama, \pi(\paramb)) &= \max_\lambda \left(\mathcal{L}(\paramc) - \lambda \mathcal{L}(\parama)  - (1 - \lambda) \mathcal{L}(\pi(\paramb))\right) \\
    &= \max_\lambda \left(\lambda (\mathcal{L}(\paramc) - \mathcal{L}(\parama)) + (1 - \lambda) (\mathcal{L}(\paramc) - \mathcal{L}(\pi(\paramb)))\right) \\
    &= \max_\lambda \Big(\beta \lambda (1-\lambda) \bm{\mu}^\top (\nabla \mathcal{L}(\parama) - \nabla \mathcal{L}(\pi(\paramb))) \\
    & \hspace{5em}+ \frac{1}{2} \beta^2\lambda (1-\lambda) \bm{\mu}^\top \left( (1-\lambda) \bmH_a + \lambda \bmH_b \right) \bm{\mu} \Big) + O(\beta^3).
\end{align}
\end{proof}

\subsection{Proof of Theorem~\ref{thm:main}} \label{app:proof_main}
\begin{proof}
Consider the $L^2$ norm of the $\ell$-th layer $\| \bmW_\ell^{(a)} - \bmP_{\ell} \bmW_{\ell}^{(b)} \bmP_{\ell-1}^\top \|^2$. Using the fact that the $L^2$ norm can be rewritten using trace, we have
\begin{align} 
    \| \bmW_\ell^{(a)} - \bmP_{\ell} \bmW_{\ell}^{(b)} \bmP_{\ell-1}^\top \|^2  &= \tr\left( (\bmW_\ell^{(a)} - \bmP_{\ell} \bmW_{\ell}^{(b)} \bmP_{\ell-1}^\top) (\bmW_\ell^{(a)} - \bmP_{\ell} \bmW_{\ell}^{(b)} \bmP_{\ell-1}^\top)^\top\right) \\
    &= \tr\left( \bmW_\ell^\pa (\bmW_\ell^\pa)^\top \right) + \tr\left(\bmW_\ell^\pb (\bmW_\ell^\pb)^\top \right) \\
    &\hspace{9em}- 2\tr\left( \bmP_{\ell} \bmW_{\ell}^{(b)} \bmP_{\ell-1}^\top (\bmW_\ell^\pa)^\top \right). \label{eq:totyu}
\end{align}
We focus on the last term because only it depends on the permutation matrices. The SVDs of the weights $\bmW^\pa_\ell$ and $\bmW^\pb_\ell$ are denoted by $\bmW_\ell^{(a)} = \bmU_\ell^\pa \bmS_\ell^\pa (\bmV_\ell^\pa)^\top = \sum_{i} \bmu_{\ell,i}^\pa s_{\ell, i}^\pa (\bmv_{\ell,i}^\pa)^\top$ and $\bmW_\ell^\pb = \bmU_\ell^\pb \bmS_\ell^\pb (\bmV_\ell^\pb)^\top = \sum_{j} \bmu_{\ell,j}^\pb s_{\ell, j}^\pb (\bmv_{\ell,j}^\pb)^\top$, respectively. Thus, the last term of \cref{eq:totyu} can be rewritten as
\begin{align}
    - 2\tr\left( \bmP_{\ell} \bmW_{\ell}^{(b)} \bmP_{\ell-1}^\top (\bmW_\ell^\pa)^\top \right) = -2 \sum_{i,j}  s_{\ell, i}^\pa  s_{\ell, j}^\pb (\bmu_{\ell,i}^\pa)^\top (\bmP_\ell \bmu_{\ell,j}^\pb) (\bmv_{\ell,i}^\pa)^\top (\bmP_{\ell-1} \bmv_{\ell,j}^\pb).
\end{align}
Therefore, \cref{eq:wm} equals
\begin{align}
    \argmin_\pi \| \parama - \pi(\paramb) \|^2 = \argmax_{\pi = (\bmP_\ell)_\ell} \sum_{\ell, i,j}  s_{\ell, i}^\pa  s_{\ell, j}^\pb (\bmu_{\ell,i}^\pa)^\top (\bmP_\ell \bmu_{\ell,j}^\pb) (\bmv_{\ell,i}^\pa)^\top (\bmP_{\ell-1} \bmv_{\ell,j}^\pb),
\end{align}
which completes the proof.
\end{proof}

\subsection{Proof of Theorem~\ref{thm:svd_conv}} \label{app:proof_svd_conv}
\begin{proof}
Note that the matrix $\bmL$ can be decomposed to $\bmL = \sum_{w} \bmG_{:,:,w} \otimes (\bm{e}_w \bm{e}_w^\top)$ by using the tensor $\bmG$, where $\bm{e}_w$ is the orthonormal basis in Eucrlidean space $\bmR^{n^2}$. Thus, the SVD of $\bmL$ is given by
\begin{align}
    \bmL &= \sum_w \left(\sum_i \bmu_{w,i} s_{w, i} \bmv_{w, i}^* \right)\otimes (\bm{e}_w \bm{e}_w^\top) \\
    &= \sum_w \sum_i s_{w, i} (\bmu_{w,i} \bmv_{w, i}^*) \otimes (\bm{e}_w \bm{e}_w^\top) \\
    &= \sum_w \sum_i s_{w, i} (\bmu_{w,i} \otimes \bm{e}_w)(\bmv_{w, i}\otimes \bm{e}_w)^*.
\end{align}
Thus, the SVD of $\bmM$ is also given by
\begin{align}
    \bmM &= \sum_w \sum_i s_{w, i} (\bm{I}_m \otimes \bmQ)^* (\bmu_{w,i} \otimes \bm{e}_w)(\bmv_{w, i}\otimes \bm{e}_w)^*(\bm{I}_m \otimes \bmQ) \\
    &= \sum_w \sum_i s_{w, i} (\bmu_{w,i} \otimes \bmQ^* \bm{e}_w)(\bmv_{w, i}^*\otimes \bm{e}_w^\top \bmQ),
\end{align}
which completes the proof.
\end{proof}
\subsection{Proof of Theorem~\ref{thm:wm_conv}} \label{app:proof_wm_conv}
Before proving \cref{thm:wm_conv}, we first prove the following lemma:
\begin{lemma} \label{lem:L2norm_mat}
    Let $\bmK$ and $\bmK'$ be kernels of convolutional layers. We have $\|\bmM - \bmM'\|^2 = n^2 \|\bmK - \bmK'\|^2$, where $\bmM$ and $\bmM'$ are the matrix representations of $\bmK$ and $\bmK'$, respectively.
\end{lemma}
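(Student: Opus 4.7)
The plan is to reduce the claim to a direct counting argument about how often each kernel entry appears inside the matrix representation. Because the map $\bmK \mapsto \bmM$ is linear, the matrix representation of $\bmK - \bmK'$ is exactly $\bmM - \bmM'$, so it suffices to prove the identity $\|\bmM\|^2 = n^2\|\bmK\|^2$ for a single kernel $\bmK$ with matrix representation $\bmM$; then applying it to $\bmK - \bmK'$ yields the lemma.

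First I would use the block decomposition from \cref{eq:matrix_form_cnn}, which gives
\begin{align}
\|\bmM\|^2 = \sum_{c,d \in [m]} \|\bmB_{c,d}\|^2,
\end{align}
thereby reducing the problem to a per-channel-pair computation. Fixing $c,d$, the block $\bmB_{c,d}$ is a doubly circulant matrix built from the $n \times n$ slice $\bmK_{:,:,c,d}$. The key step is then to show that in $\bmB_{c,d}$ each scalar entry $K_{p,q,c,d}$ appears exactly $n^2$ times. This follows from the two layers of circulant structure: the inner $\mathrm{circ}(K_{p,:,c,d})$ blocks are ordinary circulant matrices of size $n\times n$, so each entry of the row $K_{p,:,c,d}$ shows up exactly $n$ times inside such a block; then the outer circulant arrangement of these blocks (indexed by $p$) places each inner block in exactly $n$ positions along the block-diagonals. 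Multiplying these counts gives $n^2$.

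Given this counting, the per-block Frobenius norm is
\begin{align}
\|\bmB_{c,d}\|^2 = \sum_{p,q \in [n]} n^2 \, K_{p,q,c,d}^2 = n^2 \|\bmK_{:,:,c,d}\|^2.
\end{align}
Summing over all $c,d \in [m]$ yields $\|\bmM\|^2 = n^2 \sum_{c,d,p,q} K_{p,q,c,d}^2 = n^2 \|\bmK\|^2$. Applying this identity to $\bmK - \bmK'$ and its representation $\bmM - \bmM'$ completes the proof.

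The only nontrivial step is the counting of multiplicities in the doubly circulant block, and even that is routine once one writes out the definition of $\bmB_{c,d}$ explicitly; the factor of $n^2$ is then inevitable from the two nested circulant shifts. I do not anticipate any real obstacle, only the need to be careful about indexing when making the counting argument precise.
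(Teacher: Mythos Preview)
Your proposal is correct and follows essentially the same approach as the paper: both arguments use the block decomposition $\|\bmM - \bmM'\|^2 = \sum_{c,d}\|\bmB_{c,d} - \bmB'_{c,d}\|^2$ and then exploit the two levels of circulant structure to pick up one factor of $n$ from the outer block-circulant arrangement and another factor of $n$ from each inner $\mathrm{circ}$ block. The only cosmetic difference is that you first invoke linearity to reduce to a single kernel, whereas the paper works directly with the difference $\bmK - \bmK'$ throughout; the counting is otherwise identical.
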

\begin{proof}
    From the definition of $\bmM$ and $\bmM'$, we have
    \begin{align}
        \bmM = \begin{pmatrix} 
\bmB_{1,1} & \bmB_{1,2} & \dots & \bmB_{1,m} \\
\bmB_{2,1} & \bmB_{2,2} & \dots & \bmB_{2,m} \\
\vdots & \vdots & \ddots & \vdots \\
\bmB_{m,1} & \bmB_{m,2} & \dots & \bmB_{m,m}
\end{pmatrix},~~
        \bmM' = \begin{pmatrix} 
\bmB'_{1,1} & \bmB'_{1,2} & \dots & \bmB'_{1,m} \\
\bmB'_{2,1} & \bmB'_{2,2} & \dots & \bmB'_{2,m} \\
\vdots & \vdots & \ddots & \vdots \\
\bmB'_{m,1} & \bmB'_{m,2} & \dots & \bmB'_{m,m}
\end{pmatrix},
    \end{align}
    where for any $c, d \in [m]$, $\bmB_{c, d}$ and $\bmB'_{c, d}$ denote the doubly circulant matrices obtained from the kernels $\bmK$ and $\bmK'$. Thus, $\|\bmM - \bmM'\|^2 = \sum_{c, d} \|\bmB_{c, d} - \bmB'_{c, d}\|^2 = n\sum_{c, d} \sum_i \|\mathrm{circ}(K_{i,:,c,d}) - \mathrm{circ}(K'_{i,:,c,d})\|^2 = n^2 \sum_{c,d} \sum_{i, j} (K_{i,j,c,d} - K'_{i,j,c,d})^2 = n^2 \|\bmK - \bmK'\|^2$ holds.
\end{proof}
\begin{proof}[Proof of \cref{thm:wm_conv}]
In convolutional layers, permutation matrices permute the input and output channels of the kernel. Therefore, the permutation matrices $\bmP_\ell$ and $\bmP_{\ell-1}$ corresponding to the input and output are $m \times m$ matrices. By using these matrices, the permutation of the matrix representation of the convolutional layer of the model $\paramb$ is denoted by $(\bmP_\ell \otimes \bmI_m) \bmM^\pb (\bmP_{\ell-1} \otimes \bmI_m)^\top$. \cref{lem:L2norm_mat} indicates that finding the permutation matrices that minimize the $L^2$ distance between the two kernels is equivalent to minimizing $\| \bmM^\pa -  (\bmP_\ell \otimes \bmI_m) \bmM^\pb (\bmP_{\ell-1} \otimes \bmI_m)^\top\|$. Therefore, we have
\begin{align} 
    \lefteqn{\| \bmM^\pa -  (\bmP_\ell \otimes \bmI_m) \bmM^\pb (\bmP_{\ell-1} \otimes \bmI_m)^\top\|^2} \\
    &= \tr\left( \left(\bmM^\pa -  (\bmP_\ell \otimes \bmI_m) \bmM^\pb (\bmP_{\ell-1} \otimes \bmI_m)^\top\right) \left(\bmM^\pa -  (\bmP_\ell \otimes \bmI_m) \bmM^\pb (\bmP_{\ell-1} \otimes \bmI_m)^\top\right)^* \right) \\
    &= \tr\left( \bmM^\pa (\bmM^\pa)^\top \right) + \tr \left( \bmM^\pb (\bmM^\pb)^\top \right) - \tr\left( \bmM^\pa \left((\bmP_\ell \otimes \bmI_m) \bmM^\pb (\bmP_{\ell-1} \otimes \bmI_m)^\top\right)^*  \right) \\
    &~~~~~~~~~~~~~~~~~~~~~~~~- \tr\left( (\bmP_\ell \otimes \bmI_m) \bmM^\pb (\bmP_{\ell-1} \otimes \bmI_m)^\top (\bmM^\pa)^*  \right). \label{eq:totyu2}
\end{align}
In \cref{eq:totyu2}, the permutation matrices $\bmP_\ell$ and $\bmP_{\ell-1}$ are only related to the last two terms. Therefore, we focus only on them. By using some properties of trace, we have
\begin{align}
    \lefteqn{- \tr\left( \bmM^\pa \left((\bmP_\ell \otimes \bmI_m) \bmM^\pb (\bmP_{\ell-1} \otimes \bmI_m)^\top\right)^*  \right) - \tr\left( (\bmP_\ell \otimes \bmI_m) \bmM^\pb (\bmP_{\ell-1} \otimes \bmI_m)^\top (\bmM^\pa)^* \right)} \\
    &= - \tr\left( \bmM^\pa \left((\bmP_\ell \otimes \bmI_m) \bmM^\pb (\bmP_{\ell-1} \otimes \bmI_m)^\top\right)^*  \right) - \tr\left( \left(\bmM^\pa \left((\bmP_\ell \otimes \bmI_m) \bmM^\pb (\bmP_{\ell-1} \otimes \bmI_m)^\top\right)^*\right)^*\right) \\
    &= - \tr\left( \bmM^\pa \left((\bmP_\ell \otimes \bmI_m) \bmM^\pb (\bmP_{\ell-1} \otimes \bmI_m)^\top\right)^*  \right) - \overline{\tr\left( \bmM^\pa \left((\bmP_\ell \otimes \bmI_m) \bmM^\pb (\bmP_{\ell-1} \otimes \bmI_m)^\top\right)^*\right)}\\
    &= - 2\Re\tr\left( \bmM^\pa \left((\bmP_\ell \otimes \bmI_m) \bmM^\pb (\bmP_{\ell-1} \otimes \bmI_m)^\top\right)^*  \right).
\end{align}
Here, from \cref{thm:svd_conv},
\begin{align}
    \bmM^\pa &= \sum_{w, i} (\bmu_{w, i}^\pa \otimes \bmQ^* \bme_w) s_{w, i}^\pa (\bmv_{w, i}^\pa \otimes \bmQ^* \bme_w)^*  \\
    &= \sum_{w, i} s_{w, i}^\pa (\bmu_{w, i}^\pa (\bmv_{w, i}^\pa)^* \otimes \bmQ^* \bme_w \bme_w^\top \bmQ) \\
    &= \sum_{w} (\bmC_w^\pa \otimes \bmQ^* \bme_w \bme_w^\top \bmQ), \label{eq:Ma}
\end{align}
and
\begin{align}
    (\bmP_\ell \otimes \bmI_m) \bmM^\pb (\bmP_{\ell-1} \otimes \bmI_m)^\top &= \sum_{w, i}  s_{w, i}^\pb(\bmP_\ell \otimes \bmI_m)(\bmu_{w, i}^\pb \otimes \bmQ^* \bme_w) (\bmv_{w, i}^\pb \otimes \bmQ^* \bme_w)^* (\bmP_{\ell-1} \otimes \bmI_m)^\top \\
    &= \sum_{w, i} s_{w, i}^\pb ( \bmP_\ell\bmu_{w, i}^\pb (\bmP_{\ell-1}\bmv_{w, i}^\pb)^* \otimes \bmQ^* \bme_w \bme_w^\top \bmQ) \\
    &= \sum_{w} (\bmC_w^\pb \otimes \bmQ^* \bme_w \bme_w^\top \bmQ) \label{eq:Mb}
\end{align}
hold, where we let $\bmC_w^\pa = \sum_{i} s_{w, i}^\pa \bmu_{w, i}^\pa (\bmv_{w, i}^\pa)^*$ and $\bmC_w^\pb = \sum_{i} s_{w, i}^\pb (\bmP_\ell \bmu_{w, i}^\pa) (\bmP_{\ell-1}\bmv_{w, i}^\pa)^*$. From \cref{eq:Ma} and \cref{eq:Mb}, we have
\begin{align}
   \lefteqn{- 2 \Re \tr\left( \bmM^\pa \left((\bmP_\ell \otimes \bmI_m) \bmM^\pb (\bmP_{\ell-1} \otimes \bmI_m)^\top\right)^* \right)} ~~~~~ \\
   &= -2 \Re\tr\left( \sum_{w} (\bmC_w^\pa \otimes \bmQ^* \bme_w \bme_w^\top \bmQ) \left( \sum_{w'} (\bmC_{w'}^\pb \otimes \bmQ^* \bme_{w'} \bme_{w'}^\top \bmQ) \right)^* \right) \\
   &= -2 \Re \tr\left( \sum_{w, w'} \left( \bmC_w^\pa (\bmC_{w'}^\pb)^* \otimes \bmQ^* \bme_{w} \bme_{w}^\top \bme_{w'} \bme_{w'}^\top \bmQ \right) \right).
\end{align}
Using the fact that if $w \neq w'$, then $\bme_w^\top \bme_{w'} = 0$, and otherwise, $\bme_w^\top \bme_{w'} = 1$, we get
\begin{align}
    \lefteqn{- 2 \Re \tr\left( \bmM^\pa \left((\bmP_\ell \otimes \bmI_m) \bmM^\pb (\bmP_{\ell-1} \otimes \bmI_m)^\top\right)^\top  \right)} ~~~~~ \\
    &= -2 \Re\sum_{w} \tr\left( \bmC_w^\pa (\bmC_{w}^\pb)^* \otimes \bmQ^* \bme_{w} \bme_{w}^\top \bmQ \right) \\
    &= -2 \Re \sum_{w} \tr\left( \bmC_w^\pa (\bmC_{w}^\pb)^* \right) \tr \left(\bmQ^* \bme_{w} \bme_{w}^\top \bmQ \right) \\
    &= -2 \Re\sum_{w} \tr\left( \bmC_w^\pa (\bmC_{w}^\pb)^* \right) \\
    &= -2 \Re\sum_{w} \tr\left( \left( \sum_{i} s_{w, i}^\pa \bmu_{w, i}^\pa (\bmv_{w, i}^\pa)^* \right) \left( \sum_{j} s_{w, j}^\pb (\bmP_\ell \bmu_{w, j}^\pb) (\bmP_{\ell-1}\bmv_{w, j}^\pb)^* \right)^* \right) \\
    &= -2 \Re \sum_w \sum_{i, j} s_{w, i}^\pa s_{w, j}^\pb \left((\bmu_{w, i}^\pa)^* (\bmP_\ell \bmu_{w, j}^\pb) \right)^* \left((\bmv_{w, i}^\pa)^* (\bmP_{\ell-1} \bmv_{w, j}^\pb)\right).
\end{align}
From the above, the minimization of $\| \bmM^\pa -  (\bmP_\ell \otimes \bmI_m) \bmM^\pb (\bmP_{\ell-1} \otimes \bmI_m)^\top\|$ is equivalent to the maximization of $\Re\sum_w \sum_{i, j} s_{w, i}^\pa s_{w, j}^\pb \left((\bmu_{w, i}^\pa)^* (\bmP_\ell \bmu_{w, j}^\pb)\right)^* ((\bmv_{w, i}^\pa)^* (\bmP_{\ell-1} \bmv_{w, j}^\pb))$.
\end{proof}

\subsection{Proof of \texorpdfstring{$|R(\parama, \pi(\paramb))| \leq 1$}{|R(θa,π(θb))|≤1}} \label{app:proof_max}
This subsection proves the following theorem:
\begin{theorem}
    Let $\parama$ and $\paramb$ be the parameters of two MLPs with $L$-layers. For any permutation $\pi = (\bmP_\ell)_{\ell\in [L]}$, we have
    \begin{align}
        \left| R(\parama, \pi(\paramb)) \right| = \frac{\left| \sum_{\ell, i,j} (\bmu_{\ell,i}^\pa)^\top (\bmP_\ell \bmu_{\ell,j}^\pb) (\bmv_{\ell,i}^\pa)^\top (\bmP_{\ell-1} \bmv_{\ell,j}^\pb) \right|}{ \sum_{\ell} n_\ell} \leq 1. \label{eq:ineq_R}
    \end{align}
    The equality holds if, for all $\ell, i$, $\bmu_{\ell, i}^\pa = \bmP_\ell \bmu_{\ell, i}^\pb$ and $\bmv_{\ell, i}^\pa = \bmP_{\ell-1} \bmv_{\ell, i}^\pb$.
\end{theorem}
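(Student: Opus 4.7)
The plan is to recognize that the numerator of $R$ is, for each layer, exactly a trace involving only the singular-vector matrices, and then bound that trace by von Neumann's trace inequality, reusing essentially the same argument already given in Appendix B for $R_\gamma$ but specialized to $\gamma = 0$.

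First I would introduce, for each $\ell$, the ``singular-vectors-only'' matrices
\begin{align}
\widetilde{\bmW}_\ell^\pa := \sum_i \bmu_{\ell,i}^\pa (\bmv_{\ell,i}^\pa)^\top = \bmU_\ell^\pa (\bmV_\ell^\pa)^\top, \qquad \widetilde{\bmW}_\ell^\pb := \sum_j \bmu_{\ell,j}^\pb (\bmv_{\ell,j}^\pb)^\top.
\end{align}
Because the columns of $\bmU_\ell^\pa$ and of $\bmV_\ell^\pa$ are orthonormal, $\widetilde{\bmW}_\ell^\pa$ satisfies $(\widetilde{\bmW}_\ell^\pa)^\top \widetilde{\bmW}_\ell^\pa = \bmI$ on its column space, so all its nonzero singular values equal $1$, and there are exactly $n_\ell$ of them; the same holds for $\widetilde{\bmW}_\ell^\pb$. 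Next, by cyclicity of the trace,
\begin{align}
\tr\!\Big((\widetilde{\bmW}_\ell^\pa)^\top \bmP_\ell \widetilde{\bmW}_\ell^\pb \bmP_{\ell-1}^\top\Big) = \sum_{i,j} (\bmu_{\ell,i}^\pa)^\top (\bmP_\ell \bmu_{\ell,j}^\pb)\, (\bmv_{\ell,i}^\pa)^\top (\bmP_{\ell-1} \bmv_{\ell,j}^\pb),
\end{align}
which is precisely the layerwise contribution to the numerator of $R(\parama, \pi(\paramb))$ (this is the same identity as \cref{eq:R_WW} with $\gamma = 0$).

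I would then apply von Neumann's trace inequality. Since $\bmP_\ell$ and $\bmP_{\ell-1}$ are orthogonal, the matrix $\bmP_\ell \widetilde{\bmW}_\ell^\pb \bmP_{\ell-1}^\top$ has the same $n_\ell$ unit singular values as $\widetilde{\bmW}_\ell^\pb$, so
\begin{align}
\Big|\tr\!\Big((\widetilde{\bmW}_\ell^\pa)^\top \bmP_\ell \widetilde{\bmW}_\ell^\pb \bmP_{\ell-1}^\top\Big)\Big| \leq \sum_{i=1}^{n_\ell} 1 \cdot 1 = n_\ell.
\end{align}
Summing this bound over $\ell$ and dividing by $\sum_\ell n_\ell$ yields $|R(\parama, \pi(\paramb))| \leq 1$. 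For the equality statement, if $\bmu_{\ell,i}^\pa = \bmP_\ell \bmu_{\ell,i}^\pb$ and $\bmv_{\ell,i}^\pa = \bmP_{\ell-1} \bmv_{\ell,i}^\pb$ hold for every $\ell, i$, then orthonormality of singular vectors gives $(\bmu_{\ell,i}^\pa)^\top(\bmP_\ell \bmu_{\ell,j}^\pb) = \delta_{ij}$ and $(\bmv_{\ell,i}^\pa)^\top(\bmP_{\ell-1} \bmv_{\ell,j}^\pb) = \delta_{ij}$, so each layer contributes exactly $n_\ell$ and equality holds.

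The only delicate point is the bookkeeping around $n_\ell$: one must be careful that ``the number of singular values'' is the same on both models (or otherwise reinterpret $n_\ell$ as $\min\{n_\ell^\pa, n_\ell^\pb\}$, as in Appendix B), and that the compact SVD is used so that the singular values of $\widetilde{\bmW}_\ell^\pa$ and $\widetilde{\bmW}_\ell^\pb$ really are $n_\ell$ ones rather than being padded by zeros. Beyond this matching of conventions, the proof is a direct specialization of the argument already carried out for $R_\gamma$ and presents no real difficulty.
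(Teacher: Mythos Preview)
Your proposal is correct and follows essentially the same route as the paper's proof: both rewrite the layerwise numerator as the trace of a product of two ``unit singular value'' matrices built from the singular vectors, apply von Neumann's trace inequality to bound each layer by $n_\ell$, and combine via the triangle inequality. The only cosmetic difference is that the paper absorbs the permutations into the definition of $\bmW'^\pb_\ell$, whereas you keep them outside and invoke orthogonal invariance of singular values; the arguments are otherwise identical.
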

\begin{proof}
    If, for all $\ell, i$, $\bmu_{\ell, i}^\pa = \bmP_\ell \bmu_{\ell, i}^\pb$ and $\bmv_{\ell, i}^\pa = \bmP_{\ell-1} \bmv_{\ell, i}^\pb$, then the equality obviously holds. Thus, we prove that \cref{eq:ineq_R} holds. Using the property of trace, we have
    \begin{align}
        \sum_{\ell, i,j} (\bmu_{\ell,i}^\pa)^\top (\bmP_\ell \bmu_{\ell,j}^\pb) (\bmv_{\ell,i}^\pa)^\top (\bmP_{\ell-1} \bmv_{\ell,j}^\pb) &=
        \sum_{\ell, i,j} (\bmP_\ell \bmu_{\ell,j}^\pb)^\top  (\bmu_{\ell,i}^\pa) (\bmv_{\ell,i}^\pa)^\top (\bmP_{\ell-1} \bmv_{\ell, j}^\pb) \\
        &= \sum_{\ell}\tr\left(\sum_{i} (\bmu_{\ell,i}^\pa) (\bmv_{\ell,i}^\pa)^\top \sum_j \left((\bmP_\ell \bmu_{\ell,j}^\pb)(\bmP_{\ell-1} \bmv_{\ell, j}^\pb)^\top\right)^\top \right). \label{eq:totyu3}
    \end{align}
    Let $\bmW'^\pa_\ell = \sum_{i} (\bmu_{\ell,i}^\pa) (\bmv_{\ell,i}^\pa)^\top$ and $\bmW'^\pb_\ell = \sum_j (\bmP_\ell \bmu_{\ell,j}^\pb)(\bmP_{\ell-1} \bmv_{\ell, j}^\pb)^\top$. Obviously, they are matrices with all singular values of 1, and thus by using von Neuman's trace inequality~\citep{Neumann_1962}, we have
    \begin{align}
         \sum_\ell \left|\tr\left( \bmW'^\pb_\ell (\bmW'^\pa_\ell)^\top\right)\right| \leq \sum_\ell n_\ell.
    \end{align}
    Therefore, the triangle inequality yields that
    \begin{align}
        \left|\sum_{\ell, i,j} (\bmu_{\ell,i}^\pa)^\top (\bmP_\ell \bmu_{\ell,j}^\pb) (\bmv_{\ell,i}^\pa)^\top (\bmP_{\ell-1} \bmv_{\ell,j}^\pb) \right|
        &= \left| \sum_\ell \tr\left( \bmW'^\pb_\ell (\bmW'^\pa_\ell)^\top\right)\right| \\
        &\leq \sum_\ell \left|\tr\left( \bmW'^\pb_\ell (\bmW'^\pa_\ell)^\top\right)\right| \\
        &\leq \sum_\ell n_\ell,
    \end{align}
    which completes the proof.
\end{proof}

\subsection{Proof of Theorem~\ref{thm:diff_out}} \label{app:proof_diff_out}
\begin{proof}
Because $\sigma$ is Lipschitz continuous with the constant $C$, we have
\begin{align} \label{eq:thm5.1_int}
    \mathbb{E} \| \sigma(\bmW_\ell^\pa \bmz) - \sigma(\bmW_\ell^\pb \bmz) \| \leq C \mathbb{E} \| \bmW_\ell^\pa \bmz - \bmW_\ell^\pb \bmz \| \leq C \sqrt{\mathbb{E}\| \bmW_\ell^\pa \bmz - \bmW_\ell^\pb \bmz \|^2},
\end{align}
where we use Jensen's inequality since the squre root function is concave. Focusing on the difference between the outputs in the square root, we get
\begin{align}
    \| \bmW_\ell^\pa \bmz - \bmW_\ell^\pb \bmz \|^2 =  \bmz^\top (\bmW_\ell^\pa)^\top \bmW_\ell^\pa \bmz + \bmz^\top (\bmW_\ell^\pb)^\top \bmW_\ell^\pb \bmz - 2 \bmz^\top (\bmW_\ell^\pa)^\top \bmW_\ell^\pb \bmz.
\end{align}
From the SVDs of weights $\bmW_\ell^\pa = \sum_i \bmu_{\ell, i}^\pa s_i^\pa \bmv_{\ell, i}^\pa$ and $\bmW_\ell^\pb = \sum_i \bmu_{\ell, i}^\pb s_i^\pb \bmv_{\ell, i}^\pb$, we have $\bmz^\top (\bmW_\ell^\pa)^\top \bmW_\ell^\pa \bmz = \sum_i (s_i^\pa)^2 (\bmv_{\ell, i}^\pa \bmz)^2$, $\bmz^\top (\bmW_\ell^\pb)^\top \bmW_\ell^\pb \bmz = \sum_i (s_i^\pb)^2 (\bmv_{\ell, i}^\pb \bmz)^2$, and $\bmz^\top (\bmW_\ell^\pa)^\top \bmW_\ell^\pb \bmz = \sum_{i, j} s_i^\pa s_j^\pb (\bmu_{\ell,i}^\pa)^\top \bmu_{\ell,j}^\pb (\bmv_{\ell, i}^\pa)^\top \bmz (\bmv_{\ell, j}^\pb)^\top \bmz$. Therefore, \cref{eq:thm5.1_int} can be rewritten as
\small
\begin{multline}
    \mathbb{E} \| \sigma(\bm{W}^{(a)}_\ell \bm{z}) - \sigma(\bm{W}^{(b)}_\ell \bm{z}) \| \\
\leq C\sqrt{\sum_i (s_{\ell, i}^\pa)^2 \mathbb{E} ((\bm{v}_{\ell, i}^\pa)^\top \bm{z})^2 + \sum_i (s_{\ell, i}^\pb)^2 \mathbb{E} ((\bm{v}_{\ell, i}^\pb)^\top \bm{z})^2 - 2 \sum_{i, j} s_{\ell, i}^\pa s_{\ell, j}^\pb (\bmu_{\ell, i}^\pa)^\top \bmu_{\ell, j}^\pb \mathbb{E}(\bm{v}_{\ell, i}^\pa)^\top \bm{z} (\bm{v}_{\ell, j}^\pb)^\top \bm{z}}.
\end{multline}
\normalsize
\end{proof}
\section{Additional Experimental results}

\subsection{Learning curve of WM}

In this subsection, \cref{fig:learning_curve_WM} shows the learning curves for VGG11 and ResNet20 when WM is performed using Sinkhorn's algorithm. The figure shows that the distance between the two models decreases as the training progresses, and the performance of the merged model also improves. In this paper, for both VGG11 and ResNet20, we used permutations at the 10th epoch, when the loss of the merged model is stably small.

\begin{figure}[t]
    \begin{minipage}[c]{\hsize}
        \centering
        \includegraphics[width=\linewidth]{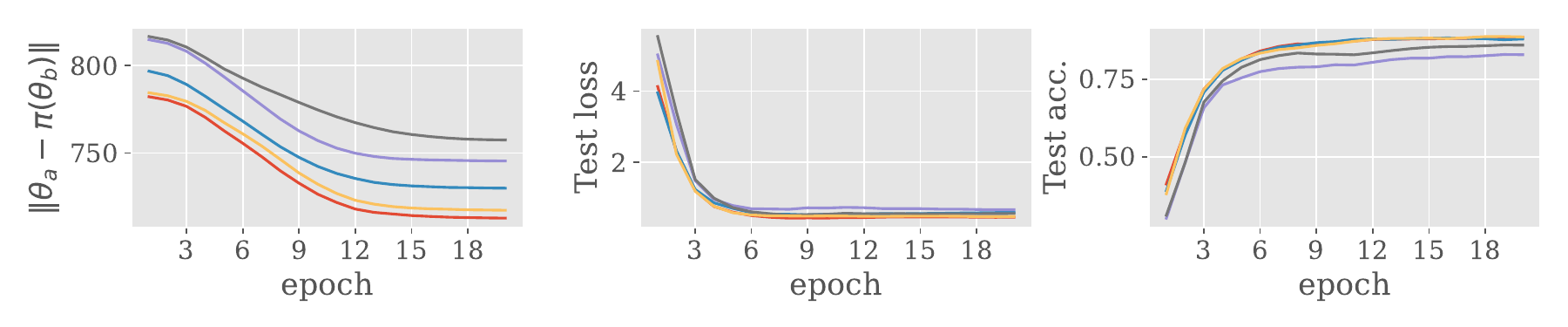}
        \subcaption{Results of VGG11}
        \label{fig:learning_curve_WM_VGG}
    \end{minipage}
    \begin{minipage}[c]{\hsize}
        \centering
        \includegraphics[width=\linewidth]{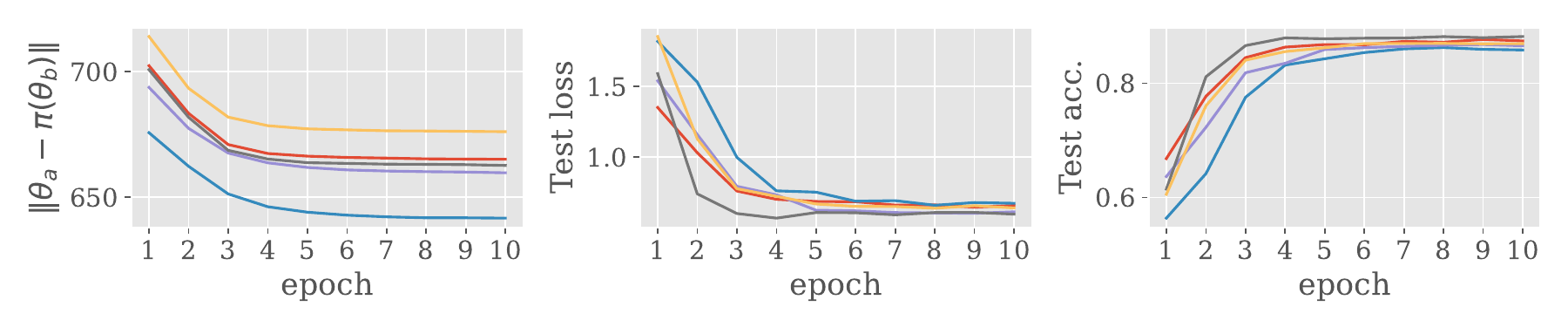}
        \subcaption{Results of ResNet20}
        \label{fig:learning_curfve_WM_resnet}
    \end{minipage}
    \caption{$L^2$ distance between two models, test loss, and accuracy of merged models when optimizing permutations using Sinkhorn's algorithm for WM. Five permutation search trials were conducted with independently trained models (i.e., ten independently trained models were prepared to form five model pairs, and WM was performed for each pair). These results are plotted in different colors.}
    \label{fig:learning_curve_WM}
\end{figure}

\subsection{Distribution of Singular Values} \label{app:dist_singular_values}

\begin{figure*}[p]
    \centering
    \begin{minipage}[c]{0.6\hsize}
        \includegraphics[width=\hsize]{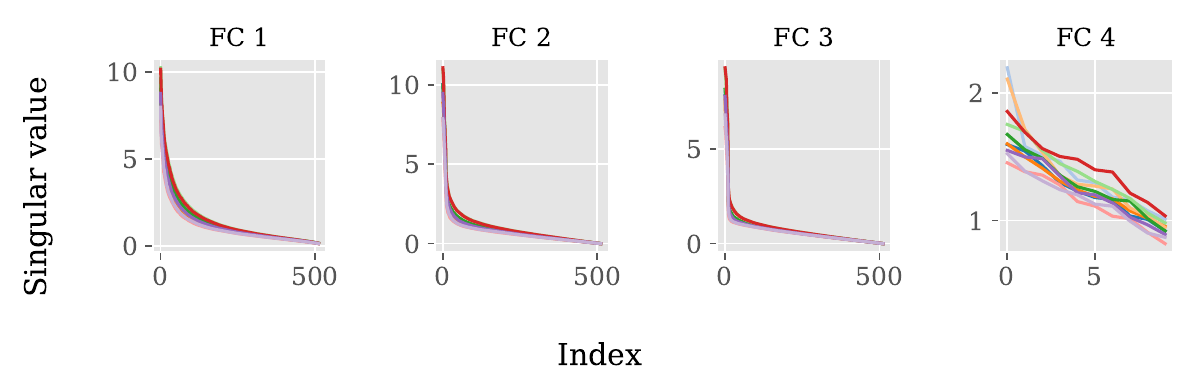}
        \subcaption{MLP, MNIST.}
        \includegraphics[width=\hsize]{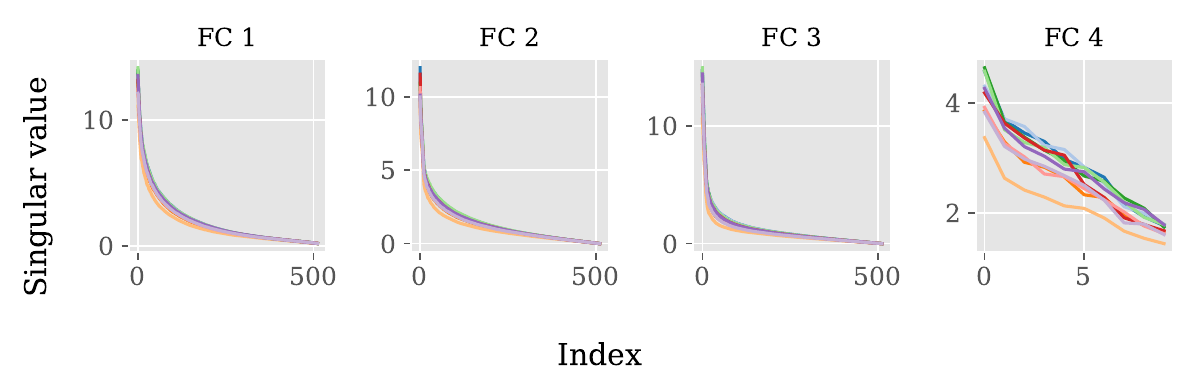}
        \subcaption{MLP, FMNIST.}
    \end{minipage}
    \begin{minipage}[c]{0.39\hsize}
        \includegraphics[width=\hsize]{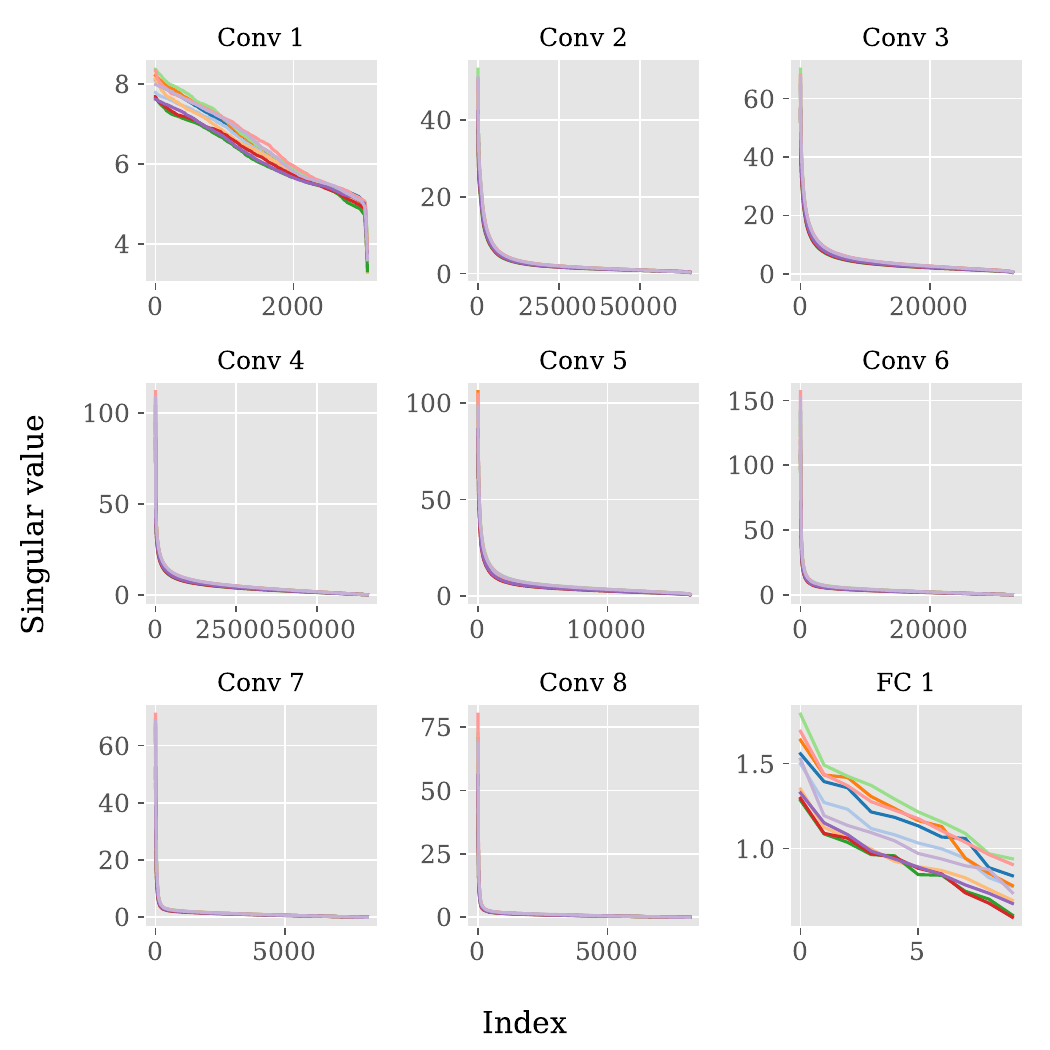}
        \subcaption{VGG11, CIFAR10.}
    \end{minipage}
    \begin{minipage}[c]{\hsize}
        \includegraphics[width=\hsize]{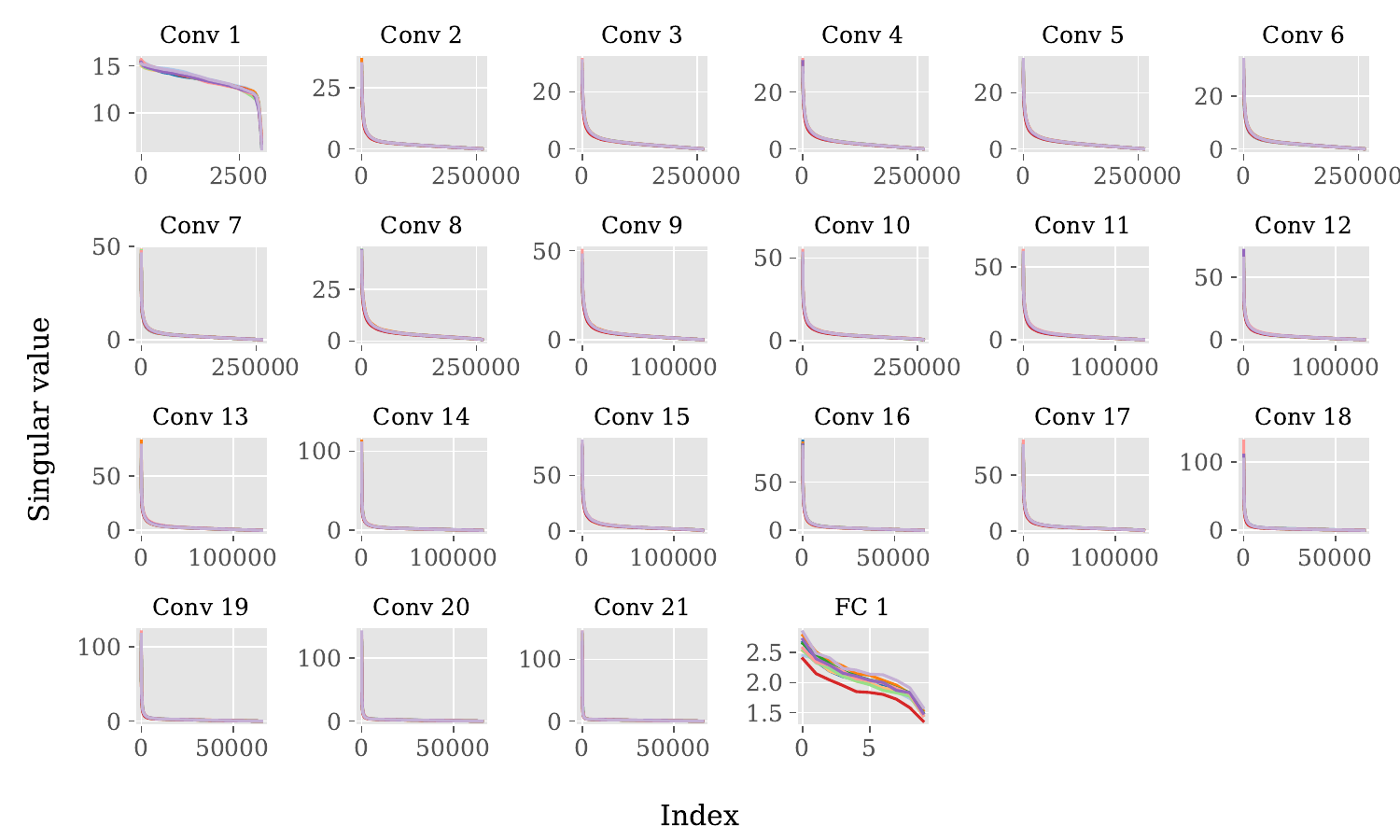}
        \subcaption{ResNet20, CIFAR10.}
    \end{minipage}
    \caption{Distributions of the singular values of each layer.}
    \label{fig:singular_values_all_layers}
\end{figure*}

\begin{figure}
    \centering
    \includegraphics[width=\hsize]{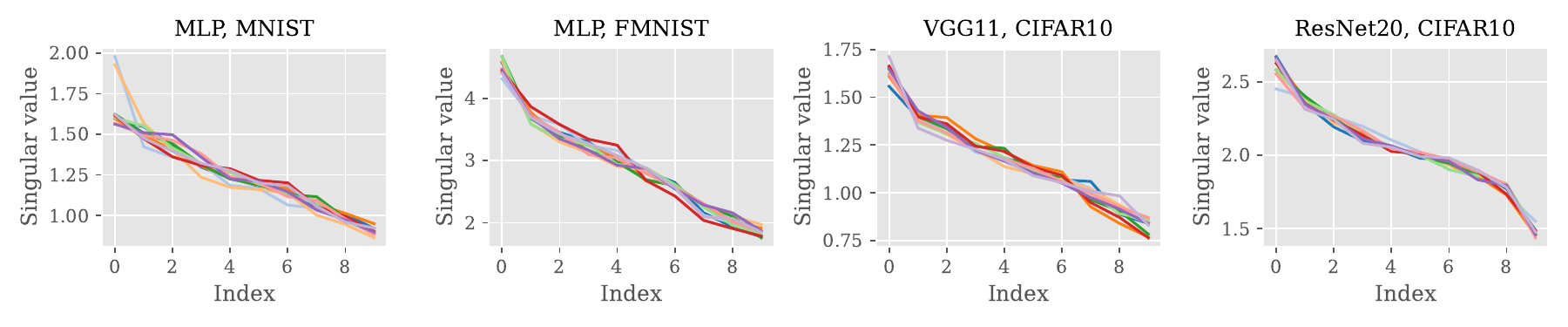}
    \caption{Adjusted distributions of the singular values of the output layer.}
    \label{fig:corrected_output_layer}
\end{figure}

\cref{fig:singular_values_all_layers} shows the distributions of the singular values of all the layers. The figure demonstrates that, in all layers except for the output layer, the singular values are very similar across all models. Meanwhile, in the output layer, there is variability in the singular values. However, this variability does not affect the accuracy of the merged model. Let $\bmW_L^\pa = \sum_{i} s^\pa_{L, i} \bmu^\pa_{L, i} (\bmv^\pa)_{L, i}^\top$ and $\bmW_L^\pb = \sum_{i} s^\pb_{L, i} \bmu^\pb_{L, i} (\bmv^\pb_{L, i})^\top$ represent the output layer weights of the two trained models. The figure shows that the difference between the singular values of the two models is approximately a constant multiple. In other words, there exists a constant $\alpha$ such that $s_{L, i}^\pa \approx \alpha s_{L, i}^\pb$ for all $i$. To confirm this, \cref{fig:corrected_output_layer} shows the distribution of singular values when the constant $\alpha$ is calculated and the weight of the output layer is adjusted, demonstrating that correcting the output layer by a constant factor can address the differences in the distribution. If the singular vectors of the two weights are equal (i.e., $\bmv^\pa_{L, i} = \bmv^\pb_{L, i}$ and $\bmu^\pa_{L, i} = \bmu^\pb_{L, i}$ for all $i$), then $\bmW_L^\pa \approx \alpha \bmW_L^\pb$ holds (indeed, as mentioned in \cref{subsec:experint}, the permutation matrix aligns the directions of the singular vectors). Therefore, the weight of the output layer of the merged model at the ratio $\lambda \in [0, 1]$ is given by $\lambda \bmW_L^\pa + (1-\lambda) \bmW_L^\pb \approx \lambda \bmW_L^\pa + (1-\lambda) \alpha \bmW_L^\pa = (\lambda + (1-\lambda) \alpha) \bmW_L^\pa$. Thus, we can consider that the weight and the activation function of the merged model are given by $\bmW_L^\pa$ and a softmax function with an inverse temperature of $1/(\lambda + (1-\lambda) \alpha)$, respectively. Since the inverse temperature does not affect the accuracy value, the difference in the singular values of the output layer would not matter in satisfying LMC, at least in terms of accuracy.
\subsection{Inner Products Between Right Singular Vectors of Hidden Layers And Their Input} \label{app:dot_all_layer}

\begin{figure*}[p]
    \centering
    \begin{minipage}[c]{0.5\hsize}
        \includegraphics[width=\hsize]{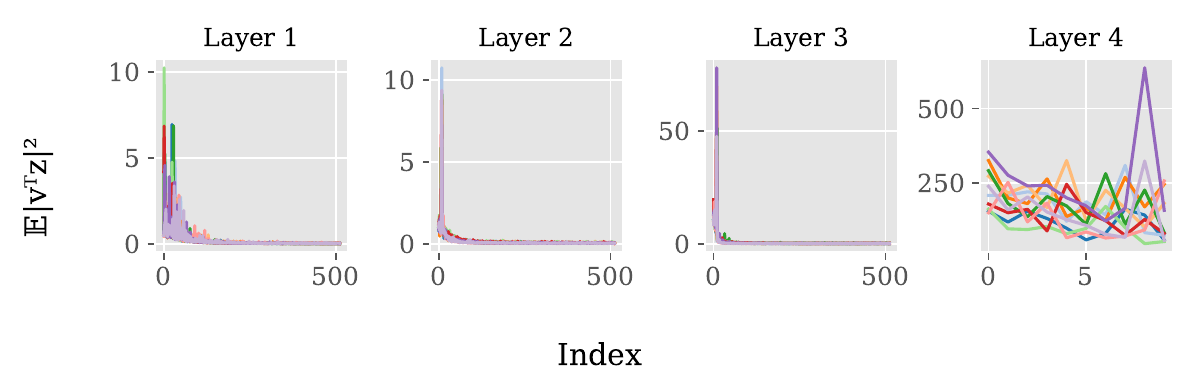}
        \subcaption{MLP, MNIST.}
        \includegraphics[width=\hsize]{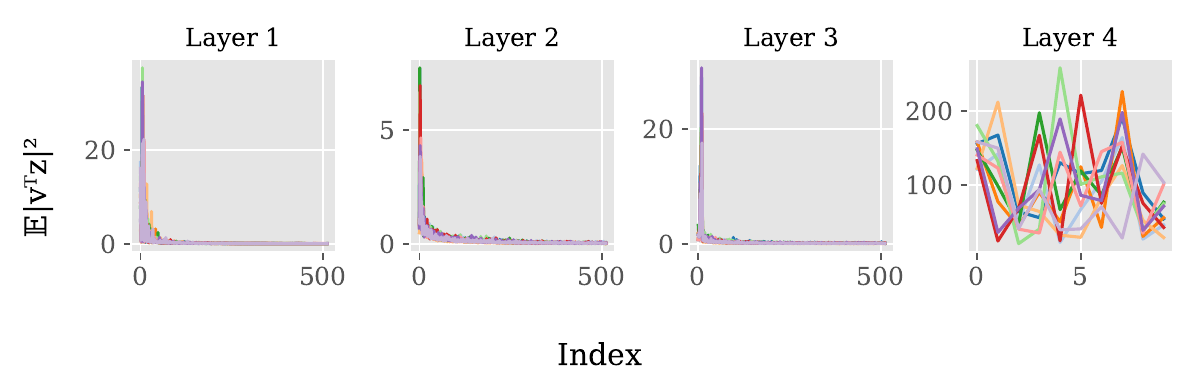}
        \subcaption{MLP, FMNIST.}
    \end{minipage}
    \begin{minipage}[c]{0.49\hsize}
        \includegraphics[width=\hsize]{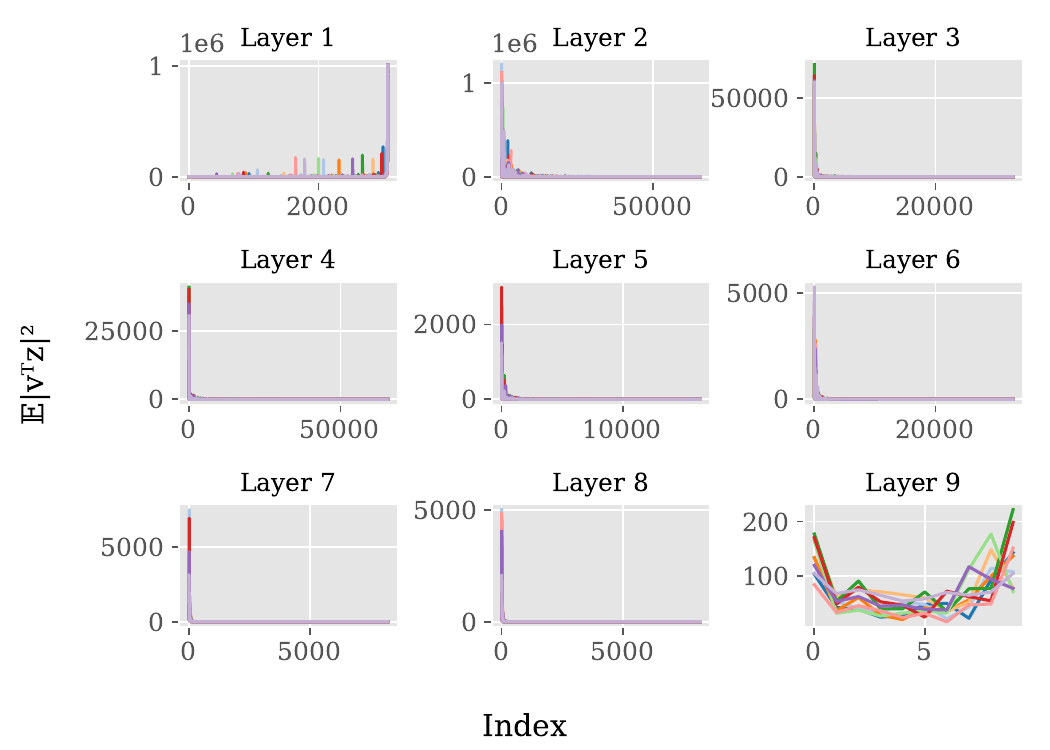}
        \subcaption{VGG11, CIFAR10.}
    \end{minipage}
    \begin{minipage}[c]{0.9\hsize}
        \includegraphics[width=\hsize]{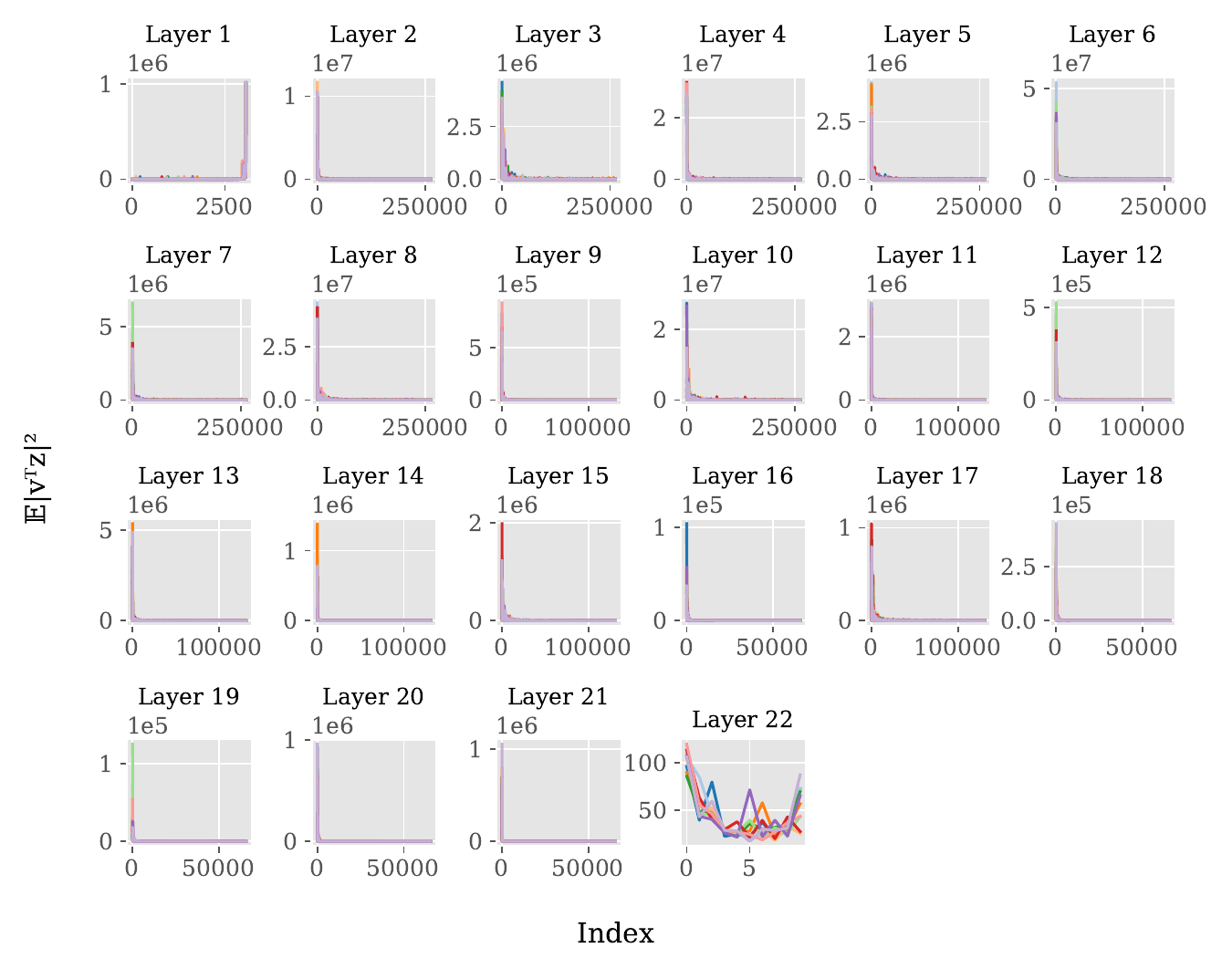}
        \subcaption{ResNet20, CIFAR10.}
    \end{minipage}
    \caption{Average absolute values of inner products between the right singular vectors and the input of each layer. The horizontal axis represents the index of the left singular vector, while the vertical axis shows the mean square of the inner product. The left side of the horizontal axis corresponds to singular vectors with large singular values.}
    \label{fig:dot_all_layers}
\end{figure*}

\cref{fig:dot_all_layers} shows the average absolute values of inner products between the right singular vectors and the input in each layer of models trained by SGD. The figure demonstrates that, except for the input and output layers, the singular vectors with larger singular values generally have larger inner products with the inputs. Note that the results of the input layers do not affect the permutation search based on WM because their right singular vectors are not changed by the permutations.

\clearpage

\subsection{Relationship with Model Width} \label{app:wm_width}
Previous studies have demonstrated that the width of the model architecture affects the ease of achieving LMC. In this subsection, we explain this phenomenon based on the following three facts: as the model width increases, (i) the proportion of dominant singular values decreases, (ii) the right singular vectors corresponding to these dominant singular values will have large inner product values with the inputs of the hidden layers, and (iii) the WM preferentially aligns the directions of singular vectors corresponding to these dominant singular values.

\paragraph{(i) Dependency of model width on singular values.}
\begin{figure}[t]
    \centering
    \includegraphics[width=0.5\hsize]{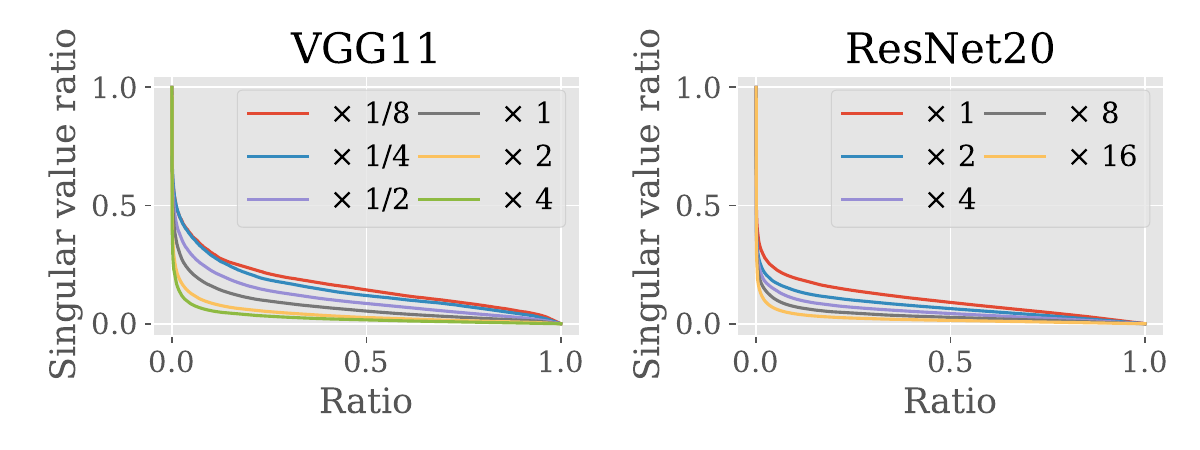}
    \caption{Distribution of all singular values normalized by the largest one in the model as the model width multiplier changes. The vertical axis represents the singular values divided by the maximum singular value of each model, and the horizontal axis represents the ratio among all singular values (e.g., the point at 0.5 on the horizontal axis represents the singular value in the middle of all values sorted in descending order).}
    \label{fig:dist_singular_values}
\end{figure}
As we mentioned, the proportion of relatively large singular values in all singular values decreases as the model width increases. To verify this, \cref{fig:dist_singular_values} shows the distribution of the singular values of all layers of VGG11 and ResNet20 trained on CIFAR10. \cref{fig:dist_singular_values} shows the results of different model widths (i.e., dimensionality). As can be seen, the proportion of relatively large singular values decreases as the model width increases. Thus, the proportion of singular vectors that need to be aligned in the model decreases as the width increases.

\paragraph{(ii) Dependency of model width on inner products of right singular vectors.}

\begin{figure*}[p]
    \centering
    \begin{minipage}[c]{0.49\hsize}
        \includegraphics[width=\hsize]{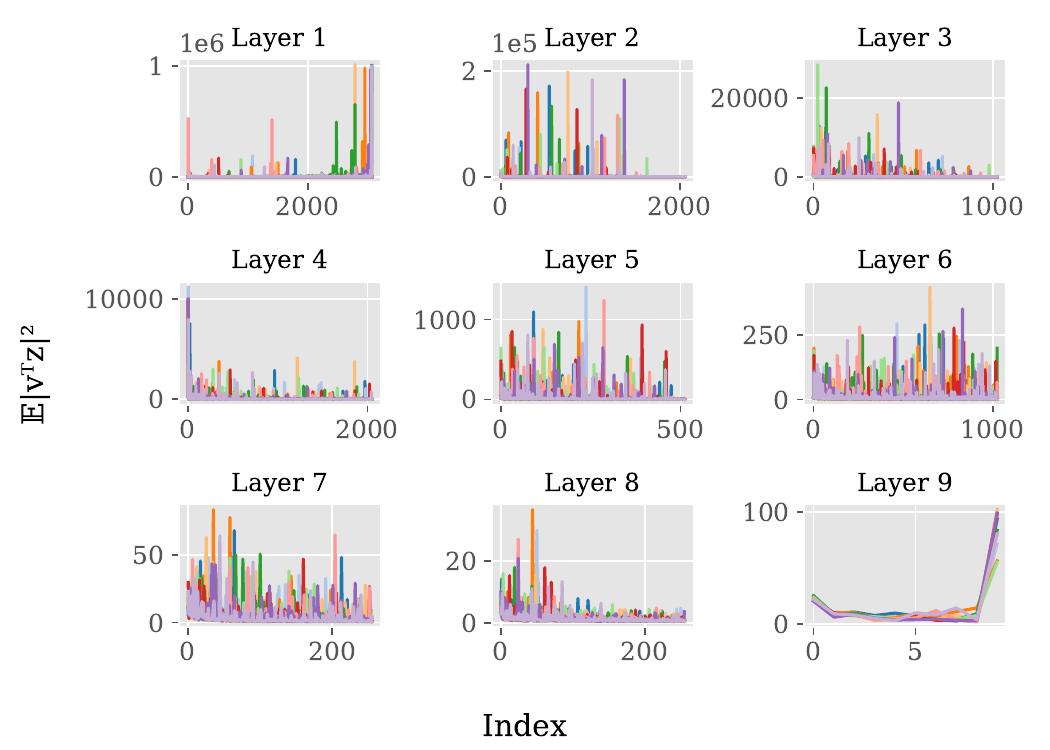}
        \subcaption{Width multiplier is 0.125.}
    \end{minipage}
    \begin{minipage}[c]{0.49\hsize}
        \includegraphics[width=\hsize]{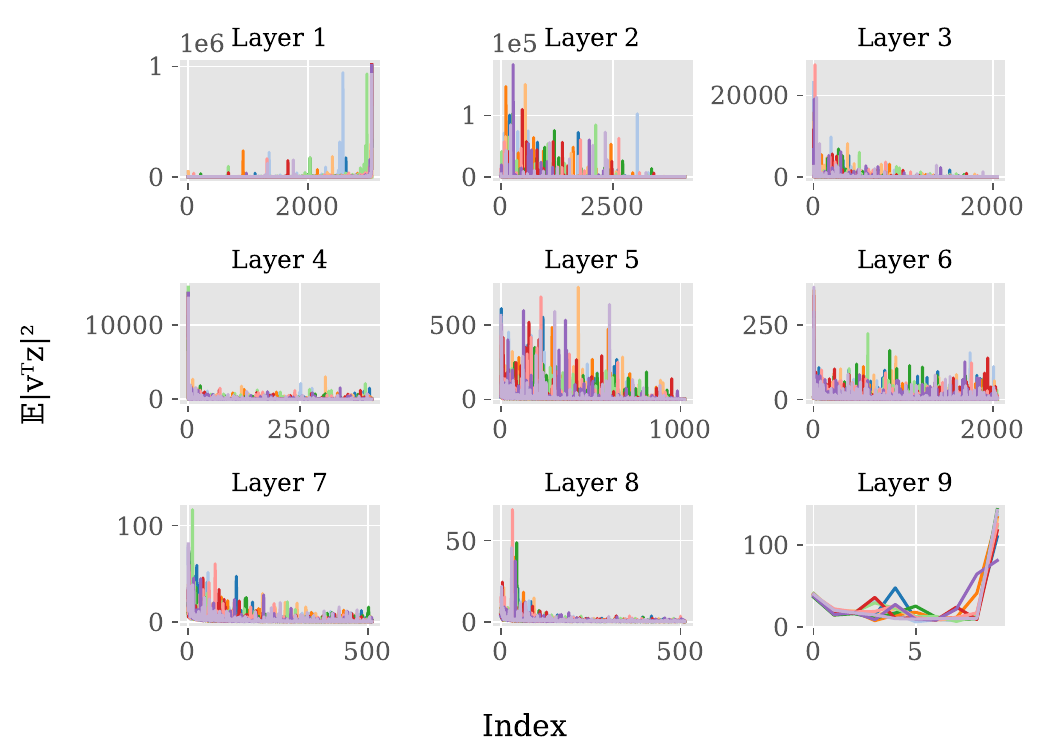}
        \subcaption{Width multiplier is 0.25.}
    \end{minipage}
    \begin{minipage}[c]{0.49\hsize}
        \includegraphics[width=\hsize]{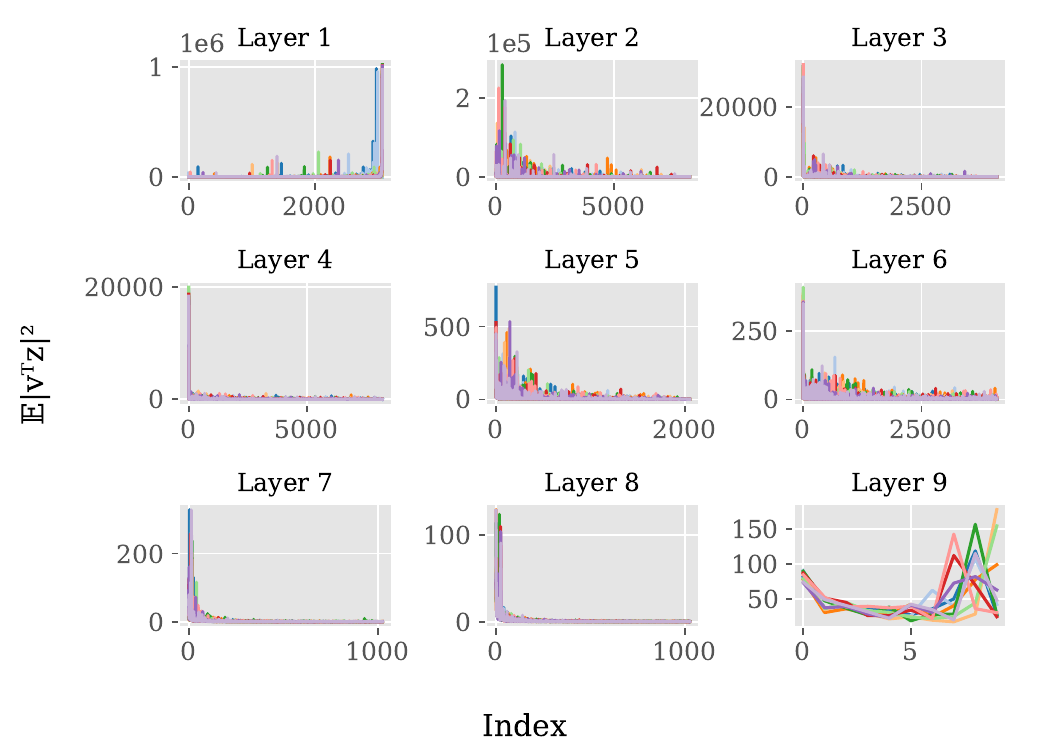}
        \subcaption{Width multiplier is 0.5.}
    \end{minipage}
    \begin{minipage}[c]{0.49\hsize}
        \includegraphics[width=\hsize]{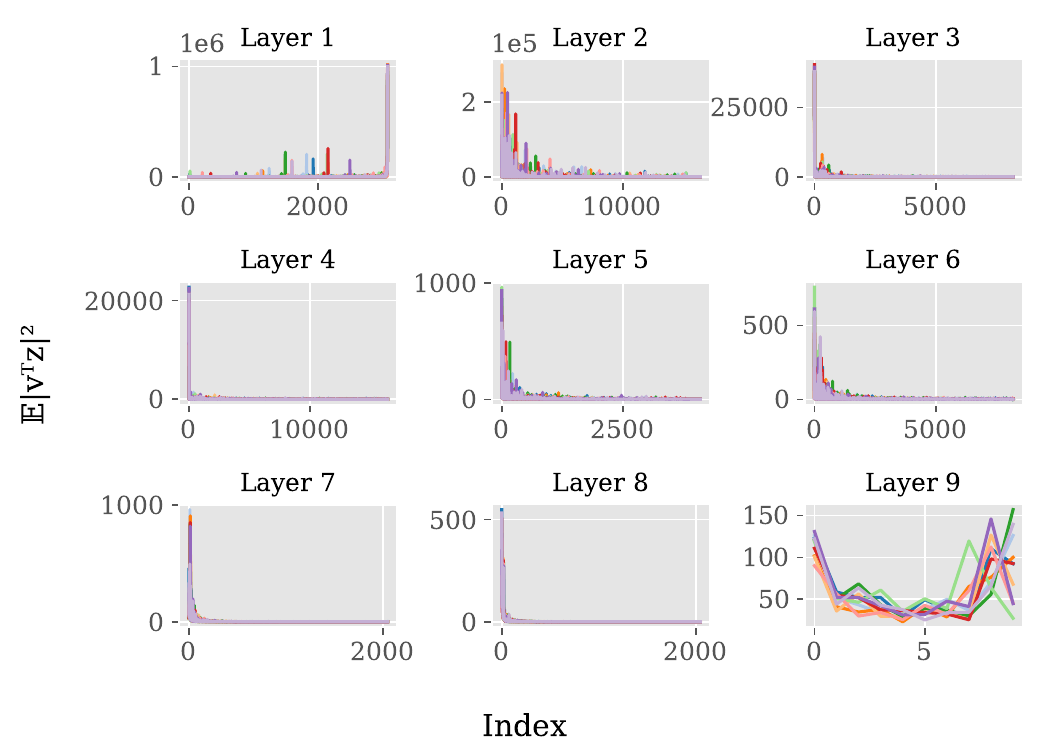}
        \subcaption{Width multiplier is 1.}
    \end{minipage}
    \begin{minipage}[c]{0.49\hsize}
        \includegraphics[width=\hsize]{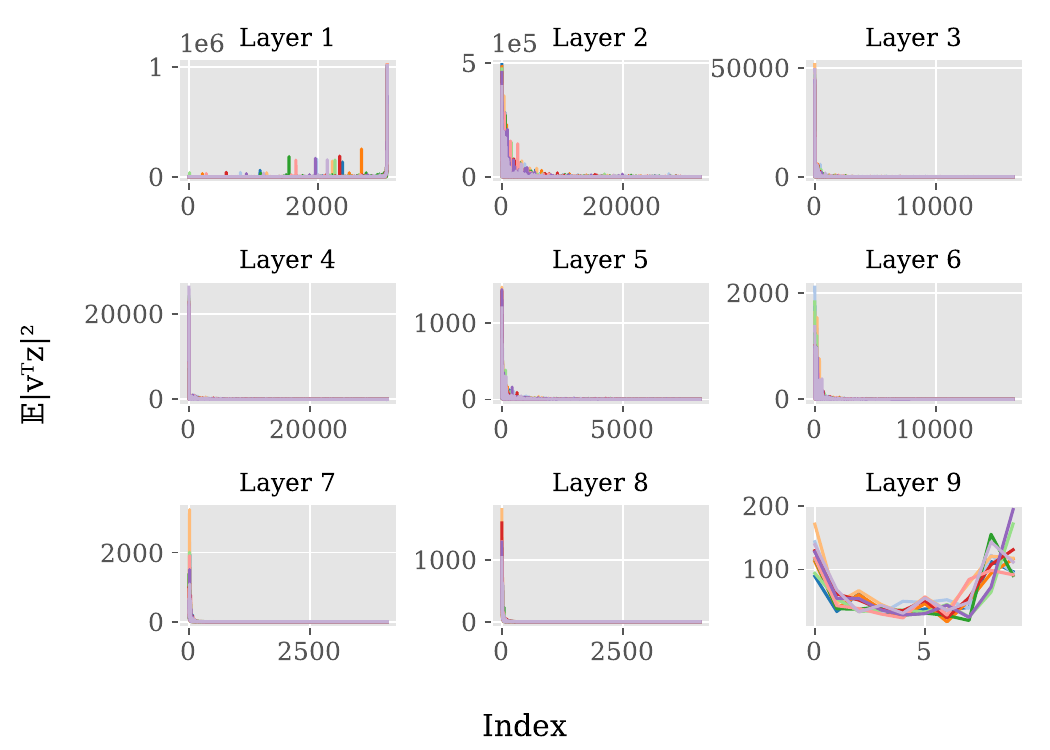}
        \subcaption{Width multiplier is 2.}
    \end{minipage}
    \begin{minipage}[c]{0.49\hsize}
        \includegraphics[width=\hsize]{imgs/v_dots/CIFAR10_VGG_4_dot.pdf}
        \subcaption{Width multiplier is 4.}
    \end{minipage}
    \caption{Average absolute values of inner products between the right singular vectors and the input of each layer of VGG11 trained on CIFAR10.}
    \label{fig:dot_v_vgg}
\end{figure*}

\begin{figure}[p]
    \centering
    \begin{minipage}[c]{0.5\hsize}
        \includegraphics[width=\hsize]{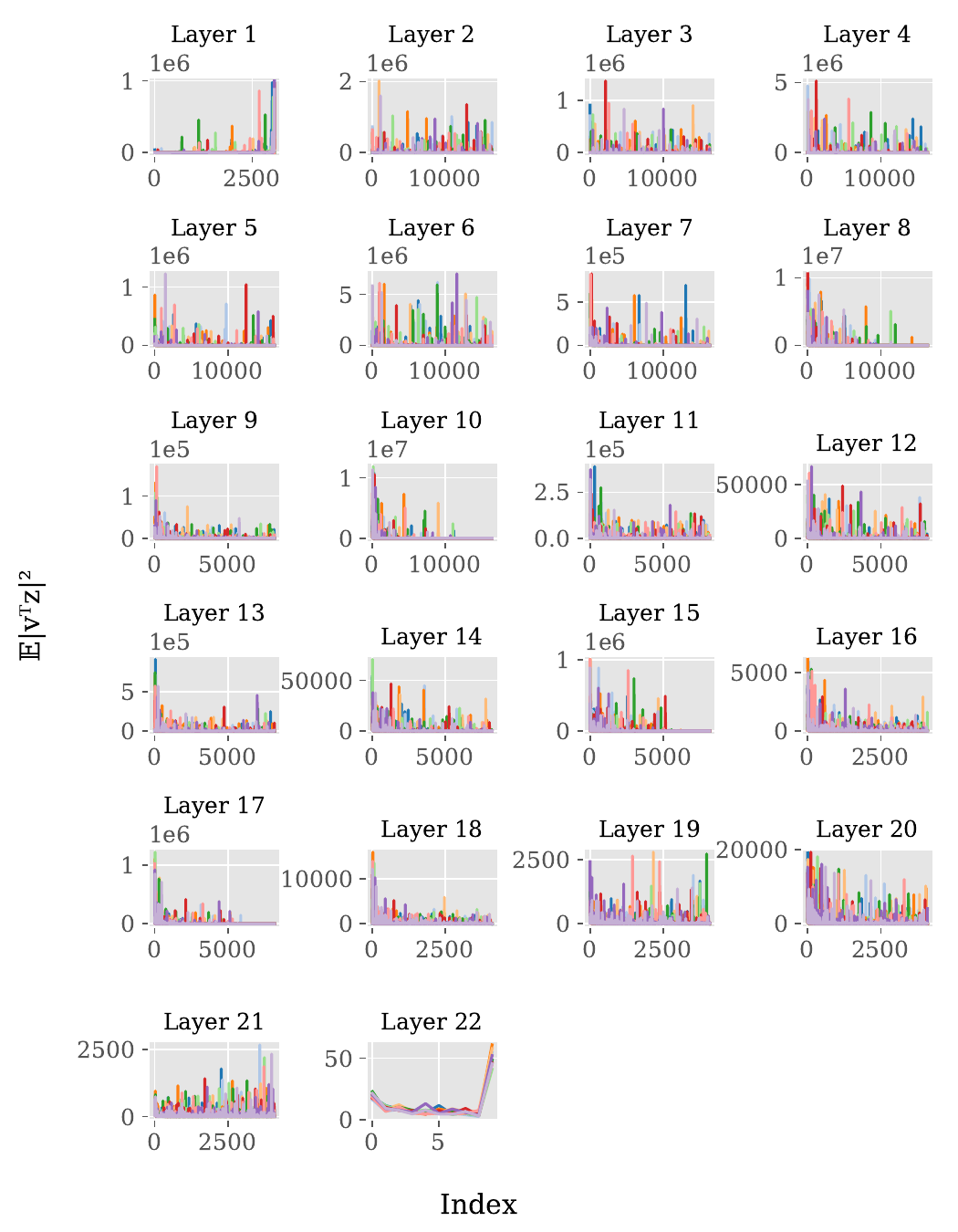}
        \subcaption{Width multiplier is 1.}
    \end{minipage}
    \begin{minipage}[c]{0.49\hsize}
        \includegraphics[width=\hsize]{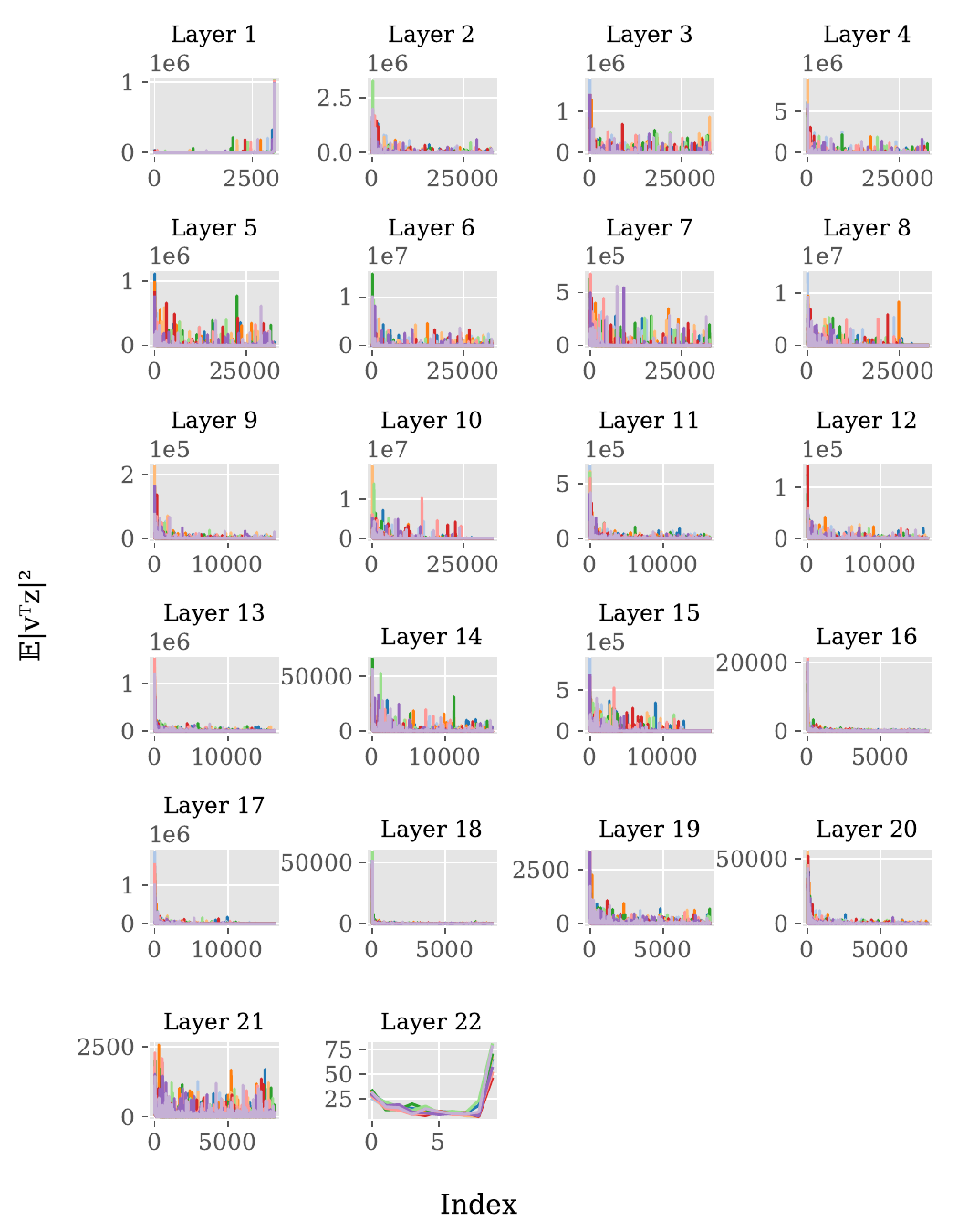}
        \subcaption{Width multiplier is 2.}
    \end{minipage}
    \begin{minipage}[c]{0.49\hsize}
        \includegraphics[width=\hsize]{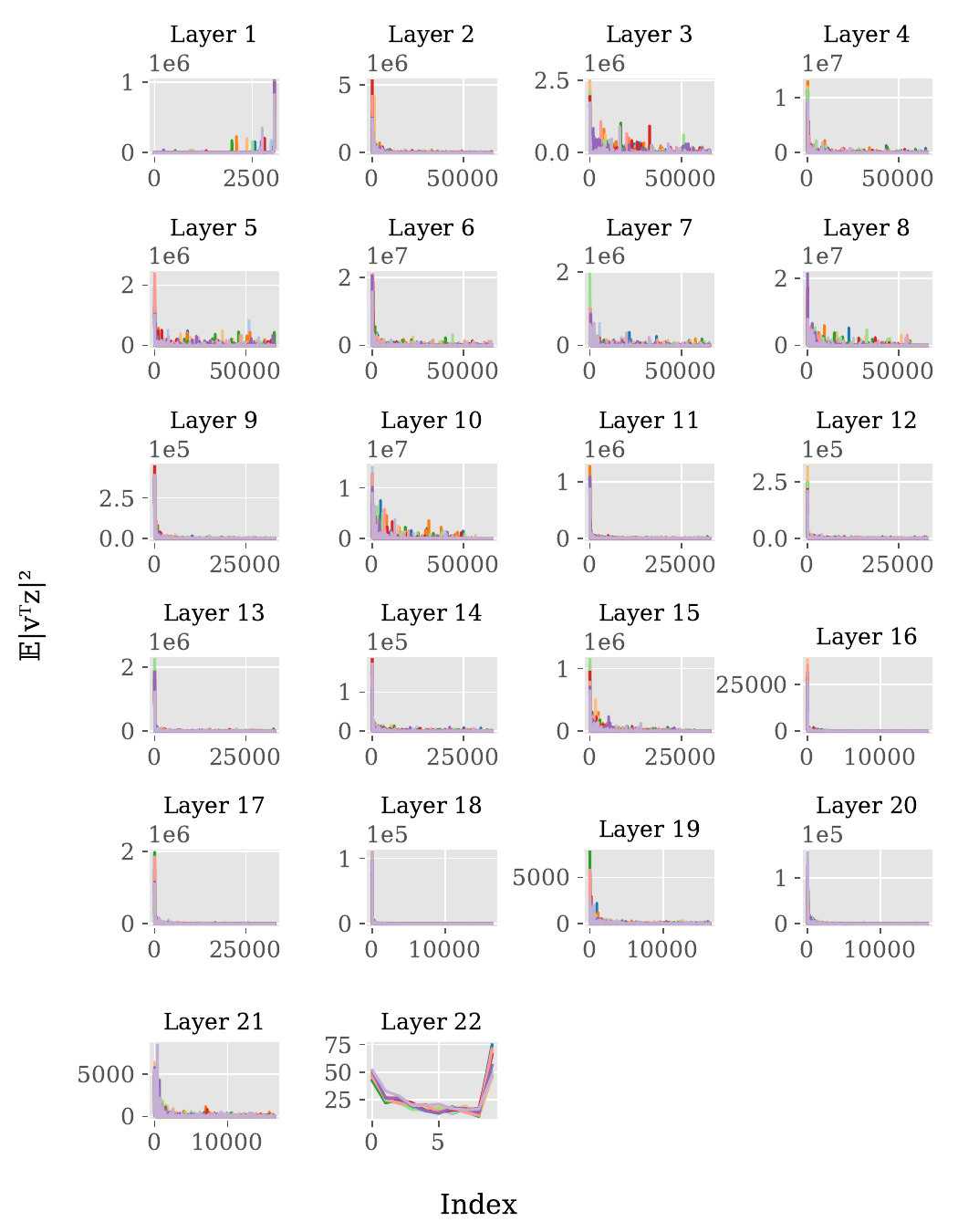}
        \subcaption{Width multiplier is 4.}
    \end{minipage}
    \begin{minipage}[c]{0.49\hsize}
        \includegraphics[width=\hsize]{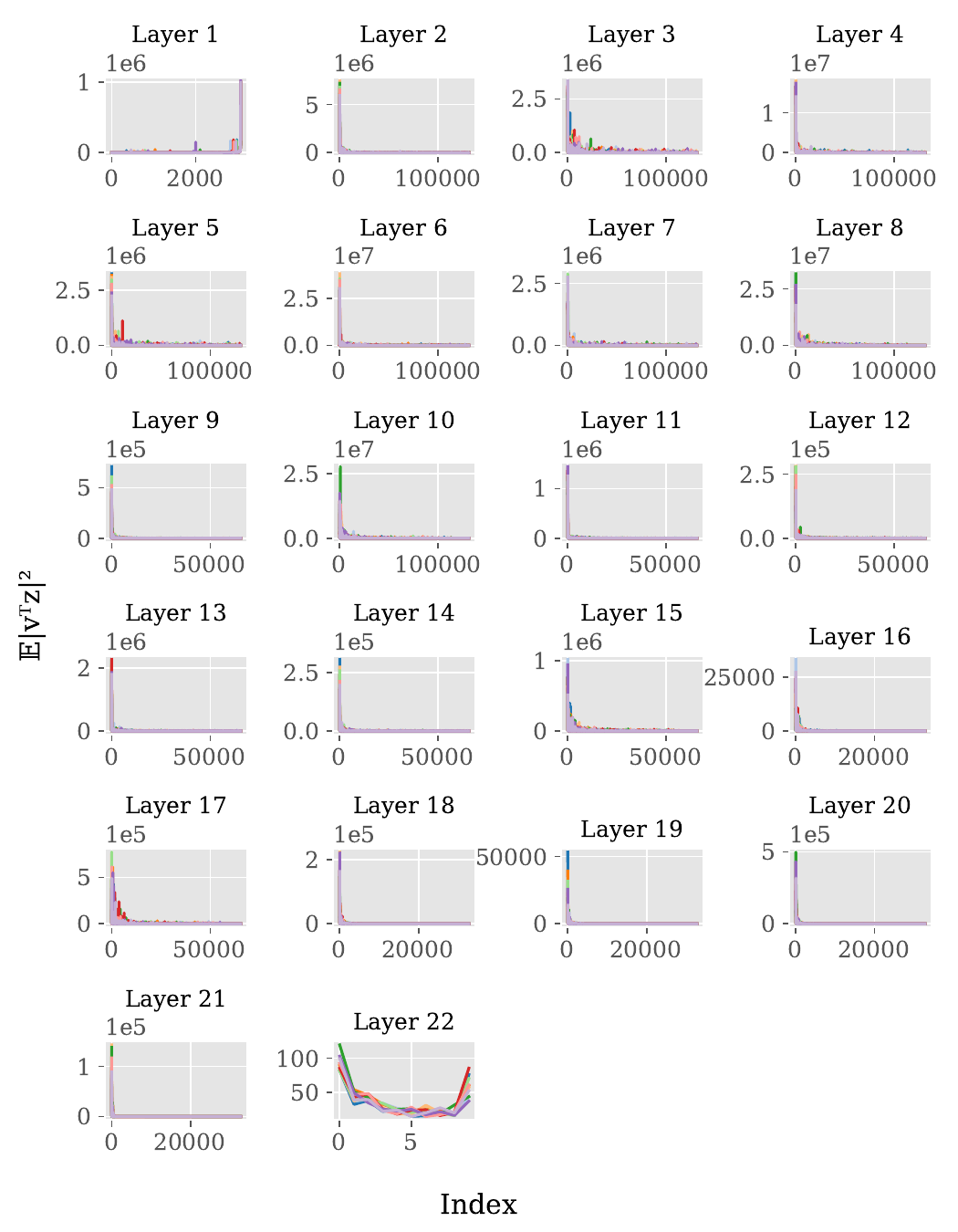}
        \subcaption{Width multiplier is 8.}
    \end{minipage}
\end{figure}%
\begin{figure}[t]\ContinuedFloat
    \centering
    \begin{minipage}[c]{0.49\hsize}
        \includegraphics[width=\hsize]{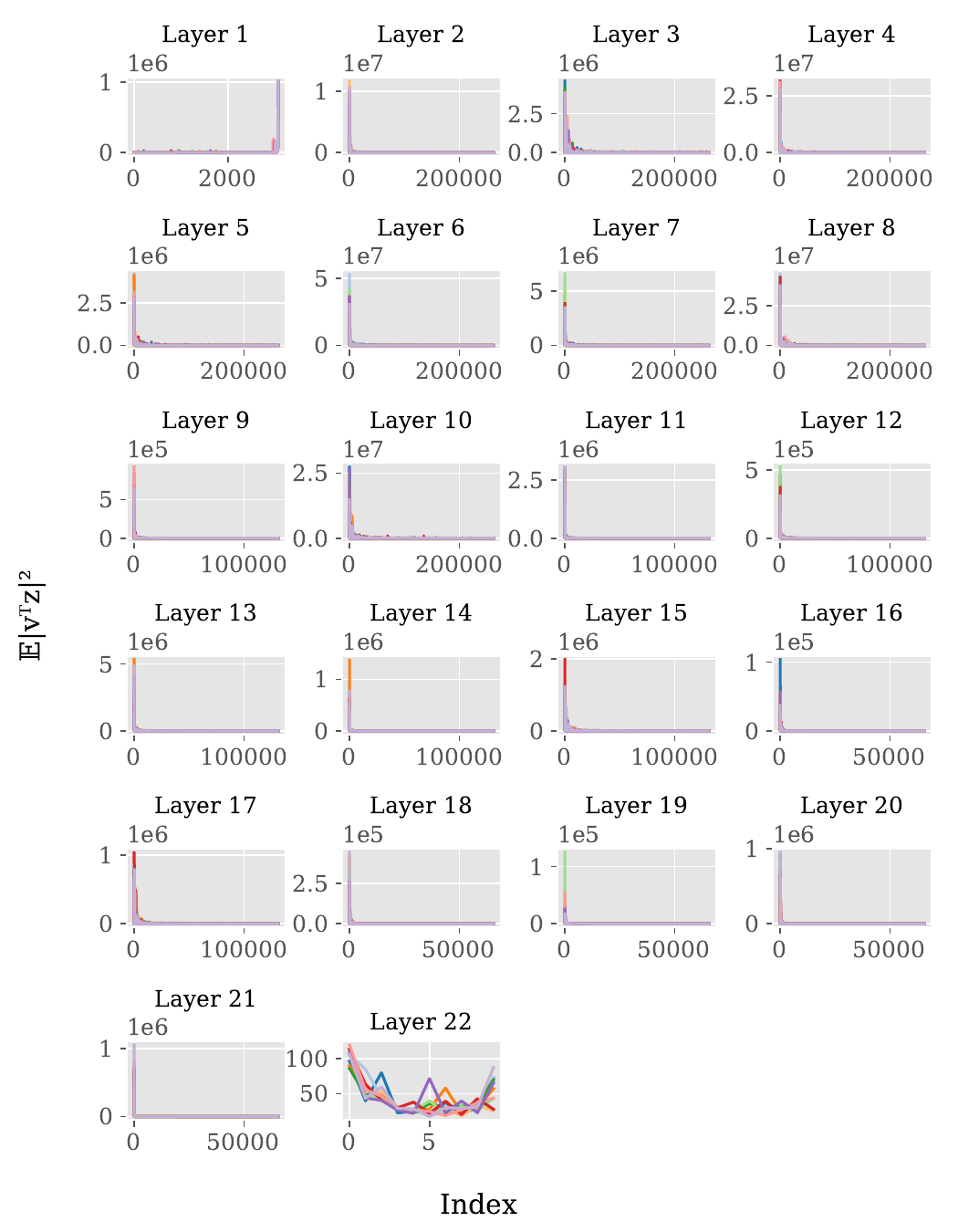}
        \subcaption{Width multiplier is 16.}
    \end{minipage}
    \caption{Average absolute values of inner products between the right singular vectors and the input of each layer of ResNet20 trained on CIFAR10.}
    \label{fig:dot_v_resnet}
\end{figure}

We also investigated the effect of model width on the inner products between the hidden layer inputs and the right singular vectors. \cref{fig:dot_v_vgg,fig:dot_v_resnet} show the values of these inner products for each layer as model width changes. \cref{fig:dot_v_vgg,fig:dot_v_resnet} show the distributions of inner products for VGG11 and ResNet20 models trained on CIFAR10, respectively. These figures demonstrate that as model width increases, the inner products between the right singular vectors with large singular values and the inputs also increase.

\paragraph{(iii) Singular-vector alignment.}
\begin{figure}[t]
    \begin{minipage}[c]{0.5\hsize}
        \centering
        \includegraphics[width=\hsize]{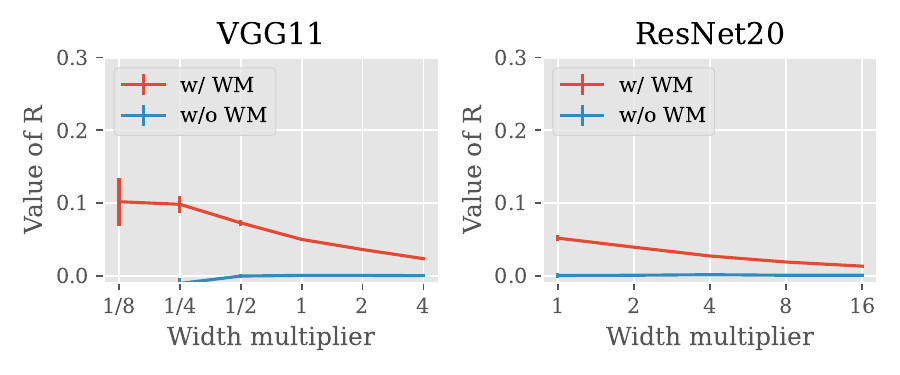}
        \subcaption{Evaluation results of $R(\parama, \pi(\paramb))$ for all the singular vectors.}
        \label{fig:corr_vs_width_0}
    \end{minipage}
    \begin{minipage}[c]{0.5\hsize}
        \centering
        \includegraphics[width=\hsize]{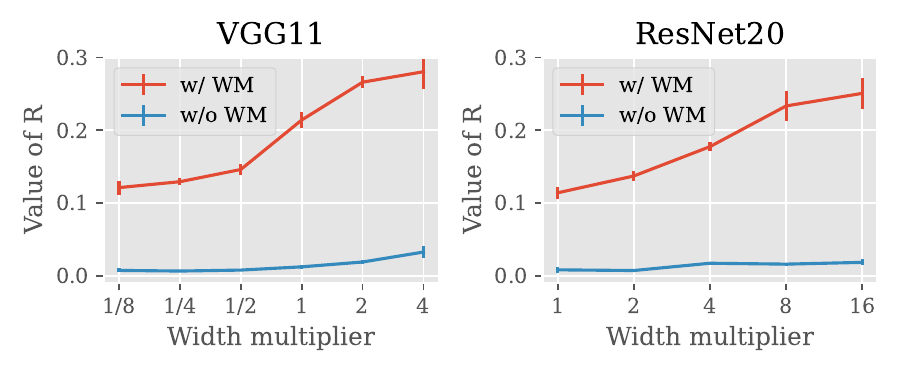}
        \subcaption{Evaluation results of $R(\parama, \pi(\paramb))$ for the singular vectors with $\gamma = 0.3$.}
        \label{fig:corr_vs_width_0.3}
    \end{minipage}
    \caption{Relation between the model width and the difficulty in aligning the directions of singular vectors.}
    \label{fig:corr_vs_width}
\end{figure}

We conducted an experiment to examine how well the directions of singular vectors are aligned as model width increases when applying permutations found by WM. The results are shown in \cref{fig:corr_vs_width}. The figures display the evaluation of $R(\parama, \pi(\paramb))$ for the trained models $\parama$ and $\paramb$ by searching for permutations $\pi$. For comparison, the case where no permutations are applied (i.e., $\pi$ is an identity map) is also shown. Additionally, a threshold $\gamma$ was introduced to assess the alignment of singular vectors with large singular values.

First, focusing on the results in \cref{fig:corr_vs_width_0} with $\gamma=0$, we observe that the value of $R$ decreases even when the width increases and WM is used. Conversely, \cref{fig:corr_vs_width_0.3} shows that the directions of singular vectors with particularly large singular values are aligned by permutation as model width increases. This suggests that even with WM, it is difficult to perfectly align the directions of singular vectors between the two models. However, increasing the width decreases the fraction of singular vectors with large singular values, thus making it easier for WM to align the directions of these dominant singular vectors.

As shown in \cref{fig:dist_singular_values}, when the model is sufficiently wide, the proportion of large singular values is very small compared to the total number of singular values. Furthermore, \cref{fig:dot_v_vgg,fig:dot_v_resnet} demonstrate that the right singular vectors associated with these relatively large singular values have a large inner product with the hidden layer input. This means that the number of singular vectors that WM needs to align to achieve LMC is reduced when the model is wide enough, as discussed in \cref{subsec:sing_vector_more_important}. Indeed, \cref{fig:corr_vs_width_0.3} suggests that WM preferentially aligns these significant singular vectors. Therefore, increasing the width is expected to make LMC more feasible.

%

\clearpage
\subsection{Dependency of Weight Decay and Learning Rate} \label{app:relation_hyperparameters_and_lmc}

\cite{Xingyu_arxiv_2024} have observed that strengthening weight decay and increasing the learning rate make it easier for LMC to be established through WM. In this section, we will explain this observation from the perspective of singular values of weights.

In \cref{sec:analysis_wm}, we stated that the reason why WM can establish LMC is that the permutation found by WM aligns singular vectors with large singular values between two models. In other words, the smaller the proportion of large singular values in the weights of each layer of the models, the more likely LMC is to be established by WM. In fact, some previous studies~\citep{Tomer_arxiv_2023,Timor_CoLT_2023} have shown that increasing the weight decay and learning rate during model training can reduce the ranks of the weights in the trained model. Therefore, it is highly likely that the results observed by \cite{Xingyu_arxiv_2024} were caused by the reduction in the ranks of the weights. In the following, we will experimentally confirm this prediction.

\begin{figure}[t]
    \centering
    \begin{minipage}[c]{0.9\hsize}
        \centering
        \includegraphics[width=\hsize]{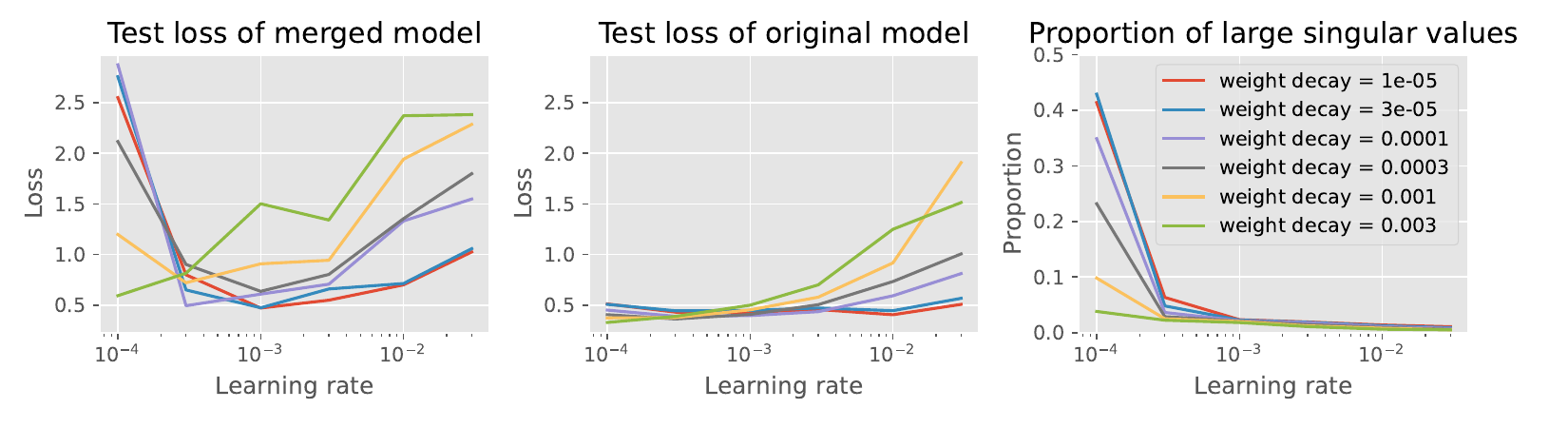}
        \subcaption{Evaluation results of ResNet20 ($\times 16$).}
        \label{fig:hyper_resnet}
    \end{minipage}
    \begin{minipage}[c]{0.9\hsize}
        \centering
        \includegraphics[width=\hsize]{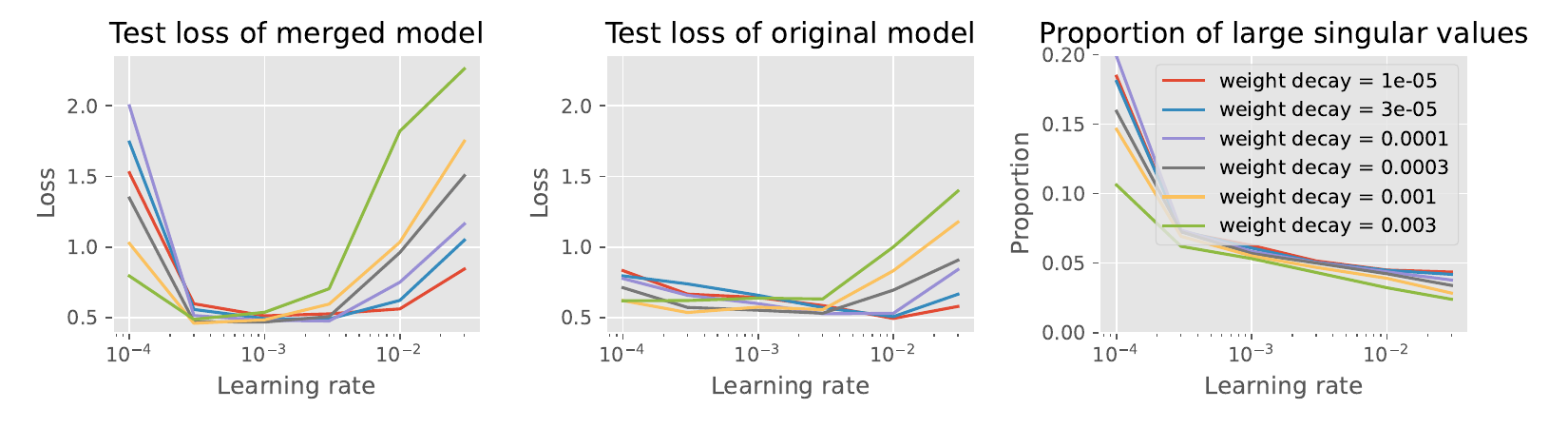}
        \subcaption{Evaluation results of VGG11 ($\times 2$).}
        \label{fig:hyper_vgg}
    \end{minipage}
    \caption{Experimental results of model merging with WM under different learning rates and weight decay strength.}
    \label{fig:hyper}
\end{figure}

\cref{fig:hyper} shows the experimental results of ResNet20 and VGG11 models. During model training, the learning rate was varied from ${0.0001, 0.0003, \dots, 0.03}$ and the weight decay strength was varied from ${0.00001, 0.00003, \dots, 0.003}$. For each condition, six models were trained, and model merging was performed three times by creating three pairs from them. The conditions for model training were the same as in \cref{app:experimental_setup}, except for the learning rate and weight decay. For VGG11, when the model width is quadrupled, the ratio of large singular values becomes very small regardless of the learning rate or weight decay, making it difficult to understand the relationship between the loss of the merged model and large singular values. Thus, the model width was doubled for VGG11 models. In addition, the permutation search method used in WM was based on the method of \cite{Ainsworth_ICLR_2023}.

\cref{fig:hyper} shows the test losses of the merged model and the original model, as well as the ratio of the number of large singular values to the width of the original model. \cref{fig:hyper} displays the averaged results over three runs of model merging. The ratio in the figure was calculated as follows. Let $s_{\ell, 1}, s_{\ell, 2}, \dots, s_{\ell, n_\ell}$ be the singular values of the $\ell$-th layer, where $n_\ell$ represents the number of singular values, and $s_{\ell, 1}$ is the largest singular value. Each singular value is divided by the largest singular value, and those whose ratio is 0.3 or more are counted (i.e., $s_{\ell, i}/s_{\ell, 1} \geq 0.3$ for $i \in [n_\ell]$). Next, for all layers, we calculate the sum of these numbers and divide it by the sum of $n_\ell$ for all layers. This procedure can be written as $\sum_{\ell, i} I[s_{\ell, i}/s_{\ell, 1} \geq 0.3]/\sum_{\ell} n_\ell$, where $I$ is an indicator function. In other words, this ratio becomes smaller when the number of singular values that are relatively large compared to the maximum singular value is small compared to the width of the model.

From \cref{fig:hyper}, we can see that the ratio of large singular values decreases as both the weight decay and the learning rate increase. This has already been noted in previous research~\citep{Tomer_arxiv_2023,Timor_CoLT_2023}. Additionally, \cref{fig:hyper} shows that the test loss of the merged model decreases when both the test loss of the original model and the ratio of large singular values are small. This can be explained by our analysis in \cref{sec:analysis_wm}. As we described, WM facilitates LMC by aligning the singular vectors between the two models and making the functions of the middle layers of the merged model and the original model more similar. In other words, this suggests that it is challenging for the merged model to outperform the original model. Furthermore, the smaller the ratio of large singular values, the easier it is to align the singular vectors using WM, so the functions of the merged model and the original model become more closely aligned. From this, we conclude that the higher the performance of the original model and the smaller the ratio of large singular values, the better the performance of the merged model. This is consistent with the results shown in \cref{fig:hyper}.
\subsection{Activation Matching} \label{app:description_AM}

\begin{table*}[t]
    \centering
    \caption{Model Merging results with \textbf{AM} and the estimated barrier value using Taylor approximation when $\lambda = 1/2$}
    \label{tab:taylor_am}
    \resizebox{\textwidth}{!}{
    \begin{tabular}{cccccccc}
        \toprule
        Dataset & Network & Test acc. & Barrier ($\lambda = 1/2$) & Taylor approx. & $\|\parama - \paramb\|$ & $\|\parama - \pi(\paramb)\|$ & Reduction rate [\%] \\
        \midrule
        \midrule
        \multirow[c]{2}{*}{CIFAR10} & VGG11 & $88.786\pm 0.186$ & $0.077\pm0.044$ & $2.491\pm0.266$ & $799.503\pm16.396$ & $742.300\pm18.526$ & $7.161\pm 0.572$ \\
        \cline{2-8}
         & ResNet20 & $89.190\pm 0.192$ & $0.189\pm0.031$ & $7.431\pm0.667$ & $710.762\pm16.261$ & $671.373\pm13.816$ & $5.538\pm 0.226$ \\
        \cline{1-8}
        FMNIST & MLP& $88.356\pm 0.221$ & $-0.236\pm0.024$ & $0.948\pm0.173$ & $121.853\pm5.830$ & $108.968\pm5.365$ & $10.578\pm 0.343$ \\
        \cline{1-8}
        MNIST & MLP& $98.274\pm 0.084$ & $-0.020\pm0.004$ & $0.064\pm0.035$ & $81.231\pm5.580$ & $68.722\pm5.197$ & $15.428 \pm 1.037$ \\
        \bottomrule
    \end{tabular}
    }
\end{table*}

\begin{figure}[t]
    \centering
    \includegraphics[width=0.8\hsize]{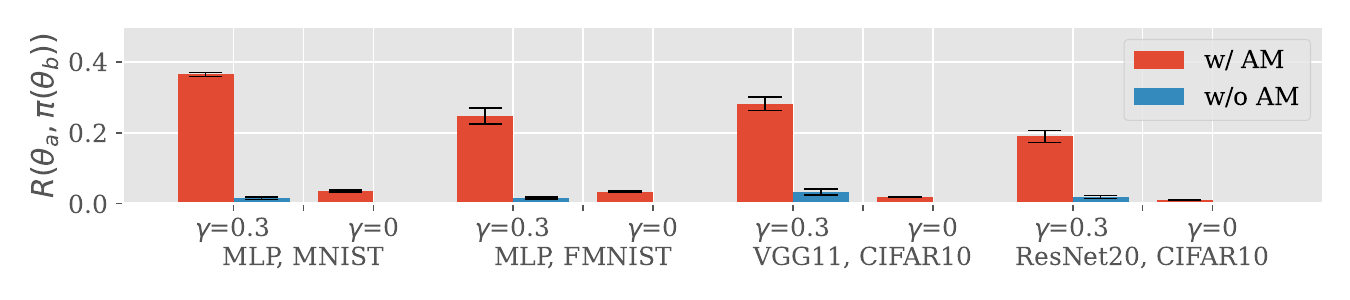}
    \caption{Evaluation results of $R(\parama, \paramb)$ \textbf{with and without AM}.}
    \label{fig:perm_R_am}
\end{figure}

\begin{figure*}[b]
    \begin{minipage}[c]{0.49\hsize}
        \centering
        \includegraphics[width=\hsize]{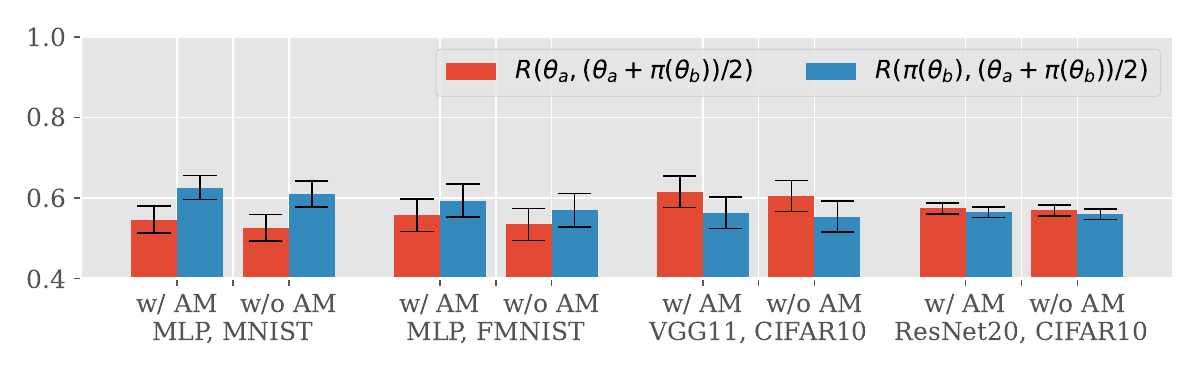}
        \subcaption{Evaluation results with the threshold $\gamma=0$.}
        \label{fig:model_merge_R_0.0_am}
    \end{minipage}
    \begin{minipage}[c]{0.49\hsize}
        \centering
        \includegraphics[width=\hsize]{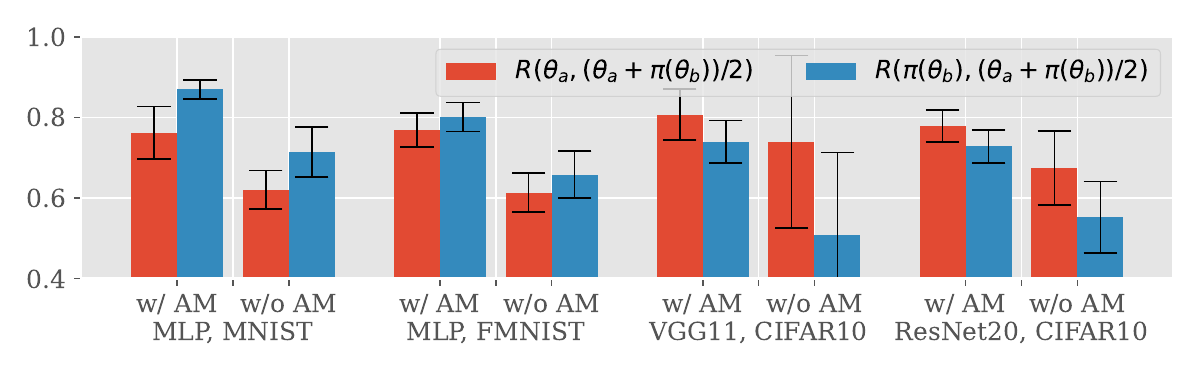}
        \subcaption{Evaluation results with the threshold $\gamma = 0.3$.}
        \label{fig:model_merge_R_0.3_am}
    \end{minipage}
    \caption{Evaluation results of $R$ value between the merged model and the pre-merged models (i.e., $R(\parama, (\parama + \pi(\paramb))/2)$ and $R(\parama, (\parama + \pi(\paramb))/2)$) when \textbf{AM is used}. The blue and red bars represent the evaluation results of $R(\parama, (\parama + \pi(\paramb))/2)$ and $R(\paramb, (\parama + \pi(\paramb))/2)$, respectively.}
    \label{fig:model_merge_R_am}
\end{figure*}
\cref{tab:taylor_am} shows the results of model merging using AM under the same conditions as in \cref{sec:taylor}. The table indicates that the loss barrier is sufficiently small when AM is applied. Interestingly, AM reduces the distance between the two models to the same extent as WM.

\cref{fig:perm_R_am} shows the value of $R$ between the two models $\parama$ and $\paramb$. The figure clearly shows that the value of $R$ is larger using AM when $\gamma=0.3$. This indicates that AM aligns the directions of singular vectors with large singular values for the two models, similar to the result of WM (\cref{fig:perm_R}). To further examine the relationship of singular vectors between the models before and after merging, \cref{fig:model_merge_R_am} shows the values of $R$ between these models. This result also demonstrates a similar trend to the WM results shown in \cref{fig:model_merge_R}, suggesting that the reasons for the establishment of the LMC are almost the same for both WM and AM.

\subsection{STE and WM}
\begin{table*}[t]
    \centering
    \caption{Evaluation results of barrier between each pair of models}
    \label{tab:barrier_three_models}
    \resizebox{\textwidth}{!}{
\begin{tabular}{ccccccccc} 
\toprule
                            &                          &          & \multicolumn{3}{c}{Loss barrier}                   & \multicolumn{3}{c}{Accuracy barrier}                                        \\ 
\hline
                            & Dataset                  & Network  &  $(\parama + \pi_b(\paramb))/2$  & $(\parama + \pi_c(\paramc))/2$ & $(\pi_b(\paramb) + \pi_c(\paramc))/2$ & $(\parama + \pi_b(\paramb))/2$  & $(\parama + \pi_c(\paramc))/2$ & $(\pi_b(\paramb) + \pi_c(\paramc))/2$ \\ 
\hline\hline
\multirow{4}{*}{WM} & \multirow{2}{*}{CIFAR10} & VGG11      & $0.094\pm0.158$  & $0.037\pm0.156$  & $0.141\pm0.141$  & $8.362\pm5.677$  & $7.555\pm4.978$  & $10.12\pm5.117$     \\ 
\cline{3-9}
                            &                          & ResNet20 & $0.135\pm0.026$  & $0.098\pm0.011$  & $0.294\pm0.098$  & $3.312\pm0.61$   & $2.995\pm0.064$  & $7.23\pm0.99$   \\ 
\cline{2-9}
                            & FMNIST                   & MLP      & $-0.211\pm0.029$ & $-0.174\pm0.044$ & $-0.174\pm0.051$ & $1.947\pm0.501$  & $1.703\pm0.289$  & $4.337\pm1.434$   \\ 
\cline{2-9}
                            & MNIST                    & MLP      & $-0.027\pm0.005$ & $-0.034\pm0.003$ & $-0.031\pm0.003$ & $0.173\pm0.04$   & $0.198\pm0.032$  & $0.475\pm0.069$   \\ 
\hline\hline
\multirow{4}{*}{STE}    & \multirow{2}{*}{CIFAR10} & VGG11      & $0.081\pm0.031$  & $0.099\pm0.042$  & $2.172\pm0.989$  & $4.86\pm0.815$   & $5.76\pm0.537$   & $32.013\pm8.193$ \\ 
\cline{3-9}
                            &                          & ResNet20 & $0.466\pm0.154$  & $0.446\pm0.138$  & $1.693\pm0.168$  & $15.005\pm3.765$ & $13.942\pm4.008$ & $34.483\pm2.426$   \\ 
\cline{2-9}
                            & FMNIST                   & MLP      & $-0.372\pm0.016$ & $-0.343\pm0.055$ & $0.023\pm0.118$  & $2.667\pm0.248$  & $2.483\pm0.621$  & $15.97\pm1.724$  \\ 
\cline{2-9}
                            & MNIST                    & MLP      & $-0.037\pm0.011$ & $-0.039\pm0.006$ & $0.017\pm0.014$  & $0.253\pm0.176$  & $0.358\pm0.198$  & $2.312\pm0.457$   \\
\bottomrule
\end{tabular}
    }
\end{table*}

\begin{figure}[t]
    \begin{minipage}[c]{\hsize}
        \centering
        \includegraphics[width=\hsize]{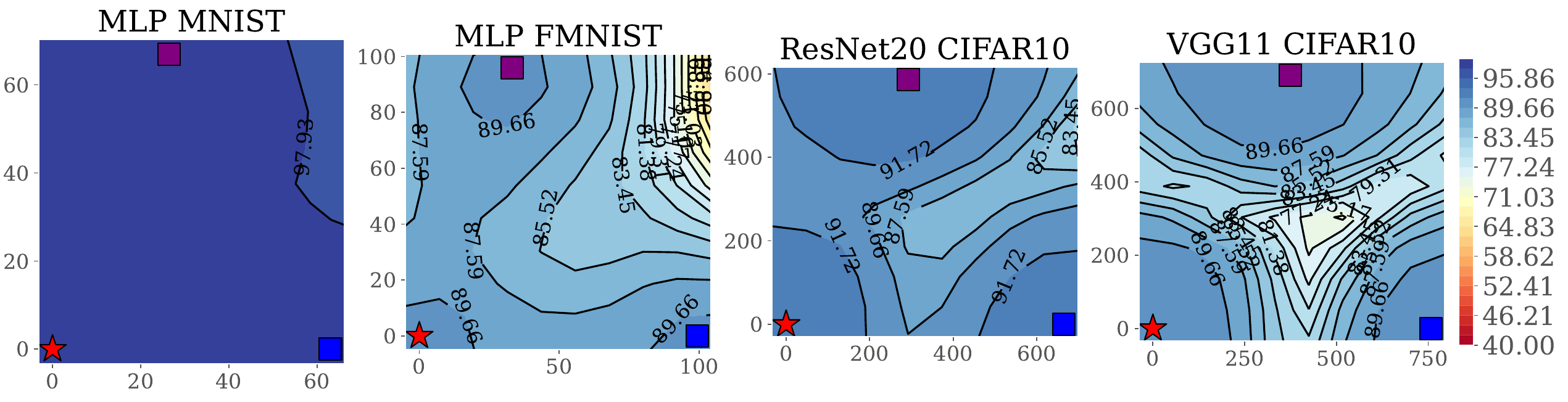}
        \subcaption{Results of WM.}
        \label{fig:wm_landscape}
    \end{minipage}
    \begin{minipage}[c]{\hsize}
        \centering
        \includegraphics[width=\hsize]{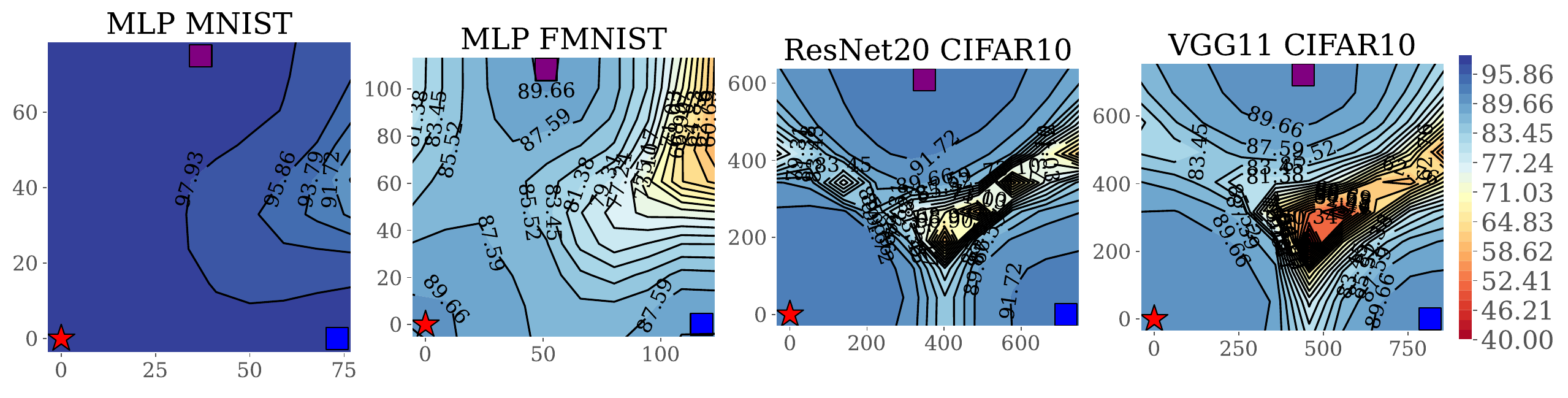}
        \subcaption{Results of STE.}
        \label{fig:ste_landscape}
    \end{minipage}
    \caption{Accuracy landscape around $\parama$, $\pi_b(\paramb)$ and $\pi_c(\paramc)$. The star in the lower left represents $\parama$, and the squares in the lower right and upper represent $\pi_b(\paramb)$, and $\pi_c(\paramc)$, respectively.}
    \label{fig:landscape}
\end{figure}

In this subsection, additional experimental results for \cref{subsec:three_models} are shown in \cref{tab:barrier_three_models} for the barrier values between each pair of models. The table shows the model-merging results with $\lambda=1/2$, and the mean and standard deviation of three model merges. In the table, a negative value for the barrier indicates an improvement in performance due to the merging. In addition, \cref{fig:landscape} shows the accuracy landscape around $\parama$, $\pi_b(\paramb)$, and $\pi_c(\paramc)$. From \cref{tab:barrier_three_models,fig:landscape}, we can see that the barrier between $\pi_b(\paramb)$ and $\pi_c(\paramc)$ is also smaller for WM than for STE.

\subsection{Dependency of \texorpdfstring{$R$}{R} on Threshold \texorpdfstring{$\gamma$}{γ}} \label{app:change_gamma}

\begin{figure}[t]
    \centering
    \begin{minipage}[c]{\hsize}
        \includegraphics[width=\hsize]{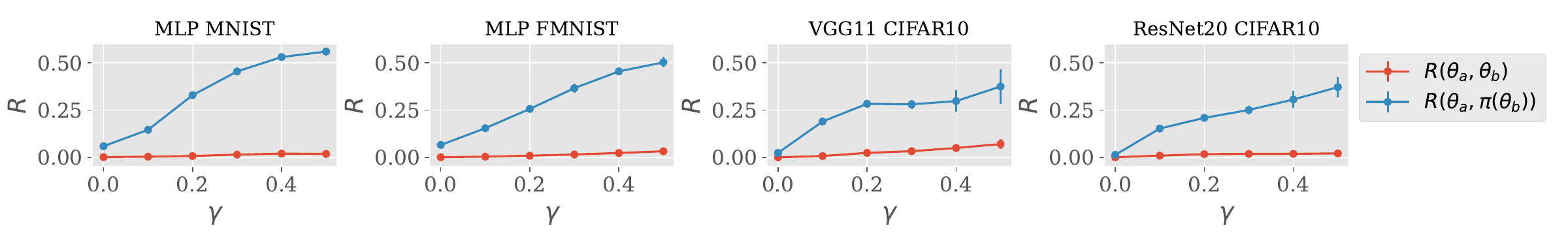}
        \subcaption{$R$ value between the models $\parama$ and $\paramb$.}
        \label{fig:R_permed_model_gamma_changed}
    \end{minipage}
    \begin{minipage}[c]{\hsize}
        \includegraphics[width=\hsize]{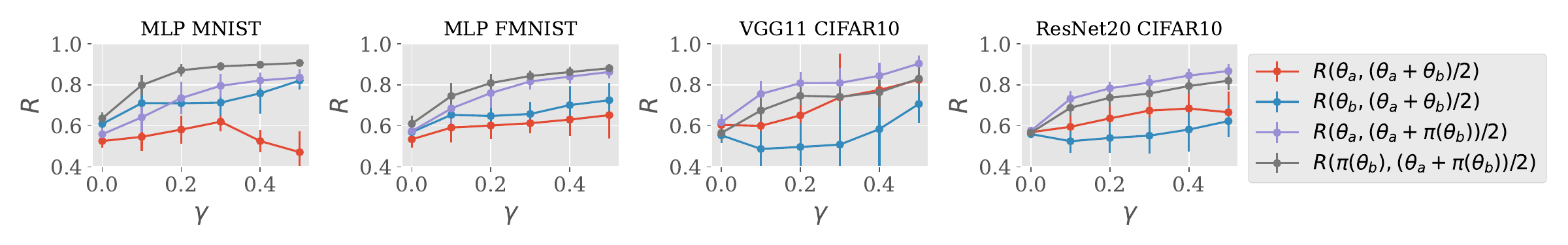}
        \subcaption{$R$ value between the merged model and the original model.}
        \label{fig:R_merged_model_gamma_changed}
    \end{minipage}
    \caption{$R$ values when the threshold $\gamma$ is changed.}
        \label{fig:R_gamma_changed}
\end{figure}

\cref{fig:R_gamma_changed} shows the value of $R$ when the threshold $\gamma$ is varied. \cref{fig:R_permed_model_gamma_changed} displays the $R$ value between model $\parama$ and the permuted model $\pi(\paramb)$, along with the $R$ value before permutation for comparison. \cref{fig:R_merged_model_gamma_changed} illustrates the $R$ values between the merged model $(\parama + \pi(\paramb))/2$ and the original models $\parama$ and $\paramb$, also including the $R$ value without permutation for comparison.

In \cref{fig:R_permed_model_gamma_changed}, the $R$ value between the original models is nearly zero regardless of the $\gamma$ value. However, when permutation is applied, the $R$ value increases as $\gamma$ increases. This indicates that WM preferentially aligns the directions of the larger singular vectors between models $\parama$ and $\paramb$. As shown in \cref{fig:R_merged_model_gamma_changed}, this effect helps align the singular vectors with larger singular values between the merged and original models. Aligning these singular vectors more closely makes LMC more feasible because the outputs between the two models are closer, as discussed in \cref{subsec:sing_vector_more_important}.
\subsection{LMC on ResNet50 trained on ImageNet} \label{app:imagenet}

\begin{table}[t]
\caption{Results of model merging of ResNet50 models trained on ImageNet dataset.}
\label{tab:resnet50}
    \begin{minipage}[c]{0.65\hsize}
        \subcaption{Test loss and top-1 accuracy of each model.}
        \label{tab:resnet50_loss_acc}
        \resizebox{\hsize}{!}{
        \begin{tabular}{ccccc}
        \hline
                  & $\paramc$ w/ WM    & $\paramc$ w/o WM & $\parama$         & $\paramb$         \\ \midrule \midrule
        Test loss & $5.207\pm 0.073$   & $6.897\pm 0.001$ & $1.491\pm 0.011$  & $1.493\pm 0.007$  \\ \hline
        Test acc. & $40.239 \pm 2.088$ & $0.179\pm 0.020$ & $75.741\pm 2.088$ & $75.856\pm 0.107$ \\ \hline
        \end{tabular}
        }
    \end{minipage}
    \begin{minipage}[c]{0.34\hsize}
        \subcaption{$L^2$ distance between $\parama$ and $\paramb$.}
        \label{tab:resnet50_l2}
        \resizebox{\hsize}{!}{
        \begin{tabular}{ll}
        \hline
        $L^2$ dist. w/ WM  & $L^2$ dist. w/o WM \\ \midrule \midrule
        $126.823\pm 0.533$ & $174.247\pm 0.577$ \\ \hline
        \end{tabular}
        }
    \end{minipage}
\end{table}

\begin{figure}[t]
    \centering
    \includegraphics[width=0.8\hsize]{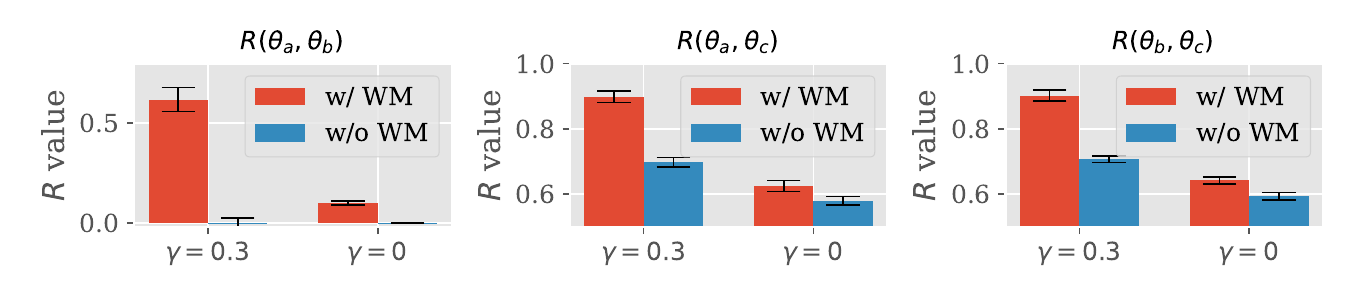}
    \caption{Evaluation results of $R$ values between each pair of ResNet50 models trained on ImageNet dataset.}
    \label{fig:R_imagenet}
\end{figure}
In the paper, most of the analysis was performed on relatively small datasets such as MNIST and CIFAR10. In this subsection, we train ResNet50 models on a larger dataset, ImageNet, and analyze the results of model merging based on WM. 

\paragraph{Experimental results of model merging.}

\begin{figure}[p]
    \centering
    \includegraphics[width=\hsize]{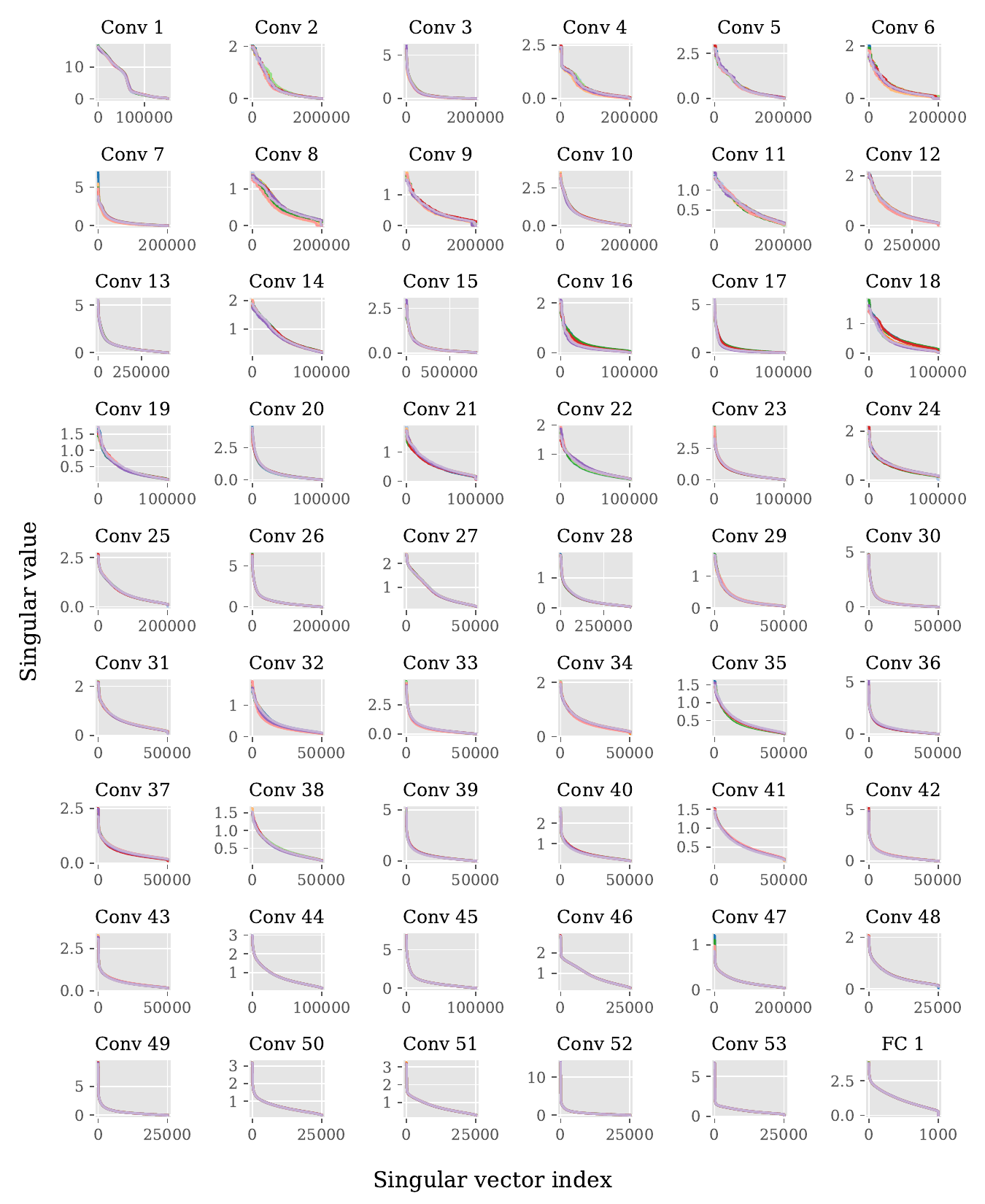}
    \caption{Distributions of the singular values of each layer of ResNet50 models trained on ImageNet dataset.}
    \label{fig:dist_sing_resnet50}
\end{figure}
\begin{figure}[p]
    \centering
    \includegraphics[width=\hsize]{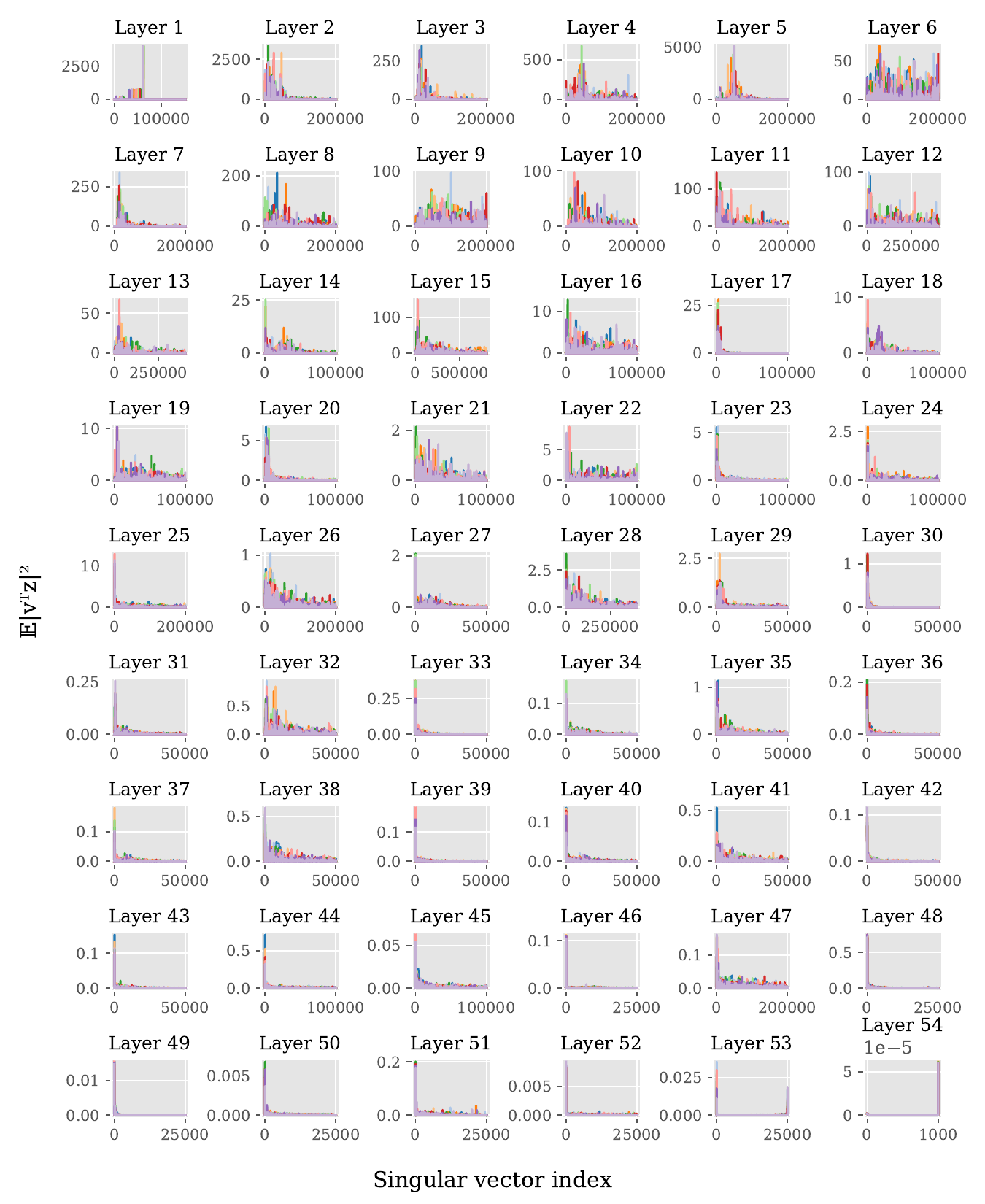}
    \caption{Average absolute values of inner products between the right singular vectors and the input of each layer in ResNet50 trained on ImageNet dataset ($\times 10^{6}$).}
    \label{fig:dist_dot_resnet50}
\end{figure}

\cref{tab:resnet50} presents the results of merging ResNet50 models trained on the ImageNet dataset. \cref{tab:resnet50_loss_acc} shows the test loss and top-1 accuracy of the models before and after merging (i.e., the pre-merged models $\parama$ and $\paramb$, and the merged model $\paramc$). \cref{tab:resnet50_l2} shows the $L^2$ distance between models $\parama$ and $\paramb$ before and after applying permutations. These tables report the mean and standard deviation for five independent model merges. According to \cref{tab:resnet50_loss_acc}, the test loss and accuracy of the merged model are clearly improved by using WM. However, they are still worse than those of the pre-merged models $\parama$ and $\paramb$, indicating that the LMC cannot be considered satisfied. \cref{tab:resnet50_l2} demonstrates that using WM decreases the $L^2$ distance between models $\parama$ and $\paramb$. This decrease in $L^2$ distance is larger than that observed for VGG11 and ResNet20 in \cref{tab:taylor}, suggesting that the singular vectors of models $\parama$ and $\paramb$ are better aligned by permutations. \cref{fig:R_imagenet} presents the results of evaluating $R$ for each model pair. When $\gamma=0.3$, \cref{fig:R_imagenet} shows that $R(\parama, \paramb)$ rises to about 0.5 with WM, and the value of $R$ increases to approximately 0.9 between the pre- and post-merged models (i.e., $R(\parama, \paramc)$ and $R(\paramb, \paramc)$). Thus, even with ResNet50, the singular vectors with large singular values are aligned between the pre- and post-merged models by using WM.

To investigate why LMC does not hold even though the singular vectors with large singular values are aligned by using WM, we examine the distributions of singular values at each layer and the inner products between the right singular vectors and the inputs. \cref{fig:dist_sing_resnet50} shows the distribution of singular values for each layer, and \cref{fig:dist_dot_resnet50} shows the distribution of the average absolute values of the inner products between the right singular vectors and the inputs for each layer. Each figure is plotted in a different color for the 10 trained models. \cref{fig:dist_sing_resnet50} demonstrates that the distributions of singular values are nearly identical across all models. Thus, the difference in singular values between models is not a reason for preventing LMC. In \cref{fig:dist_dot_resnet50}, focusing on the distribution of the inner product between the right singular vectors and the inputs, we observe that the right singular vectors with large singular values do not necessarily have a large inner product with the input. As discussed in \cref{subsec:sing_vector_more_important}, WM can only align singular vectors with large singular values, so this discrepancy can be a reason preventing the establishment of LMC. As shown in \cref{fig:dot_v_resnet,fig:dot_v_vgg}, the wider the model, the larger the inner product of the input and the right singular vectors with large singular values in the hidden layers. Therefore, it is considered necessary to increase the width of the model to establish LMC with ResNet50.

\end{document}